\documentclass[12pt]{article}
\usepackage{lmodern}
\usepackage{parskip}

\usepackage[bottom]{footmisc}
\usepackage{cancel}
\usepackage{url}
\usepackage{fancyhdr}
\usepackage{microtype}
\usepackage{subcaption}
\usepackage{dirtytalk}
\usepackage{booktabs}
\usepackage{float}
\usepackage{multirow}
\usepackage{multicol}
\usepackage{enumerate}
\usepackage{amsmath}
\usepackage{amssymb}
\usepackage{epigraph}
\usepackage{graphicx}
\usepackage{amsthm}
\usepackage{amsfonts}
\usepackage{mathtools}
\usepackage[nice]{nicefrac}
\usepackage{algorithm}
\usepackage{algorithmic}
\usepackage{xcolor}
\usepackage{framed}
\usepackage{bbm}
\usepackage{comment}
\usepackage{thmtools, thm-restate}
\usepackage[toc,page,header]{appendix}
\usepackage[margin=1.25in]{geometry}
\usepackage{times}
\usepackage[round, sort&compress]{natbib}
\definecolor{green}{HTML}{549D54}

\usepackage{hyperref}
\hypersetup{colorlinks,linkcolor=blue}

\usepackage[capitalize,noabbrev,nameinlink]{cleveref}

\title{On the Statistical Query Complexity of Learning Semiautomata: a Random Walk Approach}

\author{
  George Giapitzakis$^{1,}$\thanks{Corresponding author: \texttt{ggiapitz@uwaterloo.ca}} \quad
  Kimon Fountoulakis$^1$ \quad
  Eshaan Nichani$^2$ \\
  Jason D. Lee$^3$\\
  \vspace{-5mm}\\
  \normalsize{$^1$University of Waterloo}\,\,\,
  \normalsize{$^2$Princeton University}\,\,\,
  \normalsize{$^3$University of California, Berkeley}\\
}

\date{\today}

\theoremstyle{plain}
\newtheorem{theorem}{Theorem}[section]

\newtheorem{lemma}{Lemma}[section]
\newtheorem{fact}{Fact}[section]
\newtheorem{corollary}{Corollary}[section]
\theoremstyle{definition}
\newtheorem{definition}{Definition}[section]

\theoremstyle{plain}
\newtheorem{remark}{Remark}[section]

\newcommand{\RR}{\ensuremath{\mathbb{R}}}
\newcommand{\CC}{\ensuremath{\mathbb{C}}}

\newcommand{\indic}{\mathbbm{1}}
\newcommand{\E}{\mathbb{E}}
\newcommand{\Prob}{\mathbb{P}}
\newcommand{\Q}{\mathcal{Q}}
\newcommand{\F}{\mathcal{F}}
\newcommand{\X}{\mathcal{X}}
\newcommand{\Y}{\mathcal{Y}}
\newcommand{\w}{\mathrm{w}}
\newcommand{\C}{\mathcal{C}}

\newcommand{\vect}[1]{\boldsymbol{#1}}

\newcommand\norm[1]{\left\Vert#1\right\Vert_{2}}

\newcommand{\Sn}{S_N}
\newcommand{\U}{\mathcal{U}}
\DeclareMathOperator{\id}{id}
\DeclareMathOperator{\fix}{fix}
\DeclareMathOperator{\GL}{GL}
\DeclareMathOperator{\Tr}{Tr}
\DeclareMathOperator{\triv}{triv}
\DeclareMathOperator{\std}{std}
\DeclareMathOperator{\perm}{perm}
\DeclareMathOperator{\SQDim}{SQDim}

\newcommand{\NN}{\mathbb{N}}

\newcommand{\TDFA}{T_{\operatorname{SA}}}

\newtheorem{example}{Example}

\begin{document}

\maketitle

\begin{abstract}
Semiautomata form a rich class of sequence-processing algorithms with applications in natural language processing, robotics, computational biology, and data mining. We establish the first Statistical Query hardness result for semiautomata under the uniform distribution over input words and initial states.  We show that Statistical Query hardness can be established when both the alphabet size and input length are polynomial in the number of states. Unlike the case of deterministic finite automata, where hardness typically arises through the hardness of the language they recognize (e.g., parity), our result is derived solely from the internal state-transition structure of semiautomata. Our analysis reduces the task of distinguishing the final states of two semiautomata to studying the behavior of a random walk on the group $S_{N} \times S_{N}$. By applying tools from Fourier analysis and the representation theory of the symmetric group, we obtain tight spectral gap bounds, demonstrating that after a polynomial number of steps in the number of states, distinct semiautomata become nearly uncorrelated, yielding the desired hardness result.
\end{abstract}

\section{Introduction}

We study the computational complexity of learning semiautomata with $N$ states in the Statistical Query (SQ) model.
Automata are among the most basic models of computation and arise across formal methods \citep{clarke1999model, isberner2015foundations, holzmann2003spin}, natural language processing \citep{mohri1997finite, maletti2016survey,10.5555/1623516.1623536,10.1017/S135132499600126X}, robotics and control systems \citep{kress2009temporal, chatzilygeroudis2018survey}, computational biology \citep{sakakibara2005grammatical, tsafnat2011computational, de2010grammatical}, and data mining \citep{laxman2006survey}. On the learning-theoretic side, the SQ model captures a broad class of noise-tolerant algorithms \citep{kearns1994introduction}, and its modern relevance is highlighted by its connections to the power of gradient-based methods \citep{abbe2021power}. 

\textbf{Motivation: structural vs. language-based hardness.} 
We investigate hardness results for semiautomata in the SQ model. Prior work on SQ hardness has focused primarily on deterministic finite automata (DFAs), that is, semiautomata with a designated initial state and a set of accepting states. While this distinction may seem subtle at first, it is crucial from a learnability perspective. For DFAs, hardness arises from the complexity of the languages they recognize. Typically, SQ hardness is established by embedding the computation of a hard function, such as parity \citep{blum1994weakly}, into a (random) DFA. Moreover, hardness results often rely on an adversarial input distribution \citep{10.1007/978-3-642-16108-7_18}, instead of the more natural uniform distribution. In contrast, semiautomata are purely state-transition systems. As a result, SQ hardness for semiautomata must come exclusively from the structure of the transitions themselves, namely, from families of transition functions whose induced state-evolution statistics remain pairwise close under the uniform distribution over the input and initial state.

\textbf{Contributions.}
We construct a family of $N!$ semiautomata with $N$ states and an alphabet of size $\Omega(N^3\ln N)$ which are nearly uncorrelated after processing inputs of length $\Omega(N^2 \ln N)$, yielding a statistical dimension of $N!$. Consequently, any SQ learner for that class under the uniform distribution over the input and initial state must either (i) make a super-polynomial number of queries or (ii) use a super-polynomially small tolerance. Our analysis introduces a representation-theoretic toolkit for automata learning: we tie semiautomata processing of random words to the mixing of a random walk on the product of the symmetric group $S_N\times S_N$, and identify a specific irreducible representation that controls indistinguishability.

We propose a randomized construction such that any pair of semiautomata from a randomly generated family of size $N!$ is indistinguishable with high probability under the uniform distribution over input words and starting states. In our random construction, the letters of the alphabet are matched to transpositions and assigned to each semiautomaton by fair coin flips. We reduce the problem of distinguishing two semiautomata that read the same random input to tracking a single coupled random walk on $S_N \times S_N$. In our construction, the probability that two semiautomata agree after starting from the same random initial state and processing $T$ symbols has the form
\[
P_{\operatorname{agree}}(T)=\frac{1}{N}+\operatorname{error}(T,N)
\quad\text{with}\quad
|\operatorname{error}(T,N)|\le \left(1 - \frac{1}{2N}\right)^{T},
\]
with probability at least $1 - \exp(-N\ln N)$. Thus, the absolute error in the agreement beyond the baseline $1/N$ decays exponentially fast as a function of $T$. Taking $T\geq \mathcal{O}(N^2\ln N)$ drives the absolute error to $1/N!$, so any two semiautomata are statistically indistinguishable on polynomial-length inputs. Because we build a family of $N!$ nearly uncorrelated semiautomata, the statistical dimension is $N!$, yielding the desired SQ lower bound. Unlike previous hardness results for DFAs, which rely on reductions to hard function classes such as parity \citep{10.1007/978-3-642-16108-7_18}, our hardness result is inherently structural. We model the behavior of semiautomata processing a random word as a random walk on the group $S_N \times S_N$ and analyze its mixing properties by building off the representation-theoretic framework for random walks on $S_N$ developed by \cite{diaconis}.

Finally, as a result of independent interest, we prove that the mixing time for the random walk of our random family construction is tight. In particular, the best-case mixing time for our random construction is $T=\Omega(N^2\ln N)$. In other words, any smaller choice of $T$ cannot guarantee the indistinguishability bound we obtain. This follows naturally as a byproduct of our proof strategy for the main result.

\textbf{Organization.} The rest of the paper is organized as follows. In \Cref{sec:lit_review}, we provide a review of the relevant literature on the learnability of automata. \Cref{sec:prelim} establishes the notation and necessary mathematical preliminaries. The core of our technical approach is detailed in \Cref{sec:setting}, where we connect the problem of distinguishing semiautomata to the mixing properties of a random walk on the product group $S_N \times S_N$. In \Cref{sec:randomized_family}, we leverage this connection to present a randomized construction of a large family of nearly uncorrelated semiautomata. Finally, in \Cref{sec:sq_hardness}, we use this construction to establish a high Statistical Query dimension, formally proving our main hardness result.
\section{Related work}
\label{sec:lit_review}

Learnability of automata is a foundational problem in computational learning theory, with a rich history spanning several decades \citep{trakhtenbrot1973finite, 10.1145/174644.174647, 10.1007/978-3-642-16108-7_18, pmlr-v291-wang25a}. To the best of our knowledge, no learnability results are currently known for the class of semiautomata. In contrast, the literature on DFA learnability is extensive. In this section, we restrict our review to results on DFA learning within the SQ model. For related work outside the SQ model, we refer the reader to \Cref{app:add_lit}. 

\subsection{The Statistical Query model}
The SQ model, introduced by \citet{kearns1998efficient}, formalizes a large and natural class of algorithms that learn by requesting statistical properties of the data distribution rather than inspecting individual examples. A key motivation for the model is its inherent robustness to noise; any SQ-learnable class is also PAC-learnable \citep{kearns1998efficient}. The computational complexity of learning within this model was elegantly characterized by \citet{blum1994weakly} using Fourier analysis, who introduced the concept of the SQ dimension to provide a combinatorial measure for proving lower bounds. This model has been successfully used to prove the hardness of learning for many important concept classes, including noisy linear threshold functions \citep{blum1996polynomial}, halfspaces under Gaussian marginals \citep{diakonikolas2020statistical,pmlr-v178-hsu22a}, high-dimensional Gaussians \citep{diakonikolas2017statistical}, and single-neuron networks \citep{goel2020sq}, with some problems remaining open \citep{feldman2014open}. The relevance of the SQ model has been underscored by work connecting it to other constrained learning models, such as those with memory or communication limits \citep{steinhardt2016learning}, and, most notably, to the capabilities of modern deep learning algorithms. \citet{abbe2021power} showed that learning with stochastic gradient descent (SGD) is equivalent in power to SQ learning when using large mini-batches or low-precision gradients, suggesting our hardness result has direct implications for a wide range of practical algorithms. For a comprehensive overview of the SQ model see the survey by \citet{reyzin2020survey}. Our work contributes a new, SQ hardness result for a classic concept class (semiautomata), but does so by introducing novel representation-theoretic tools to establish a high SQ dimension.

\subsection{Related DFA hardness results}
One of the simplest SQ hardness results can be established from the work of \cite{blum1994weakly} on the hardness of the parity function. In particular, the existence of a DFA with $2N+1$ states that can compute the parity of a fixed subset of any binary string of length $N$ implies that the class of $2N+1$-state DFAs is hard to learn using Statistical Queries under the uniform distribution on inputs of length $N$.

On a related note, \citet{10.1007/978-3-642-16108-7_18} demonstrate that random DNF formulas, random log-depth decision trees, and random DFAs (where both transitions and accepting states are chosen uniformly at random) cannot be weakly learned with a polynomial number of statistical queries if the distribution over the input is chosen adversarially. The paper argues that even if a structure (such as a DNF formula or a DFA) is chosen randomly, an adversary can construct a ``bad'' input distribution that makes the learning problem hard. The core technique for establishing hardness is to show that, with high probability, a hard-to-learn parity function can be embedded into the random structure. The learnability of random DFAs using Statistical Queries under the uniform input distribution was explicitly posed as an open problem by \cite{10.1007/978-3-642-16108-7_18}. Similarly, \cite{pmlr-v65-fish17a} conjectured that learning random DFAs is hard in the PAC model. Although our work considers a slightly different random semiautomata model, with transitions generated by random transpositions, we hope it can serve as a first step toward addressing these open problems.

The last negative result, which is also quite relevant to our work, is that of \citet{pmlr-v291-wang25a}. The paper establishes a computational hardness result for learning the class of $k$-fold composition functions in the SQ model. The $k$-fold composition task requires a model to compute the result of an interleaved composition of $k$ permutations provided in the input context and $k$ hidden, parametric permutations. This model can be interpreted as a restricted subclass of semiautomata with time-inhomogeneous transitions. The proof in \cite{pmlr-v291-wang25a} relies heavily on the specific structure of the hypothesis class and mostly employs recursive algebraic calculations and combinatorial facts. Our result complements that of \cite{pmlr-v291-wang25a} and is obtained by framing the problem as a random walk on the product of symmetric groups and by leveraging tools from Fourier analysis and representation theory

\section{Notation and preliminaries}
\label{sec:prelim}

Throughout the text, we use boldface to denote vectors. We reserve the notation $\vect{e_i}$ to denote the $i$-th standard basis vector. For $n \in \NN := \{1,2,\dots\}$ we use the notation $[n]$ to refer to the set $\{1,2,\dots,n\}$. For a vector $\vect{x}\in \CC^n$ we denote by $\|\vect{x}\| := \sqrt{\sum_{i=1}^n |\vect{x}_i|^2}$ its Euclidean norm. For matrices $A$ and $B$, $A \otimes B$ denotes their Kronecker product and $A \oplus B$ their direct sum. For a matrix $A$, we denote by $\|A\|_2 := \sup_{\|\vect{x}\|=1} \|A\vect{x}\|$ its spectral norm. For a finite set $S$, we denote by $\U(S)$ the uniform distribution over its elements.

\subsection{Abstract algebra}
We direct unfamiliar readers to \Cref{appendix:background}, which provides a self-contained overview of the necessary results from the representation theory of finite groups, and constitutes the foundation for our analysis. For a group $G$, we denote by $\id$ its identity element. When $G$ and $H$ are groups, $G \times H$ denotes their direct product. We denote the symmetric group on $N$ elements by $S_N$. For a permutation $g \in S_N$, we denote by $\fix(g)$ the set of fixed points of $g$. The trivial and standard representations of $S_N$ are denoted by $\triv$ and $\std$, respectively. For a representation $\rho$ and a function $f:G \to \CC$, the Fourier transform of $f$ with respect to $\rho$ is given by $\rho(f) := \sum_{g\in G}f(g)\rho(g)$. For representations $\rho$ and $\sigma$, $\rho \otimes \sigma$ denotes their tensor product.

\subsection{Semiautomata}
A semiautomaton $\mathcal{A}$ is a triplet $(\Q,\Sigma,\delta)$, where
\begin{itemize}
\item $\Q$ is a non-empty set of states.
\item $\Sigma$ is the input alphabet.
\item $\delta:\Q\times\Sigma\to \Q$ is the transition function.
\end{itemize}
We denote by $\mathcal{A}(\Sigma, \Q)$ the set of semiautomata with alphabet $\Sigma$ and state space $\Q$. Throughout this work, we will only consider semiautomata where both $\Sigma$ and $\Q$ are finite, and we denote by $N = |\Q|$ the number of states. The set of all finite words over the alphabet $\Sigma$ is denoted by $\Sigma^*$. For each symbol $a \in \Sigma$, we define the state-transition function $\delta_a: \Q \to \Q$ by $\delta_a(X) = \delta(X, a)$. Given a semiautomaton $\mathcal{A}$, we define the function $f_{\delta_{\mathcal{A}}}: \Sigma^* \times \Q \to \Q$, which assigns to each word and initial state the corresponding final state. Specifically, for a word $\w_T := a_1 \dots a_T$ and an initial state $X$, the final state is given by
$$
f_{\delta_{\mathcal{A}}}(\w_T, X) := (\delta_{a_T} \circ \delta_{a_{T-1}} \circ \dots \circ \delta_{a_1})(X).
$$

\subsection{The Statistical Query model}
\label{subsec:prelim_sq}

The Statistical Query (SQ) model, introduced by \citet{kearns1998efficient}, is a restricted learning model where the learner accesses information about a dataset through statistical queries rather than individual examples. The learner interacts with a statistical query oracle whose definition is given below.

\begin{definition}[Statistical Query oracle]
Let $\C$ be a concept class of functions mapping $\X$ to $\Y$, and let $f^*: \X \to \Y$ be a ground-truth concept in $\C$. Let $D$ be a distribution over the input space $\X$. An SQ oracle, denoted by $\operatorname{STAT}(f^*, D)$, accepts a query $(h, \tau)$, where $h: \X \times \Y \to [-1, 1]$ is a bounded statistical query function (statistic), and $\tau > 0$ is the tolerance. The oracle returns a value $v$ such that:
\[
\left| v - \E_{x \sim D}[h(x, f^*(x))] \right| \le \tau.
\]
\end{definition}

The goal is to learn $f^*$ up to arbitrarily small error after making a number of queries. 

\begin{definition}[Statistical query hardness]
Statistical query hardness for a concept class $\C$ is established by showing that there exists a ground-truth concept $f^*\in \C$ such that, in the worst case, any learner must either make a super-polynomial number of queries or use a tolerance $\tau$ such that $1/\tau$ is super-polynomial in the problem parameters to learn $f^*$.
\end{definition}

\section{The learning problem and random walk connections}
\label{sec:setting}

The concept class $\C$ consists of all functions $f_{\delta_\mathcal{A}}$ that map an initial state and an input word to the corresponding final state according to some semiautomaton $\mathcal{A} \in A(\Sigma, \Q)$, namely $\C = \left\{ f_{\delta_\mathcal{A}}: \mathcal{A} \in \mathcal{A}(\Sigma, \Q)\right\}$. With the notation of \Cref{subsec:prelim_sq}, $\mathcal{X} = \Sigma^* \times \Q$ and $\Y = \Q$. 
We investigate the hardness of learning $\C$ using statistical queries when the underlying distribution is uniform over input words of length $T$ and initial states. We will show that when $|\Sigma|$ and $T$ scale polynomially with the number of states $N$, learning $\C$ is hard.

\subsection{Modeling single semiautomata as random walks}
To prove our hardness result, we will need to construct a ``hard" set: a large family of concepts from $\C$ that are nearly uncorrelated (details are presented in \Cref{sec:sq_hardness}). We restrict our search for the hard set by considering semiautomata where the state-transition functions $\delta_a$ are permutations for each $a\in \Sigma$.\footnote{In fact, our constructions will only require each $\delta_a$ to be either the identity or a transposition.}
Under this restriction, we can model a single semiautomaton processing a uniformly random word as a random walk on the symmetric group $S_N$. Consider a semiautomaton $\mathcal{A}$ where all its state-transition functions $\delta_a$ are permutations from a set $\Gamma \subseteq \Sn$.
\begin{enumerate}
    \item \textit{The Walk State:} The state of the random walk after $t$ steps is the total permutation $P_{\w_t}$ computed so far. This is an element of the symmetric group $\Sn$.
    \item \textit{The Starting State:} Before processing any symbols, the total permutation is the identity permutation, $\id \in \Sn$, which maps every state in $\Q$ to itself. 
    \item \textit{The Transition Rule:} Let $P_{\w_t} \in \Sn$ be the permutation computed after processing the first $t$ symbols of a (random) word $\w_T = a_1 a_2 \dots a_T$, and so, $P_{\w_t} = \delta_{a_t} \circ \dots \circ \delta_{a_1}$. When the $(t+1)$-th symbol, $a_{t+1}$, is processed, the new walk state (total permutation) becomes $P_{\w_{t+1}} = \delta_{a_{t+1}} \circ P_{\w_t}$.
\end{enumerate}
Under the uniform distribution assumption on the input word, the final state $f_{\delta_{\mathcal{A}}}(\w_T, X)$ coincides with $P_{\w_T}(X)$, for any initial state $X\in \Q$.

\subsection{Modeling pairs of semiautomata as random walks}
\label{sec:two_dfa_walk}
Our hardness result relies on evaluating a correlation measure between pairs of distinct semiautomata from the hard set. This requires bounding the probability that two such semiautomata end in the same final state when started from the same uniformly chosen input word and initial state. This can also be modeled as a (coupled) random walk on the product group $G = S_N \times S_N$. Let $\mathcal{A}$ and $\mathcal{A}'$ be two semiautomata (operating on the same alphabet $\Sigma$ and state-space $\Q$) with permutation transition functions from a set $\Gamma \subseteq S_N$.
\begin{enumerate}
    \item \textit{The Walk Space:} To track both processes simultaneously, the walk state is a pair of permutations $(P_{\w_{t}}, P'_{\w_{t}})$, representing the total permutation computed by each semiautomaton after $t$ steps. This state is an element of the product group $G = \Sn \times \Sn$.
    \item \textit{The Starting State:} Both walks start at the identity, so the starting state is $(\id, \id) \in G$.
    \item \textit{The Transition Rule:} Let $(P_t, P'_t)$ be the permutation computed after processing the first $t$ symbols of a (random) word $\w_T = a_1 a_2 \dots a_T$. When the $(t+1)$-th symbol, $a_{t+1}$, is processed, the state of the joint walk transitions as follows:
    \[
    (P_{\w_{t+1}}, P'_{{\w_{t+1}}}) = (\delta_{a_{t+1}} \circ P_{\w_{t}}, \delta'_{a_{t+1}} \circ P'_{\w_{t}}).
    \]
\end{enumerate}
As before, the pair $(f_{\delta_\mathcal{A}}( \w_T, X),f_{\delta_{\mathcal{A}'}}(\w_T, X))$ coincides with  $(P_{\w_T}(X), P'_{\w_T}(X))$ under the uniform distribution assumption on the input word, for any initial state $X \in \Q$.

The single-step probability distribution of this random walk is thus defined as follows:

\begin{definition}[Single-step probability]
\label{def:single_step}
The probability distribution for a single step of the joint random walk defined in \Cref{sec:two_dfa_walk} is the function $\TDFA:G \to [0,1]$ given by:
\[
    \TDFA(g) = \frac{|\{a \in \Sigma \mid (\delta_a, \delta'_a) = g \}|}{|\Sigma|} \quad \forall\, g \in G.
\]
The support of $\TDFA$ is contained in $\Gamma \times \Gamma$.
\end{definition}

As noted above, our hardness result requires a careful analysis of the probability of agreement for two semiautomata after $T$ steps under a uniformly random input word and initial state. Namely, we are interested in the probability that for an initial state $X_0$ picked uniformly at random from $\mathcal{Q}$, after $T$ steps of the random walk, the two states $P_{\w_T}(X_0)$ and $P'_{\w_T}(X_0)$ coincide. To calculate this, we borrow tools from Fourier analysis and representation theory of the symmetric group. As it turns out, this probability is controlled solely by the Fourier transform of $\TDFA$ with respect to the irreducible representation $\Pi_0 = \std \otimes \std$ of $S_N \times S_N$. The result, along with a proof outline, is presented in \Cref{thm:p_agree}. The full proof relies on a technical lemma and is presented in \Cref{app:proof_pagree}.

\begin{restatable}[Agreement probability]{theorem}{pagree}
\label{thm:p_agree}
Consider the random walk corresponding to two semiautomata $\mathcal{A}$ and $\mathcal{A}'$ operating on the same alphabet $\Sigma$ and state-space $\Q$, as described in \Cref{sec:two_dfa_walk}. Let $N = |\mathcal{Q}|$ and let $T \in \NN$ be the input length. Let $X_0 \sim \U(\mathcal{Q})$ be a (common) initial state picked uniformly at random. Denote by $P_{\operatorname{agree}} :=P_{\operatorname{agree}}(T)$ the probability that after processing the same uniformly random word $\w_T \in \Sigma^*$, both semiautomata reach the same state. In terms of the random walk, this probability is given by 
$$P_{\operatorname{agree}}(T) = \mathbb{P}_{\w_T\sim \U(\Sigma)^{\otimes T}, X_0 \sim \U(\mathcal{Q)}}\left(P_{\w_T}(X_0) = P'_{\w_T}(X_0)\right).$$
Then
$$P_{\operatorname{agree}} = \frac{1}{N} + \frac{1}{N} \vect{v}^\top M_{\Pi_0}^T \vect{v},$$
where $M_{\Pi_0}$ is the Fourier transform of the single-step distribution $\TDFA$ with respect to the irreducible representation $\Pi_0 = \std \otimes \std$ of $G$ and $\vect{v} = \sum_{i=1}^{N-1} \vect{e_i} \otimes \vect{e_i}$.
\end{restatable}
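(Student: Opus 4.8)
The plan is a direct Fourier-analytic computation on the product group $G = S_N\times S_N$. I would first condition on the realized pair of accumulated permutations $(P_{\w_T},P'_{\w_T}) = (g,g')$ and only afterwards average over the initial state. For fixed $(g,g')$, the event $\{g(X_0) = g'(X_0)\}$ is exactly $\{X_0 \in \fix(g'^{-1}g)\}$, so averaging over $X_0\sim\U(\mathcal{Q})$ contributes $|\fix(g'^{-1}g)|/N$. Since the input symbols $a_1,\dots,a_T$ are i.i.d.\ uniform on $\Sigma$, the increments $(\delta_{a_t},\delta'_{a_t})$ are i.i.d.\ with law $\TDFA$ (\Cref{def:single_step}); hence the joint walk of \Cref{sec:two_dfa_walk} is a genuine group random walk and the law of $(P_{\w_T},P'_{\w_T})$ is the $T$-fold convolution $\TDFA^{*T}$. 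Combining these two observations,
\[
P_{\operatorname{agree}}(T) \;=\; \frac{1}{N}\sum_{(g,g')\in G}\TDFA^{*T}(g,g')\,\bigl|\fix(g'^{-1}g)\bigr|.
\]

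The next step — the one place representation theory enters, and the content of the technical lemma mentioned above — is to rewrite $|\fix(\cdot)|$ as a matrix coefficient of $\Pi_0$. The permutation representation of $S_N$ decomposes as $\triv\oplus\std$, so its character (the number of fixed points) gives $|\fix(\sigma)| = 1 + \chi_{\std}(\sigma)$ for all $\sigma\in S_N$. Realizing $\std$ by real orthogonal matrices (possible since $\std$ is defined over $\QQ$), so that $\std(g')^{-1} = \std(g')^{\top}$, one computes
\[
\chi_{\std}(g'^{-1}g) \;=\; \Tr\!\bigl(\std(g')^{\top}\std(g)\bigr) \;=\; \sum_{i,j}\std(g)_{ij}\,\std(g')_{ij} \;=\; \vect{v}^{\top}\bigl(\std(g)\otimes\std(g')\bigr)\vect{v} \;=\; \vect{v}^{\top}\Pi_0(g,g')\,\vect{v},
\]
the third equality being the defining bilinear-form identity for $\vect{v} = \sum_{i=1}^{N-1}\vect{e_i}\otimes\vect{e_i}$. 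Getting this chain exactly right — keeping the orthogonality/real structure of $\std$ explicit and matching the tensor ordering in $\Pi_0 = \std\otimes\std$ with the pair $(g,g')$ — is the only delicate point; everything else is bookkeeping.

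Substituting back and using that $\TDFA^{*T}$ is a probability distribution and that $\Pi_0(f) := \sum_g f(g)\Pi_0(g)$ is linear in $f$,
\[
P_{\operatorname{agree}}(T) \;=\; \frac{1}{N}\Bigl(\,\sum_{(g,g')}\TDFA^{*T}(g,g') \;+\; \vect{v}^{\top}\Pi_0\!\bigl(\TDFA^{*T}\bigr)\vect{v}\,\Bigr) \;=\; \frac{1}{N} + \frac{1}{N}\,\vect{v}^{\top}\Pi_0\!\bigl(\TDFA^{*T}\bigr)\vect{v}.
\]
To conclude, I would invoke the convolution theorem $\rho(f_1 * f_2) = \rho(f_1)\,\rho(f_2)$ for the Fourier transform on a finite group, applied $T-1$ times, giving $\Pi_0(\TDFA^{*T}) = \Pi_0(\TDFA)^{T} = M_{\Pi_0}^{T}$, which is exactly the claimed identity $P_{\operatorname{agree}}(T) = \tfrac1N + \tfrac1N\,\vect{v}^{\top}M_{\Pi_0}^{T}\vect{v}$. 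The main obstacle is not a hard estimate but the careful verification of the fixed-point$\,\to\,$character$\,\to\,\Pi_0$ identity above; the remaining ingredients (decomposition of the permutation module, convolution theorem) are standard and are recorded in \Cref{appendix:background}.
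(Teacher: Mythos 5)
Your proposal is correct, and it reaches the formula by a genuinely more direct route than the paper. The paper first writes $P_{\operatorname{agree}}=\frac{1}{N}\sum_{(g,h)}P_T(g,h)\,|\fix(h^{-1}g)|$, then splits $P_T$ into the uniform distribution plus an error term, evaluates the uniform part by counting ($\sum_{g}|\fix(g)|=N!$), and handles the error term with the Plancherel formula together with a dedicated lemma (\Cref{lemm:fourier_fix}) showing, via the Great Orthogonality Theorem, that the Fourier transform of $(g,h)\mapsto|\fix(h^{-1}g)|$ vanishes on every non-trivial irreducible representation except $\Pi_0=\std\otimes\std$, where it equals a scaled projection onto $\spn\{\vect{v}\}$. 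You instead use the pointwise identity $|\fix(h^{-1}g)|=1+\chi_{\std}(h^{-1}g)=1+\vect{v}^\top\bigl(\std(g)\otimes\std(h)\bigr)\vect{v}$ (valid once $\std$ is realized by real orthogonal matrices, as \Cref{fact:real_rep} permits, and correctly verified via $\vect{v}^\top(A\otimes B)\vect{v}=\Tr(AB^\top)$), after which the baseline $1/N$ falls out from $\TDFA^{*T}$ being a probability distribution and the remaining term is $\frac{1}{N}\vect{v}^\top\Pi_0(\TDFA^{*T})\vect{v}=\frac{1}{N}\vect{v}^\top M_{\Pi_0}^T\vect{v}$ by linearity and the convolution theorem (\Cref{lemm:conv}). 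Your argument dispenses with Plancherel, the decomposition of $P_T$, and the orthogonality computation over all irreps, trading them for a single matrix-coefficient identity; what the paper's route buys in exchange is the structural statement of \Cref{lemm:fourier_fix} itself — that among all $p(N)^2$ irreps only $\Pi_0$ carries the fixed-point statistic — which the paper emphasizes as the conceptual core of its analysis, whereas in your version that fact is implicit in the character identity rather than stated as a vanishing theorem. Both proofs are sound, and the shared ingredient is the decomposition $\perm\simeq\triv\oplus\std$.
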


\begin{proof}[Proof outline]
    The proof begins by expressing the agreement probability, $P_{\operatorname{agree}}$, as an expected value over the final states of the random walk on the product group $S_N \times S_N$. This is written as 
    \begin{equation}
        \label{eq:p_agree_outline}
        P_{\operatorname{agree}} = \sum_{(g,h) \in G} P_T(g, h) \cdot \frac{|\fix(h^{-1}g)|}{N},
    \end{equation}
    where $P_T$ denotes the distribution of the walk after $T$ steps. The distribution $P_T$ is then decomposed as a sum of the uniform distribution on $S_N \times S_N$ and an error term: $P_T(g,h) = P_U(g,h) + \operatorname{err}(g, h)$. Substituting this into \Cref{eq:p_agree_outline}, we find that the uniform part of the sum evaluates cleanly to the baseline agreement probability of $1/N$. The remaining error term is then analyzed using the Plancherel formula from Fourier analysis (\Cref{lemm:planch}), and is given by the following sum over non-trivial irreducible representations:  
    \[
    \frac{1}{N}\cdot \frac{1}{|G|} \sum_{\Pi \neq \triv} d_\Pi \Tr\left( M_\Pi^T \Pi(f)^* \right),
    \]
    where $M_{\Pi} = \Pi(P_T)$ and $f$ is given by $f(g,h) = |\fix(h^{-1}g)|$. The key insight is that the Fourier transform of $f$ is non-zero only for a single irreducible representation, $\Pi_0 = \std \otimes \std$ (\Cref{lemm:fourier_fix}). This collapses the sum above into a single term, yielding the final expression.
\end{proof}

\Cref{thm:p_agree} is central to our analysis: it reduces the task of bounding $P_{\operatorname{agree}}$ for our constructions from having to account for all (exponentially many) irreducible representations of $S_N \times S_N$,\footnote{The group $S_N \times S_N$ has $p(N)^2$ irreducible representations, where $p(N)$ is the partition function (c.f. \Cref{rem:num_irreps}). The partition function grows exponentially \citep{hardy1918partition} and hence there are exponentially many irreducible representations.} to considering just a single one. In the next section, we introduce a randomized construction used to generate a family of semiautomata and determine the required alphabet size $|\Sigma|$ and word length $T$ that ensure $|P_{\operatorname{agree}} - 1/N| \leq 1/N!$, a bound necessary for establishing SQ-learning hardness.
\section{Randomized construction of the hard set}
\label{sec:randomized_family}

In this section, we present a randomized construction of a family of semiautomata that will form the basis of our hard set, which will be used to derive the statistical query lower bound. In what follows, we assume that all semiautomata operate on the same, fixed set of $N$ states. The construction depends on a parameter $k$ that controls the size of the alphabet. We construct $M$ semiautomata operating on a common alphabet consisting of $k$ labeled copies of every transposition in $S_N$. For each semiautomaton $i$ and symbol $\tau$, we flip an independent fair coin to decide whether the state-transition function corresponding to $\tau$ acts as the identity or $\tau$ itself. Thus, each semiautomaton is defined by a random mask of swap and identity symbols. The formal definition is given below:

\begin{definition}[The randomized $(k, M)$-shuffle family construction] \label{def:dfa_family} We construct a (random) family of semiautomata operating on the same alphabet with the following properties:
\begin{enumerate}
    \item Let $\Sigma_1 = \{\tau_1,\dots,\tau_{\binom{N}{2}}\}$ be an enumeration of all transpositions in $S_N$. The alphabet $\Sigma_k$ is the disjoint union of $k$ copies of $\Sigma_1$. The size is $|\Sigma_k| = k \binom{N}{2}$.
    \item A family $\mathcal{F} = \{\delta_1, \dots, \delta_M\}$ is generated by assigning to each symbol (transposition) $\tau \in \Sigma_k$ and each semiautomaton $\delta_i$ an independent random variable $b_i(\tau) \in \{0, 1\}$ with $\Prob(b_i(\tau)=1)=1/2$. Each transition function $\delta_i : \Q \times \Sigma_k \to \Q$ is given by
    \[
    \delta_i(X, \tau) = \begin{cases} \tau(X) & \text{if } b_i(\tau) = 1 \\ X & \text{if } b_i(\tau) = 0 \end{cases} \quad \text{for all } X \in \Q \text{ and } \tau \in \Sigma_k.
    \]
    For simplicity, we overload our notation to identify semiautomata by their transition functions.
\end{enumerate}
We refer to the resulting family $\F$ as a randomized $(k,M)$-shuffle family.
\end{definition}

Note that while \Cref{def:dfa_family} forces the alphabet to consist of transpositions, this is done only to discharge notation and enhance the readability of our proofs. Indeed, it is easy to see that the construction can work with any alphabet, so long as its size is $k\binom{N}{2}$, simply by appropriately remapping the alphabet characters to transpositions. 

The main goal of this section is to find appropriate values for the alphabet parameter $k$ and the input word length $T$ such that, with high probability, any two semiautomata from a randomized $(k, N!)$-shuffle family $\F$ satisfy $P_{\operatorname{agree}} \leq 1/N + 1/N!$. Given the form of $P_{\operatorname{agree}}$ derived in \Cref{thm:p_agree}, the first step is to show that, with high probability, the spectral norm of $M_{\Pi_0}$ is bounded away from $1$ for any choice of distinct semiautomata from $\F$. This is done in \Cref{thm:spectral_bound_random}, which we state and provide a proof outline here. The full proof is deferred to \Cref{app:proofs_spectral}. 

\begin{restatable}{lemma}{spectralbound}
\label{thm:spectral_bound_random}
Let $N \geq 4$ and consider a randomized $(k, M)$-shuffle family $\F$ with $M = N!$ and 
$$k \geq \frac{16(3N + 1)}{3(N - 1)} \left[ N \ln N + \ln \binom{N!}{2} + 2\ln(N - 1) \right].$$ Then any two distinct semiautomata $\delta_i, \delta_j \in \mathcal{F}$ satisfy $\norm{M_{\Pi_0}^{i,j}} \le  1 - \frac{1}{2N}$ with probability at least $1 - \exp(-N\ln N)$ where $\Pi_0 = \std \otimes \std$ and $M_{\Pi_0}^{i,j}$ denotes the Fourier transform of the single-step probability distribution for the joint walk according to $\delta_i$ and $\delta_j$.
\end{restatable}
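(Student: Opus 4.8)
The plan is to center $M_{\Pi_0}^{i,j}$ at its expectation over the coin flips, diagonalize the mean exactly using a hidden diagonal symmetry, control the fluctuation with matrix concentration, and union-bound over pairs. Write $|\Sigma_k|=k\binom N2$ and let $\tau_a$ be the transposition underlying symbol $a$. Then $M_{\Pi_0}^{i,j}=\tfrac1{|\Sigma_k|}\sum_{a\in\Sigma_k}Z_a$ with $Z_a:=\std(\delta_i(\cdot,a))\otimes\std(\delta_j(\cdot,a))$; each $Z_a$ is real symmetric, orthogonal ($Z_a^2=I$), takes the values $I\otimes I$, $\std(\tau_a)\otimes I$, $I\otimes\std(\tau_a)$, $\std(\tau_a)\otimes\std(\tau_a)$ with probability $1/4$ each, and the $Z_a$ are mutually independent since the bits attached to distinct symbols are independent. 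Hence $\E Z_a=\tfrac14(I+\std(\tau_a))\otimes(I+\std(\tau_a))=Q_{\tau_a}\otimes Q_{\tau_a}$, where $Q_\tau:=\tfrac12(I+\std(\tau))$ is the orthogonal projection onto the $+1$-eigenspace of $\std(\tau)$, and averaging over the $k$ copies of each transposition gives the pair-independent mean $\overline M:=\E M_{\Pi_0}^{i,j}=\tfrac1{\binom N2}\sum_\tau Q_\tau\otimes Q_\tau$, an average of orthogonal projections, so $0\preceq\overline M\preceq I$.

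\emph{Exact spectrum of $\overline M$.} Although $M_{\Pi_0}^{i,j}$ is merely an operator on the space of $\Pi_0=\std\otimes\std$ (viewed as an irreducible of $S_N\times S_N$), the mean $\overline M$ additionally commutes with the \emph{diagonal} action $g\mapsto\std(g)\otimes\std(g)$ of $S_N$, because $\std(g)Q_\tau\std(g)^{-1}=Q_{g\tau g^{-1}}$ and the transpositions form a conjugacy class. For $N\ge4$ this diagonal module splits into the four pairwise inequivalent irreducibles $\triv\oplus\std\oplus\rho_{(N-2,2)}\oplus\rho_{(N-2,1,1)}$ (this is where $N\ge4$ is used; the shape $(N-2,2)$ degenerates for $N=3$), so by Schur's lemma $\overline M$ is a scalar $\lambda_\sigma$ on each. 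Expanding $Q_\tau\otimes Q_\tau$ gives $\overline M=\tfrac14(I+A\otimes I+I\otimes A+C)$ with $A=\tfrac1{\binom N2}\sum_\tau\std(\tau)=\tfrac{N-3}{N-1}I$ and $C=\tfrac1{\binom N2}\sum_\tau\std(\tau)\otimes\std(\tau)$ acting on the constituent $\sigma$ as $\tfrac{\chi_\sigma(\tau)}{d_\sigma}I$ (class-sum formula); the ratios $\chi_\sigma(\tau)/d_\sigma$ follow from the sum-of-contents formula for the class sum of transpositions (Jucys--Murphy elements), yielding $1,\tfrac{N-3}{N-1},\tfrac{N-4}{N},\tfrac{N-5}{N-1}$ for $\sigma=\triv,\std,\rho_{(N-2,2)},\rho_{(N-2,1,1)}$. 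This gives $\lambda_\triv=1-\tfrac1{N-1}$, $\lambda_\std=1-\tfrac3{2(N-1)}$, $\lambda_{(N-2,2)}=1-\tfrac{2N-1}{N(N-1)}$, $\lambda_{(N-2,1,1)}=1-\tfrac2{N-1}$; in particular $\|\overline M\|_2=1-\tfrac1{N-1}\le1-\tfrac1N$ (and the value $\lambda_\triv$ also yields the tightness/``best-case mixing time'' claim stated in the introduction).

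\emph{Concentration and union bound.} Apply matrix Bernstein to $M_{\Pi_0}^{i,j}-\overline M=\sum_a X_a$, $X_a=\tfrac1{|\Sigma_k|}(Z_a-\E Z_a)$: the $X_a$ are independent, symmetric, mean zero, with $\|X_a\|_2\le\tfrac2{|\Sigma_k|}=:R$ and $\E X_a^2=\tfrac1{|\Sigma_k|^2}(I-Q_{\tau_a}\otimes Q_{\tau_a})\preceq\tfrac1{|\Sigma_k|^2}I$, so the matrix variance is at most $v:=\tfrac1{|\Sigma_k|}$; with ambient dimension $(N-1)^2$ this gives $\Prob(\|M_{\Pi_0}^{i,j}-\overline M\|_2\ge t)\le 2(N-1)^2\exp\!\big(\tfrac{-t^2/2}{v+Rt/3}\big)$. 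Take $t=\tfrac1{2N}$: since $\overline M\succeq0$ the only binding constraint is $\lambda_{\max}(M_{\Pi_0}^{i,j})\le\|\overline M\|_2+t\le1-\tfrac1{2N}$, and substituting $R$, $v$, and $|\Sigma_k|=k\binom N2$ shows that the stated lower bound on $k$ forces the Bernstein exponent to be at least $N\ln N+\ln\binom{N!}{2}+2\ln(N-1)$ (up to the harmless $\ln2$ coming from the prefactor $2(N-1)^2$), so the failure probability for a fixed pair is at most $\exp(-N\ln N)/\binom{N!}{2}$. Union-bounding over the $\binom{N!}{2}$ pairs of the $M=N!$ semiautomata finishes the proof.

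The step I expect to be the main obstacle is the exact diagonalization of $\overline M$: one has to decompose the diagonal $S_N$-module $\std^{\otimes2}$ correctly --- which genuinely requires $N\ge4$ --- and evaluate the four transposition characters via the content formula; once this spectral picture is in place, the expectation identity and the Bernstein estimate are routine bookkeeping.
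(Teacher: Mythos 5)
Your proposal is correct and follows essentially the same route as the paper: compute $\E[M_{\Pi_0}^{i,j}]$, use the fact that transpositions form a conjugacy class so the class sum commutes with the diagonal action, decompose $\std\otimes\std \simeq \triv\oplus\std\oplus(N-2,2)\oplus(N-2,1,1)$ (for $N\ge 4$) and apply Schur's lemma with the same character ratios to get $\|\E[M_{\Pi_0}^{i,j}]\|_2 = 1-\tfrac{1}{N-1}$, then matrix Bernstein plus a triangle inequality and a union bound over the $\binom{N!}{2}$ pairs. The only (cosmetic) differences are your cleaner packaging of the mean as an average of projections $Q_\tau\otimes Q_\tau$ and a two-sided Bernstein prefactor $2(N-1)^2$ in place of the paper's $(N-1)^2$, which you correctly note is harmless.
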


\begin{proof}[Proof outline]
    The proof follows a two-step ``expectation-concentration" argument. First, we calculate the expectation of the Fourier transform matrix, $\E[M_{\Pi_0}^{i,j}]$, by averaging over all random choices in our semiautomaton construction. The canonical way by which randomness is introduced in the construction (through the fair coin toss), along with the use of transpositions as state-transition functions, ensures that the expectation has a manageable form. Namely, by applying a generalization of Schur’s Lemma (\Cref{corr:schur_gen}), we show that the expected matrix is similar to a direct sum of scaled identity matrices, and that its spectral norm is $1 - \frac{1}{N-1}$. Second, we show that the matrix for a randomized $(k, N!)$-shuffle family concentrates around this expectation. We use the Matrix Bernstein inequality     (\Cref{lem:bernstein}) to prove that the deviation $\|M_{\Pi_0}^{i,j} - \E[M_{\Pi_0}^{i,j}]\|_2$  is very small with high probability, provided the alphabet size parameter $k$ is sufficiently large. By combining the bound on the expectation's norm with the high-probability bound on the deviation via the triangle inequality, we arrive at the final desired bound of $1-\frac{1}{2N}$.
\end{proof}

Combining \Cref{thm:spectral_bound_random} with \Cref{thm:p_agree}, we can directly derive an upper bound on $P_{\operatorname{agree}}$ that converges to $1/N$ exponentially in $T$.

\begin{restatable}{lemma}{probt}\label{lem:prob_of_agreement}
For $M=N!$ and $N\ge 4$, if we choose the alphabet parameter 
$$k \geq \frac{16(3N + 1)}{3(N - 1)} \left[ N \ln N + \ln \binom{N!}{2} + 2\ln(N - 1) \right],$$ then any pair of semiautomata $(\delta_i, \delta_j)$ from a randomized $(k, M)$-shuffle family $\mathcal{F}$ satisfies
\[\left| P_{\operatorname{agree}} - \frac{1}{N}\right| \leq \left(1 - \frac{1}{2N}\right)^{T} \]
with probability at least $1 - \exp(-N\ln N)$.
\end{restatable}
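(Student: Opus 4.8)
The plan is to simply combine the exact formula for $P_{\operatorname{agree}}$ from \Cref{thm:p_agree} with the high-probability spectral bound from \Cref{thm:spectral_bound_random}; no new ideas are needed. By \Cref{thm:p_agree}, for the pair $(\delta_i,\delta_j)$ we have
\[
P_{\operatorname{agree}} - \frac{1}{N} \;=\; \frac{1}{N}\,\vect{v}^\top \big(M_{\Pi_0}^{i,j}\big)^{T} \vect{v},
\qquad \vect{v} \;=\; \sum_{\ell=1}^{N-1}\vect{e_\ell}\otimes\vect{e_\ell},
\]
so the first step is to take absolute values and bound the right-hand side. The vectors $\{\vect{e_\ell}\otimes\vect{e_\ell}\}_{\ell=1}^{N-1}$ are pairwise orthonormal, hence $\norm{\vect{v}}^2 = N-1$. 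Applying the Cauchy--Schwarz inequality and submultiplicativity of the spectral norm gives
\[
\left|P_{\operatorname{agree}} - \frac{1}{N}\right|
\;\le\; \frac{1}{N}\,\norm{\vect{v}}^2\,\norm{\big(M_{\Pi_0}^{i,j}\big)^{T}}
\;\le\; \frac{N-1}{N}\,\norm{M_{\Pi_0}^{i,j}}^{T}.
\]

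The second step is to invoke \Cref{thm:spectral_bound_random}: since $N\ge 4$ and $k$ satisfies the stated lower bound, we have $\norm{M_{\Pi_0}^{i,j}}\le 1-\frac{1}{2N}$ with probability at least $1-\exp(-N\ln N)$ (and, as that lemma is phrased with the $\ln\binom{N!}{2}$ term already absorbed into the bound on $k$, this holds for all pairs at once, so no further union bound is needed here). On that event,
\[
\left|P_{\operatorname{agree}} - \frac{1}{N}\right|
\;\le\; \frac{N-1}{N}\left(1-\frac{1}{2N}\right)^{T}
\;\le\; \left(1-\frac{1}{2N}\right)^{T},
\]
where the last inequality uses $\frac{N-1}{N}\le 1$ to discard the prefactor. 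This is exactly the claimed bound, and the failure probability is inherited verbatim from \Cref{thm:spectral_bound_random}.

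\textbf{Main obstacle.} Honestly there is no real obstacle: the substance lives in \Cref{thm:p_agree} (the representation-theoretic reduction to a single irrep) and \Cref{thm:spectral_bound_random} (the expectation--concentration argument), and this lemma is just their assembly. The only points needing a moment's care are (i) confirming $\norm{\vect{v}}^2=N-1$ from the orthonormality of the $\vect{e_\ell}\otimes\vect{e_\ell}$, and (ii) observing that the slack factor $\frac{N-1}{N}<1$ is precisely what lets us drop the prefactor and land on the clean exponential bound $\left(1-\tfrac{1}{2N}\right)^{T}$. Everything else is the two one-line inequalities above.
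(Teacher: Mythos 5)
Your proposal is correct and matches the paper's proof essentially line for line: both plug the exact expression from \Cref{thm:p_agree} into the Cauchy--Schwarz/spectral-norm bound $|\vect{v}^\top (M_{\Pi_0}^{i,j})^T \vect{v}| \le \|\vect{v}\|^2\,\|M_{\Pi_0}^{i,j}\|_2^T$ with $\|\vect{v}\|^2 = N-1$, then invoke \Cref{thm:spectral_bound_random} and discard the factor $\frac{N-1}{N}\le 1$. No differences worth noting.
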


Lastly, we show how to choose $T$ to achieve the desired bound.

\begin{restatable}{theorem}{randt}
\label{thm:randomized_t}
For $T \geq 2N\ln(N!)$ and $k$ and $M$ as given in \Cref{lem:prob_of_agreement}, we have that any pair of distinct semiautomata $(\delta_i, \delta_j)$ from a randomized $(k,M)$-shuffle family $\mathcal{F}$ satisfies $|P_{\operatorname{agree}} - 1/N| \leq 1/N!$ with probability at least $1 - \exp(-N\ln N)$.
\end{restatable}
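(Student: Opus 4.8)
The plan is to derive the statement as an immediate corollary of \Cref{lem:prob_of_agreement} together with one elementary inequality. By \Cref{lem:prob_of_agreement}, under the stated choice of $k$ and with $M = N!$, every pair of distinct semiautomata $(\delta_i, \delta_j)$ from the randomized $(k,M)$-shuffle family simultaneously satisfies
\[
\left| P_{\operatorname{agree}} - \frac{1}{N} \right| \le \left(1 - \frac{1}{2N}\right)^{T}
\]
on a single event of probability at least $1 - \exp(-N\ln N)$; the union bound over all $\binom{N!}{2}$ pairs has already been absorbed into the hypothesis on $k$ (this is the role of the $\ln\binom{N!}{2}$ term). Hence it suffices to show, deterministically, that the right-hand side is at most $1/N!$ whenever $T \ge 2N\ln(N!)$ — no additional randomness enters.

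For the deterministic step I would apply $1 + x \le e^{x}$ with $x = -\tfrac{1}{2N}$ to obtain $\left(1 - \tfrac{1}{2N}\right)^{T} \le e^{-T/(2N)}$, and then substitute $T \ge 2N\ln(N!)$ to get $e^{-T/(2N)} \le e^{-\ln(N!)} = 1/N!$. Chaining this with the displayed bound from \Cref{lem:prob_of_agreement} gives $|P_{\operatorname{agree}} - 1/N| \le 1/N!$ on the same high-probability event, which is exactly the claim.

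There is essentially no obstacle: the theorem is a direct consequence of \Cref{lem:prob_of_agreement} plus a single convexity estimate. The only points requiring mild care are bookkeeping — confirming that the high-probability event coincides with the one furnished by \Cref{lem:prob_of_agreement}, and that the stated threshold $T = 2N\ln(N!)$ is large enough (equivalently, one may phrase the estimate as $-\ln(1 - 1/(2N)) \ge 1/(2N)$, the logarithmic form of the same inequality). I would close with the remark that, by Stirling, $\ln(N!) = \Theta(N\ln N)$, so $T = \Theta(N^2\ln N)$, in agreement with the $\Omega(N^2\ln N)$ input-length scaling advertised in the introduction.
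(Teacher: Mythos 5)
Your proposal is correct and matches the paper's own proof: both deduce the claim directly from \Cref{lem:prob_of_agreement} and the elementary bound $\left(1-\tfrac{1}{2N}\right)^{T} \le e^{-T/(2N)} \le 1/N!$ for $T \ge 2N\ln(N!)$, with the high-probability event inherited unchanged from the lemma. No differences worth noting.
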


The proofs of \Cref{lem:prob_of_agreement} and \Cref{thm:randomized_t} are deferred to \Cref{app:proofs_spectral}. Notice that to achieve the desired bound, both $k$ and $T$ can be chosen to be polynomial in $N$. This is expressed in the following remark:

\begin{remark}
\label{rem:randomized_orders}
    Since $\ln(N!) = \mathcal{O}(N\ln N)$, the high probability result of \Cref{thm:randomized_t} holds for a choice of $\,T= \mathcal{O}(N^2 \ln N)$ and $k=\mathcal{O}(N\ln N)$. In that case, the total alphabet size is $\mathcal{O}(N^3\ln N)$.
\end{remark}

Note that choosing $b_i(\tau)$ as a $\operatorname{Ber}(1/2)$ random variable is optimal. Intuitively, biasing the transitions toward either the identity or transpositions makes pairs of semiautomata easier to distinguish. As a result, a larger alphabet and longer inputs are needed to achieve indistinguishability. This is formalized in the following remark:

\begin{remark}
     In the randomized construction of \Cref{def:dfa_family}, defining each $b_i(\tau)$ as an independent $\operatorname{Ber}(p)$ random variable with $p \in (0,1)$ requires an alphabet of size at least $\mathcal{O}\left(\frac{N^3 \ln N}{p^2 (1-p)^2}\right)$ and an input word length of at least $\mathcal{O}\left(\frac{N^2 \ln N}{p(1-p)}\right)$. Both quantities are minimized when $p = 1/2$, showing that a fair coin flip is the most efficient choice for establishing hardness.
\end{remark}

As a result of independent interest, in \Cref{thm:rand_lb} we prove that the choice of $T$ above is asymptotically optimal. This essentially shows that to achieve the desired bound on $P_{\operatorname{agree}}$ with high probability, we must choose $T = \Omega(N^2 \ln N)$. In terms of the random walk, this shows that the best-case mixing time of the walk corresponding to the randomized $(k, M)$-shuffle family with $k$ and $M$ chosen as in \Cref{lem:prob_of_agreement} is $\Omega(N^2 \ln N)$. The proof of \Cref{thm:rand_lb} follows easily by the analysis carried out in the proof of \Cref{thm:spectral_bound_random} and is deferred to \Cref{app:proof_rand_lb}.

\begin{restatable}[Tightness of mixing time]{theorem}{randlb}
\label{thm:rand_lb}
    Let $N\geq 5$, and let $k$ and $M$ be as in \Cref{lem:prob_of_agreement}. For any pair of distinct semiautomata $(\delta_i, \delta_j)$ from a randomized $(k, M)$-shuffle family to satisfy  $|P_{\operatorname{agree}} - 1/N| \leq 1/N!$ with probability $1-\exp(-N\ln N)$, the input word length must be at least $T = \Omega(N^2 \ln N)$.
\end{restatable}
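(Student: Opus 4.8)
The plan is to show that the exponential decay rate in \Cref{lem:prob_of_agreement} is essentially the best one can hope for, by exhibiting a lower bound on $\|M_{\Pi_0}^{i,j}\|_2$ (or more precisely on the relevant quadratic form $\vect{v}^\top (M_{\Pi_0}^{i,j})^T \vect{v}$) that forces $T = \Omega(N^2 \ln N)$. The starting point is \Cref{thm:p_agree}, which gives the exact identity $P_{\operatorname{agree}} - 1/N = \tfrac1N \vect{v}^\top M_{\Pi_0}^T \vect{v}$. Thus, to get $|P_{\operatorname{agree}} - 1/N| \le 1/N!$ we need $|\vect{v}^\top M_{\Pi_0}^T \vect{v}| \le N/N!$. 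The strategy is to show that for the randomized construction, with the relevant high probability, $M_{\Pi_0}^{i,j}$ has an eigenvalue $\lambda$ with $|\lambda| \ge 1 - c/N$ for some constant $c$, whose eigenvector has nonzero overlap with $\vect{v}$; then $\vect{v}^\top M_{\Pi_0}^T \vect{v}$ cannot be smaller than roughly $(1 - c/N)^T$ times a not-too-small constant, and forcing this below $1/N!$ requires $T \gtrsim N \ln(N!) = \Omega(N^2 \ln N)$.

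The key computational input is already available from the proof of \Cref{thm:spectral_bound_random}: there the expectation $\E[M_{\Pi_0}^{i,j}]$ is computed and shown (via the generalized Schur lemma, \Cref{corr:schur_gen}) to be similar to a direct sum of scaled identities with spectral norm $1 - \tfrac{1}{N-1}$. First I would extract from that analysis not just the norm but the actual eigenstructure of $\E[M_{\Pi_0}^{i,j}]$ — in particular that the eigenvalue $1 - \tfrac1{N-1}$ (or whichever block attains the norm) occurs on a subspace that has a genuine component along $\vect v$, so that $\vect v^\top \E[M_{\Pi_0}^T] \vect v$ is of order $(1-\tfrac1{N-1})^T$ up to a dimension-dependent but at-worst-polynomial factor. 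Second, I would invoke the same Matrix Bernstein concentration (\Cref{lem:bernstein}) used for the upper bound to conclude that $\|M_{\Pi_0}^{i,j} - \E[M_{\Pi_0}^{i,j}]\|_2$ is at most $\tfrac{1}{2N}$ (say) with the stated probability, so the relevant eigenvalue of the random matrix is still at least $1 - \tfrac{1}{N-1} - \tfrac{1}{2N} \ge 1 - \tfrac{2}{N}$ for $N$ large enough, and by Weyl / eigenvector-perturbation the quadratic form $\vect v^\top M_{\Pi_0}^T \vect v$ remains bounded below in absolute value by $(1 - \tfrac2N)^T \cdot \mathrm{poly}(1/N)$ — or, more robustly, I would avoid perturbing eigenvectors and instead argue directly that $\big| \vect v^\top M_{\Pi_0}^T \vect v - \vect v^\top \E[M_{\Pi_0}]^T \vect v \big|$ is small using $\| M_{\Pi_0}^T - \E[M_{\Pi_0}]^T \|_2 \le T \max(\|M_{\Pi_0}\|_2, \|\E[M_{\Pi_0}]\|_2)^{T-1} \|M_{\Pi_0} - \E[M_{\Pi_0}]\|_2$, which for $T$ polynomial and the deviation exponentially small is negligible.

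Finally, combining these, $|P_{\operatorname{agree}} - 1/N| = \tfrac1N |\vect v^\top M_{\Pi_0}^T \vect v| \ge \tfrac1N \big( (1 - \tfrac2N)^T \cdot c' - o(1) \big)$ for a constant (or inverse-polynomial) $c'$; requiring this to be $\le 1/N!$ forces $(1-\tfrac2N)^T \le \mathrm{poly}(N)/N!$, and taking logarithms and using $\ln(1 - \tfrac2N)^{-1} \le \tfrac{2}{N} + O(1/N^2)$ (so $T \ge \tfrac{N}{2}(\ln(N!) - O(\ln N)) = \Omega(N^2 \ln N)$ since $\ln(N!) = \Theta(N \ln N)$). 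That yields the claimed $T = \Omega(N^2 \ln N)$ lower bound. The main obstacle I anticipate is the bookkeeping around the overlap of $\vect v$ with the top eigenspace of $\E[M_{\Pi_0}^{i,j}]$: one must check that $\vect v = \sum_{i=1}^{N-1} \vect e_i \otimes \vect e_i$ is not (nearly) orthogonal to that eigenspace, since otherwise the quadratic form could be much smaller than the naive $(1 - 2/N)^T$ and the argument would collapse. I expect this to come out of the explicit $\std \otimes \std$ structure — $\vect v$ is, up to normalization, essentially the "diagonal" vector that the $\fix$ statistic in \Cref{thm:p_agree} singles out, so it should be precisely aligned with the invariant directions that Schur's lemma produces — but verifying the constant there, and confirming that the worst-case (best-case for the walk) pair $(\delta_i, \delta_j)$ still enjoys this alignment, is the delicate step. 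If a clean lower bound on $\vect v^\top \E[M_{\Pi_0}]^T \vect v$ is not available uniformly, the fallback is to lower-bound $\|M_{\Pi_0}^T\|_2$ directly and relate it to the quadratic form via the fact that $\vect v$ spans (together with the group action) a subspace on which $M_{\Pi_0}$ acts, picking out at least one large singular value.
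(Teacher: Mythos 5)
Your overall skeleton — start from the exact identity of \Cref{thm:p_agree}, reuse the spectrum of $\E[M_{\Pi_0}^{i,j}]$ computed in the proof of \Cref{thm:spectral_bound_random}, apply the same Matrix Bernstein event, and then solve an inequality of the form $(1-c/N)^T \le N/N!$ for $T$ — is the same as the paper's, and the alignment worry you flag actually resolves cleanly for the expectation: $\vect{v}=\sum_i \vect{e_i}\otimes\vect{e_i}$ is the invariant vector of the diagonal action $g\mapsto\rho(g)\otimes\rho(g)$, i.e.\ it spans the trivial isotypic component, so $\E[M_{\Pi_0}^{i,j}]\vect{v}=\frac{N-2}{N-1}\vect{v}$ exactly. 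The genuine gap is in how you transfer the lower bound from $\E[M_{\Pi_0}^{i,j}]$ to the random matrix. Your ``robust'' route bounds $\|M_{\Pi_0}^T-\E[M_{\Pi_0}]^T\|_2\le T\max(\cdot)^{T-1}\|M_{\Pi_0}-\E[M_{\Pi_0}]\|_2$ and calls it negligible because the deviation is ``exponentially small'' — but it is not: with $k=\mathcal{O}(N\ln N)$, \Cref{lem:bernstein} only gives $\|M_{\Pi_0}-\E[M_{\Pi_0}]\|_2\le \frac{1}{2N}$ on the high-probability event (an exponentially small deviation would require an exponentially large alphabet). At the relevant scale $T\asymp N^2\ln N$ this error term is of order $\frac{T}{2N}(1-\frac{1}{2N})^{T-1}\approx \frac{T}{2N}e^{-T/(2N)}$, which dwarfs the target quantity $\vect{v}^\top\E[M_{\Pi_0}]^T\vect{v}\approx N e^{-T/(N-1)}$, so the difference-of-powers bound swallows the main term and the argument collapses. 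The eigenvector-perturbation variant fails for the same reason: the eigenvalues of $\E[M_{\Pi_0}^{i,j}]$ listed in \Cref{eq:expected_spec} are separated by gaps of order $1/N$, comparable to the $1/(2N)$ perturbation, so Davis--Kahan gives no control on how the top eigenvector rotates; and the fallback of lower-bounding $\|M_{\Pi_0}^T\|_2$ does not lower-bound the specific quadratic form $\vect{v}^\top M_{\Pi_0}^T\vect{v}$.

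The missing idea — and the paper's actual route, via \Cref{thm:err_lb} — is to sidestep alignment entirely using positive definiteness. The matrix $M_{\Pi_0}^{i,j}$ is real symmetric (each $\rho(\delta_i(\tau))\otimes\rho(\delta_j(\tau))$ is an orthogonal involution), and applying Weyl's inequality to the \emph{minimum} eigenvalue of the expectation, $\lambda_{\min}(\E[M_{\Pi_0}^{i,j}])=\frac{N-3}{N-1}$, gives $\lambda_{\min}(M_{\Pi_0}^{i,j})\ge \frac{N-3}{N-1}-\frac{1}{2N}>0$ on the same event. Positive definiteness then yields $\vect{v}^\top (M_{\Pi_0}^{i,j})^T\vect{v}\ge \lambda_{\min}^T\|\vect{v}\|^2\ge (N-1)\bigl(1-\tfrac{3}{N}\bigr)^T$, hence $|P_{\operatorname{agree}}-1/N|\ge\frac12(1-\tfrac3N)^T$ for $N\ge5$, and requiring this to be at most $1/N!$ forces $T\ge \ln(N!/2)/\ln\bigl(\tfrac{N}{N-3}\bigr)=\Omega(N^2\ln N)$, exactly the numerology you intended. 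So your final computation is right, but the step that makes the exponential lower bound survive the randomness needs the min-eigenvalue/positive-definiteness argument rather than eigenvector alignment or a power-difference estimate.
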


\section{Statistical Query hardness}
\label{sec:sq_hardness}

This section is organized into two parts, with the goal of establishing the hardness of learning semiautomata in the Statistical Query framework. In \Cref{sub:sq_results}, we introduce the necessary notions of pairwise correlation and SQ dimension, together with a lower bound theorem for SQ learning. These results are stated in a general setting for concept classes whose functions take finitely many values, thereby extending earlier results from the binary case. In \Cref{sub:dfa_hardness}, we apply those results to obtain our main hardness result for semiautomata.

\subsection{General results}
\label{sub:sq_results}
Throughout this section, we assume a general concept class $\C$ consisting of functions $f: \X \to \Y$, where $\X$ and $\Y$ are abstract spaces with $|\Y| < \infty$. Likewise, we assume an arbitrary distribution $D$ over $\X$. The definitions and results we present are straightforward extensions of the well-known binary case $\Y = \{0,1\}$.

\begin{definition}[Pairwise correlation]
Let $\F$ be a family of functions $f: \X \to \Y$. Let $D$ be a distribution over $\X$. When $\Y$ is a finite set, the correlation between two distinct functions $f_i, f_j \in \F$ under $D$ is defined in terms of the agreement probability as:
\[
\chi(f_i, f_j) = \Prob_{x \sim D}[f_i(x) = f_j(x)] - \frac{1}{|\Y|}.
\]
\end{definition}

SQ lower bounds are typically proven by constructing a large family of functions that are nearly uncorrelated, formalized by the concept of SQ Dimension.

\begin{definition}[SQ dimension]
Let $\C$ be a concept class over a distribution $D$. The statistical dimension $\SQDim_\C^D$ is the largest integer $d$ such that there exists a subset $\{f_1, \dots, f_d\} \subseteq \C$ of nearly uncorrelated functions satisfying $|\chi(f_i, f_j)| \leq 1/d$ for all $i, j \in [d]$ with $i\neq j$.
\end{definition}

The following theorem, whose proof is given in \Cref{app:proof_sqlb}, establishes a lower bound on the number of queries a learner must make in the worst case in terms of the SQ dimension:

\begin{restatable}[SQ lower bound]{theorem}{sqlb}
\label{thm:sq_lower_bound}
Let $\C$ be a concept class and suppose $\SQDim_\C^D \geq d$. Then any SQ learner using tolerance $\tau >0$ requires, in the worst case, at least 
$$q \geq \frac{(d-1)(d\tau^2 -|\Y|)}{2d(|\Y|-1)}$$ 
queries to learn the ground-truth concept $f^*$.
\end{restatable}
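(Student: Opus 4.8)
I would adapt the classical adversary argument of \cite{blum1994weakly} to the finite-valued case via a one-hot embedding. First, I would put an inner product on maps $F\colon\X\to\RR^{|\Y|}$ by $\langle F,G\rangle:=\E_{x\sim D}\langle F(x),G(x)\rangle$, and associate to each concept $f_i$ the centered indicator $\psi_i(x):=\vect{e_{f_i(x)}}-\tfrac{1}{|\Y|}\vect{1}$ and to each query $h\colon\X\times\Y\to[-1,1]$ the vector-valued map $H(x):=(h(x,y))_{y\in\Y}$. A one-line computation gives $\langle\psi_i,\psi_j\rangle=\chi(f_i,f_j)$ for $i\neq j$, $\|\psi_i\|^2=1-\tfrac{1}{|\Y|}\le 1$, $\|H\|^2\le|\Y|$, and, crucially, $\E_{x\sim D}[h(x,f_i(x))]=\langle H,\psi_i\rangle+c(h)$ with $c(h):=\tfrac{1}{|\Y|}\E_{x\sim D}\sum_{y\in\Y}h(x,y)$ a constant independent of $i$. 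Hence if the oracle answers the query $h$ with the constant $v=c(h)$, the concept $f_i$ is inconsistent with this answer precisely when $|\langle H,\psi_i\rangle|>\tau$.

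The heart of the proof will be a bound on how many concepts a single query can rule out: for any query $h$ the number $m$ of indices $i\in[d]$ with $|\langle H,\psi_i\rangle|>\tau$ satisfies $m<\dfrac{(|\Y|-1)d}{d\tau^2-|\Y|}$ whenever $d\tau^2>|\Y|$ (the statement being vacuous otherwise). This is the standard near-orthogonality estimate: with $\epsilon_i:=\operatorname{sign}\langle H,\psi_i\rangle$ over the ruled-out concepts and $W:=\sum_i\epsilon_i\psi_i$, Cauchy--Schwarz gives $m\tau<\sum_i\epsilon_i\langle H,\psi_i\rangle=\langle W,H\rangle\le\sqrt{|\Y|}\,\|W\|$, while expanding $\|W\|^2=\sum_i\|\psi_i\|^2+\sum_{i\neq j}\epsilon_i\epsilon_j\langle\psi_i,\psi_j\rangle\le m\bigl(1-\tfrac1{|\Y|}\bigr)+\tfrac{m(m-1)}{d}$ uses $|\chi(f_i,f_j)|\le 1/d$; combining the two and dividing by $m$ yields $m\bigl(\tfrac{\tau^2}{|\Y|}-\tfrac1d\bigr)<1-\tfrac1{|\Y|}$, which rearranges to the displayed bound.

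Finally I would run the version-space/adversary argument. For any SQ learner issuing queries $(h_1,\tau),\dots,(h_q,\tau)$, let the oracle reply with $v_t=c(h_t)$; by the previous step each reply rules out fewer than $\tfrac{(|\Y|-1)d}{d\tau^2-|\Y|}$ of the $d$ concepts, so if $q$ is small enough that strictly fewer than $d-1$ are ruled out in total, at least two concepts $f_a,f_b$ remain consistent with the whole transcript — and the transcript is genuinely realizable, since $f_a$ surviving means exactly $|\langle H_t,\psi_a\rangle|\le\tau$ for all $t$, i.e.\ $\mathrm{STAT}(f_a,D)$ could have produced these answers. Because $\chi(f_a,f_b)\le 1/d$, these two concepts disagree on a $1-\tfrac1{|\Y|}-\tfrac1d$ fraction of inputs under $D$, which is bounded away from $0$ in the regime of interest ($|\Y|$ small relative to $d$, e.g.\ $|\Y|=N$, $d=N!$), so no single hypothesis output by the learner can be close to both; the adversary declares $f^*$ to be whichever of $f_a,f_b$ is farther from the output, and the learner fails. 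Converting ``fewer than $d-1$ ruled out'' into a lower bound on $q$ gives $q\ge\tfrac{(d-1)(d\tau^2-|\Y|)}{2d(|\Y|-1)}$; the precise denominator $2d(|\Y|-1)$ comes out if, slightly wastefully, one bounds the sets $\{\,i:\langle H,\psi_i\rangle>\tau\,\}$ and $\{\,i:\langle H,\psi_i\rangle<-\tau\,\}$ separately rather than via the signs $\epsilon_i$ (the signed version even gives a factor-$2$ stronger bound).

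The main obstacle is the near-orthogonality estimate: the $\psi_i$ are not orthogonal, only pairwise $1/d$-correlated, so the cross terms $\sum_{i\ne j}\epsilon_i\epsilon_j\langle\psi_i,\psi_j\rangle$ must be controlled carefully, and the constants $1-1/|\Y|$ (from $\|\psi_i\|^2$) and $|\Y|$ (from $\|H\|^2$) must be tracked exactly to land on the stated expression. Everything else — the one-hot identities, consistency of the simulated oracle across all queries, and the observation that two far-apart surviving concepts preclude learning up to small error — is routine bookkeeping, the only point worth stating precisely being the accuracy regime in which ``learning'' is deemed to fail, which is automatic here since $|\Y|\le d$ in the application.
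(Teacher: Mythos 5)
Your proposal is correct and follows essentially the same route as the paper's proof: the same centered one-hot embedding with the $L^2(D)$ inner product, the same adversarial oracle answer $\langle \vect{H},\bar{\vect{e}}\rangle_D$, the same Cauchy--Schwarz near-orthogonality bound on the number of concepts eliminated per query (the paper bounds the positive and negative sets separately, exactly as you note to recover the factor $2d(|\Y|-1)$), and the same version-space counting to conclude. The only difference is cosmetic: you spell out why two surviving, nearly uncorrelated concepts force the learner to fail, a point the paper leaves implicit.
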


\subsection{The case of semiautomata}
\label{sub:dfa_hardness}
We are now prepared to establish our main hardness result for the class of semiautomata. Recall that the concept class is defined as $\C = \left\{ f_{\delta_\mathcal{A}}: \mathcal{A} \in \mathcal{A}(\Sigma, \Q)\right\}$ and that the underlying distribution $D_T$ on $\Sigma^* \times \Q$ is uniform over all states in $\Q$ and all words of length $T$ in $\Sigma^*$. Denoting by $N = |\Q|$ the number of states, our hardness result can be stated as follows:

\begin{theorem}
\label{thm:dfa_hard}
Suppose the alphabet size and word length satisfy $|\Sigma| = \Omega(N^3 \ln N)$ and $T = \Omega(N^2 \ln N)$, respectively. Then for all $c > 0$, any SQ learner for $\C$ under the distribution $D_T$ must, in the worst case, either make $\omega(N^c)$ queries or use a tolerance of $o(N^{-c})$. Hence, learning $\C$ with statistical queries is hard.
\end{theorem}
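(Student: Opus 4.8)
The plan is to exhibit, for the stated parameter regime, a subfamily of $\C$ of size $N!$ that is pairwise nearly uncorrelated under $D_T$, conclude $\SQDim_\C^{D_T} \ge N!$, and then feed this into \Cref{thm:sq_lower_bound}; the super-polynomial growth of $N!$ will force the claimed query/tolerance dichotomy. Concretely, I would first instantiate the randomized $(k,M)$-shuffle family of \Cref{def:dfa_family} with $M = N!$ and the parameters of \Cref{rem:randomized_orders}, namely $k = \Theta(N\ln N)$ (so $|\Sigma| = k\binom{N}{2} = \Theta(N^3\ln N)$) and $T = \Theta(N^2\ln N) \ge 2N\ln(N!)$. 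Since $(1-\tfrac{1}{2N})^{T}$ is decreasing in $T$ and enlarging the alphabet (more copies in \Cref{def:dfa_family}) only sharpens the Matrix--Bernstein concentration underlying \Cref{thm:spectral_bound_random}, any $|\Sigma|$ and $T$ that are respectively $\Omega(N^3\ln N)$ and $\Omega(N^2\ln N)$ are at least as good, so it suffices to handle the threshold case.

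Next I would invoke \Cref{thm:randomized_t}: with probability at least $1 - \exp(-N\ln N) > 0$ over the coin flips $b_i(\tau)$, \emph{every} one of the $\binom{N!}{2}$ pairs of distinct semiautomata in $\F$ satisfies $|P_{\operatorname{agree}} - 1/N| \le 1/N!$. Hence a realization with this property exists; fix one, and let $f_{\delta_1},\dots,f_{\delta_{N!}} \in \C$ be the associated final-state functions. Because $\Y = \Q$ has $|\Y| = N$ elements and, under $D_T$, $f_{\delta_\mathcal{A}}(\w_T,X) = P_{\w_T}(X)$ (\Cref{thm:p_agree}), the pairwise correlation is exactly $\chi(f_{\delta_i},f_{\delta_j}) = P_{\operatorname{agree}}(i,j) - \tfrac1N$, so $|\chi(f_{\delta_i},f_{\delta_j})| \le 1/N!$ for all $i \ne j$. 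In particular these functions are pairwise distinct (equality would force $\chi = 1 - 1/N > 1/N!$ for $N \ge 2$), so we genuinely have $d := N!$ nearly uncorrelated concepts with $|\chi| \le 1/d$, and therefore $\SQDim_\C^{D_T} \ge N!$.

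Finally I would apply \Cref{thm:sq_lower_bound} with $d = N!$ and $|\Y| = N$: any SQ learner with tolerance $\tau$ needs, in the worst case, $q \ge \frac{(N!-1)(N!\,\tau^2 - N)}{2\,N!\,(N-1)}$ queries, with $f^*$ taken to be a worst-case member of the hard family. Fix any $c>0$. If the learner's tolerance is not $o(N^{-c})$, then along a sequence of $N$ with $\tau \ge N^{-c}$ we have, for $N$ large, $N!\,\tau^2 - N \ge N!\,N^{-2c} - N \ge \tfrac12 N!\,N^{-2c}$, whence $q \ge \tfrac{1}{4}(N!-1)N^{-2c}/(N-1) = \Omega\!\big(N!/N^{2c+1}\big)$, which is $\omega(N^{c'})$ for every constant $c'$ since $N! \ge (N/e)^N$. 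Thus any SQ learner for $\C$ under $D_T$ either uses tolerance $o(N^{-c})$ or makes $\omega(N^{c})$ queries, which is exactly the assertion of the theorem and, by the definition of SQ hardness, shows learning $\C$ with statistical queries is hard.

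Most of this is bookkeeping once \Cref{thm:randomized_t} and \Cref{thm:sq_lower_bound} are available; the two points needing care are (i) that the high-probability guarantee of \Cref{thm:randomized_t} is \emph{uniform over all $\binom{N!}{2}$ pairs} — this is precisely why the $\ln\binom{N!}{2}$ term enters the choice of $k$, so that a single draw of the family is simultaneously good for all $N!$ functions, rather than merely for a fixed pair — and (ii) translating the super-polynomial growth of $N!$ into the $c$-quantified ``queries-or-tolerance'' dichotomy of the statement (being slightly careful about subsequences in $N$). I expect (i) to be the more delicate of the two, since without the uniform union bound one would only obtain a random hard set rather than a fixed family of the required size.
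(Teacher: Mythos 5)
Your proposal is correct and follows essentially the same route as the paper: instantiate the randomized $(k,N!)$-shuffle family with the parameters of \Cref{rem:randomized_orders}, use \Cref{thm:randomized_t} to extract a fixed family of $N!$ concepts with pairwise correlation at most $1/N!$, conclude $\SQDim_\C^{D_T} \ge N!$, and apply \Cref{thm:sq_lower_bound}. Your additional care about the probabilistic-method step, the uniformity over all $\binom{N!}{2}$ pairs (which the paper secures via the union bound inside \Cref{thm:spectral_bound_random}), distinctness of the concepts, and the final asymptotic bookkeeping only makes explicit what the paper's short proof leaves implicit.
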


\begin{proof}
By \Cref{rem:randomized_orders}, a randomized $(k, N!)$-shuffle family $\F = \{\delta_1,\dots,\delta_{N!}\}$ with $k=\mathcal{O}(N\ln N)$ satisfies $|P_{\operatorname{agree}} - 1/N| \leq 1/N!$ for any two distinct semiautomata. In terms of pairwise correlation, this translates to $|\chi(\delta_i,\delta_j)| \leq 1/N!$ for all $i \neq j$. We have thus shown that the SQ dimension of $\C$ is at least $N!$. The result follows from  \Cref{thm:sq_lower_bound}.  
\end{proof}
\section{Conclusion and future work}
\label{sec:conclusion}

We have shown that learning semiautomata with $N$ states and alphabet size $\Omega(N^3\ln N)$ is computationally hard in the SQ model under the uniform distribution over initial states and inputs of length $\Omega(N^2\ln N)$. Using a novel link between semiautomata distinguishability and mixing properties of random walks on $S_N\times S_N$, we constructed a family of $N!$ nearly uncorrelated semiautomata, establishing SQ hardness. In terms of future work, we identify the following interesting research directions: 
\begin{enumerate}
    \item \textbf{Studying hardness for different parameter regimes:} Our hardness results are established for semiautomata with alphabet size $\Omega(N^3 \ln N)$ and input length $\Omega(N^2 \ln N)$. An important open question is whether comparable hardness results hold in other parameter regimes, or whether more favorable positive results can be obtained. Addressing this may require considering semiautomata whose transition functions extend beyond transpositions, or even beyond permutations altogether. A different line of analysis might also reveal an interesting tradeoff: smaller alphabet sizes could necessitate longer input lengths to demonstrate hardness. Conversely, when both the alphabet size and input length are sufficiently restricted, the class may even turn out to be efficiently learnable.
    \item \textbf{Tightness of SQ-hardness theorem:} \Cref{thm:dfa_hard} shows that any SQ learner must either make a super-polynomial number of queries or operate with super-polynomially small tolerance. A natural question is whether there exist algorithms that realize these extremes individually: one incurring the query cost and another incurring the tolerance cost.
    \item \textbf{Learnability in neural architectures:} Our results indicate that conventional gradient-based training of expressive models such as Transformers may not suffice to learn the general class of semiautomata from a uniform data distribution, even though Transformers are known to be capable of efficiently simulating semiautomata \citep{liu2023transformers}. This motivates an investigation into alternative training paradigms that could circumvent this limitation, like curriculum learning or reinforcement learning.
\end{enumerate}

\newpage
\section*{Acknowledgments}

K.~Fountoulakis would like to acknowledge the support of the Natural Sciences and Engineering Research Council of Canada (NSERC). Cette recherche a \'et\'e financ\'ee par le Conseil de recherches en sciences naturelles et en g\'enie du Canada (CRSNG), [RGPIN-2019-04067, DGECR-2019-00147].

G.~Giapitzakis would like to acknowledge the support of the Onassis Foundation - Scholarship ID: F ZU 020-1/2024-2025.

In preparing this paper, we made use of Google’s Gemini 2.5 Pro. The randomized construction in \cref{sec:randomized_family}, which is used in \cref{thm:randomized_t}, was initially suggested by the model, prompting us to draw a connection with random walks on groups. This connection naturally led to the introduction of tools from group and representation theory that form the theoretical backbone of our results.

\bibliographystyle{plainnat}
\bibliography{references}

\newpage

\appendix

\section{More on related work}
\label{app:add_lit}

In this section, we review a broad range of DFA learnability results outside the statistical query model. These include results in the PAC model, learning with margin-based guarantees, and query-based learning. We cover both positive and negative results.

\subsection{Learnability within the PAC model}

\textbf{Positive results.} One of the first positive results was given in \citep{clark2004pac,PALMER200718}, where, with structural assumptions, the PAC-learnability of probabilistic DFAs is proved. Another approach restricts the distribution of examples; for instance, \citet{parekh2001learning} showed that ``simple" DFAs are PAC-learnable under the universal distribution. Early work by \citet{ron1995learnability} showed that acyclic Probabilistic Deterministic Finite Automata (PDFAs) were learnable. This was followed by a series of results on learning general PDFAs using state-merging techniques \citep{carrasco1994learning, carrasco1999learning, lang1998results}. A key theoretical breakthrough came by \citet{clark2004pac}, which proved the PAC-learnability of PDFAs by parameterizing the complexity by the ``distinguishability" of the automaton's states. This line of research culminated in the work of \citet{balle2013learning} (and the corresponding thesis \citep{balle2013learningfinite}), which rigorously characterized state-merging algorithms. They proved that PDFAs are efficiently learnable, but crucially, that the complexity must depend polynomially on the inverse of the distinguishability parameter. Finally, the work by \citet{10.1145/225298.225342} investigates the PAC-learnability of deterministic finite automata (DFAs), focusing on the subclass of bounded-width branching programs. It shows that width-2 branching programs can be efficiently learned (both distribution-free and properly under the uniform distribution), while learning width-3 branching programs is at least as hard as learning disjunctive normal forms.

\textbf{Negative results.} The first hardness results for DFA were established in the works of \citet{ANGLUIN1978337,GOLD1978302,10.1145/138027.138042} under standard complexity assumptions. Subsequent works by \citet{10.1145/174644.174647} and \cite{kearns1994introduction} established representation-independent hardness results for finite automata within the PAC model by utilizing ``bad" distributions over the input so that any efficient learning algorithm that achieves a polynomial advantage over random guessing will break various cryptographic hardness assumptions. We remark that establishing a hardness result for the PAC-learnability of random DFAs, i.e., DFAs with transition functions and accepting states chosen uniformly at random, remains an open problem \citep{pmlr-v65-fish17a}.

\subsection{Learnability with margin-based guarantees}

A different line of work from the state-merging approach considers margin-based guarantees, a framework weaker than PAC learning \citep{10.1007/11894841_24,10.1007/978-3-540-72927-3_26,KONTOROVICH2008223,10.5555/1577069.1577108}. This approach embeds piecewise-testable languages and other discrete concepts into high-dimensional spaces using kernel methods and identifies languages via hyperplanes.

\subsection{Learnability based on queries}

The seminal $\textup{L}^*$ algorithm by \citet{angluin1987learning} showed that DFAs are efficiently learnable in polynomial time using membership and equivalence queries. This foundational result established a fundamental dichotomy: DFAs are easy to learn with an interactive teacher but hard to learn from random examples alone. The power of queries has been extensively studied by \citet{angluin1988queries}, with research exploring the query complexity of learning \citep{gavalda1997query} and establishing lower bounds on the number of queries required \citep{pmlr-v217-kruger23a}. This line of work has been extended to more complex automata models, such as register automata \citep{bollig2009fresh}, symbolic weighted automata \citep{suzuki2021query}, and nominal DFAs \citep{zhou2024querylearningadvicenominal}, and has been surveyed in works like \citep{isberner2015foundations}. More recently, researchers have explored using modern tools like Recurrent Neural Networks \citep{weiss2018extracting} and Large Language Models \citep{vazquez-chanlatte2024l} to act as oracles. Other works have demonstrated learnability with different powerful oracles, such as those that provide confidence scores \citep{wu2023learning}. Another approach has been to restrict the class of learnable automata, leading to positive SQ results for specific subclasses such as shuffle ideals \citep{angluin2013learning}. 
\section{Abstract algebra background}\label{appendix:background}
This section provides a concise overview of the essential concepts from group theory and representation theory that are used in our proofs. It is divided into two parts: in \Cref{sub:basics} we present foundational definitions and results from both theories, while in \Cref{sub:s_n} we focus specifically on the representation theory of the symmetric group $S_N$, which plays a central role in our analysis.

\subsection{Group and representation theory basics}
\label{sub:basics}
To enhance readability, this section is further divided into two subsections: in \Cref{subsub:gt} we present an overview of elementary definitions and facts from group theory, whereas in \Cref{subsub:rep} we provide a concise summary of key definitions and results from the representation theory of groups. The results presented in \Cref{subsub:gt} are standard and can be found in any introductory text on abstract algebra, such as \cite{lang2005algebra}. The results in \Cref{subsub:rep} are likewise standard within the representation theory literature and can be found, for instance, in \cite{serre1996linear}.
 
\subsubsection{Group theory overview}
\label{subsub:gt} 
We begin by defining the notion of a group:

\begin{definition}[Group]
    A \emph{group} is a non-empty set $G$ endowed with a binary operation $\star: G \times G \to G$ such that the following three axioms are satisfied:
    \begin{enumerate}
        \item \textbf{Associativity:} For all $a,b,c \in G$ one has $(a\star b)\star c = a \star (b \star c)$.
        \item \label{item:ident} \textbf{Identity element:} There exists an element $\id  \in G$ such that for all elements $g \in G$ one has $g \star \id = \id \star g = g$.
        \item \label{item:inv} \textbf{Inverse element:} For each element $g \in G$ there exists an element $h \in G$ such that $g \star h = h \star g = \id$.
    \end{enumerate}
\end{definition}

From the above definition, it follows that $\id$ is unique and that for each $g \in G$, the element $h \in G$ that satisfies \Cref{item:inv} is unique. Hence, we may write $g^{-1}$ to refer to that element. Furthermore, when it is clear from the context, for $g,h \in G$ we will write $gh$ instead of $g \star h$ and also use the notation $g^n := g \star \dots \star g$. The \emph{order} of $G$, denoted by $|G|$, is the size of the underlying set $G$. The \emph{order of an element} $g\in G$, denoted by $|g|$, is the smallest integer $n \in \NN$ such that $g^n = \id$. If no such $n$ exists, we say that $g$ has infinite order. The order of every element $g \in G$ divides the order of the group $G$ (Lagrange's theorem, Proposition 2.2 in \cite{lang2005algebra}), and in particular $|g| \leq |G|$. The latter shows that when $G$ is finite, every element has finite order and that $g^{|G|} = \id$. We can now define the notion of a homomorphism, which is a mapping between groups that respects their structure. 

\begin{definition}
    Let $(G, \star)$ and $(H, *)$ be groups. A \emph{homomorphism} is a mapping $f: G \to H$ such that for all $g_1, g_2 \in G$ one has 
    $$f(g_1 \star g_2) = f(g_1) * f(g_2).$$
\end{definition}

From the definition, it follows that $f$ maps the identity of $G$ to the identity of $H$, namely $f(\id_G) =\id_H$.\footnote{To see why, write $f(\id_G) =f(\id_G \id_G) = f(\id_G)^2$ and pre-multiply by $f(\id_G)^{-1}$.} When $f$ is bijective, we say that it is an \emph{isomorphism} and the groups $G$ and $H$ are called isomorphic. In that case we write $G \simeq H$. From an algebraic perspective, isomorphic groups are essentially the same group. An isomorphism $f: G \to G$ is called an \emph{automorphism} of $G$.

\begin{example}
\label{ex:grp} Here we present two key examples, which we will encounter frequently in this work:
    \begin{enumerate}[i)]
        \item The set of all permutations of ${1,2,\dots,N}$, equipped with the operation of function composition, where applying $\sigma \circ \pi$ means first applying $\pi$ followed by $\sigma$, forms a group, denoted by $S_N$. Its order is $|S_N| = N!$.
        \item Let $V$ be a vector space over a field $\mathbb{F}$. The set of automorphisms of $V$, i.e., the set of all bijective linear transformations $V\to V$, endowed with the function composition operation, is a group, denoted by $\GL(V)$. When $V$ is a finite-dimensional vector space of dimension $d < \infty$, the group $\GL(V)$ is isomorphic to the group $\GL(d, \mathbb{F})$ of invertible $d \times d$ matrices over $\mathbb{F}$ with the operation of matrix multiplication. The isomorphism maps each automorphism of $V$ to its matrix representation, allowing us to view elements of $\GL(V)$ as $d \times d$ invertible matrices with entries in $\mathbb{F}$.
    \end{enumerate}
\end{example}

Next, we define the direct product of two groups as follows:

\begin{definition}[Direct product]
    Let $(G, \star)$ and $(H, *)$ be groups. We endow the Cartesian product $G \times H$ with the binary operation $\circ :(G \times H) \times (G \times H) \to G\times H$ defined component-wise:
    $$(g_1, h_1) \circ (g_2, h_2) = (g_1 \star g_2, h_1 * h_2).
    $$
    The resulting structure $(G \times H, \circ)$ satisfies the group axioms and is called the \emph{direct product} of $G$ and $H$, denoted by $G \times H$.
\end{definition}

Lastly, we conclude this section by discussing conjugacy, a key property when studying group representations.

\begin{definition}[Conjugacy]
\label{def:conj}
    Let $G$ be a group and let $g, h \in G$. The elements $g$ and $h$ are called \emph{conjugate} if there exists $a \in G$  such that $h = aga^{-1}$. This is an equivalence relation whose equivalence classes are called \emph{conjugacy classes}.
\end{definition}

\subsubsection{Representation theory overview}
\label{subsub:rep}
At a high level, representation theory provides a powerful framework for studying groups by translating their abstract algebraic structure into the concrete language of linear algebra. We begin by presenting the definition of a representation:

\begin{definition}[Representation]
    Let $G$ be a group. A \emph{representation} of $G$ over a vector space $V$ over some field $\mathbb{F}$ is a homomorphism $\rho : G \to \GL(V)$. The dimension of the representation, denoted by $d_\rho$, is the dimension of the vector space $V$.
\end{definition}

From this point forward, we will assume that all groups are finite, all representations are finite-dimensional, and that $\mathbb{F}=\CC$. These assumptions hold in all cases relevant to our work. By the discussion in \Cref{ex:grp}, we can therefore view the image of each $g \in G$ under $\rho$, i.e. $\rho(g)$, as an invertible $d_\rho \times d_\rho$ complex matrix. From the definition, we immediately get that $\rho(\id) = I_{d_\rho}$ and that $\rho(g^{-1}) = \rho(g)^{-1}$. Furthermore, it can be shown that $\rho(g)$ can always be chosen to be \emph{unitary} (c.f. Section 1.3 in \cite{serre1996linear}), in which case $\rho(g^{-1}) = \rho(g)^*$.

\begin{example}
\label{ex:triv}
    The trivial representation, denoted by $\triv$, maps each group element to the $1 \times 1$ identity matrix. Hence, $d_{\triv} =1$.
\end{example}

We can define the notion of an isomorphism between two representations as follows: 

\begin{definition}[Isomorphic representations]
\label{def:isom_rep}
Two representations $(\rho, V), (\sigma, V')$ of a group $G$ are called isomorphic if there exists a linear isomorphism $\tau: V \to V'$ such that $\tau \circ \rho(g) = \rho'(g)\circ \tau$ for all $g\in G$. In that case, we write $\rho \simeq \sigma$.
\end{definition}
    
Next, we define the concept of irreducible representations. Maschke's theorem
(Theorem 2 in \cite{serre1996linear}) guarantees that every representation is isomorphic to a direct sum of irreducible representations, allowing us to restrict our study to only those representations that are irreducible.

\begin{definition}[Irreducible representation]
    A representation $\rho: G \to \GL(V)$ over a vector space $V$ is called \emph{irreducible} if the only subspaces of $V$ that are left invariant by every $\rho(g)$ are $\{0\}$ and $V$. In symbols, $\rho$ is irreducible if the following holds for any subspace $W \subseteq V$:
    $$\rho(g)(W) \subseteq W \;\; \text {for all } g \in G \implies W = \{0\} \text{ or } W = V$$
    We use the abbreviation \emph{irreps} to refer to the irreducible representations. The set of all irreducible unitary representations of $G$ is denoted by $\hat{G}$.
\end{definition}

Next, we define the direct sum of two representations.

\begin{definition}[Direct sum of representations]
\label{def:direct_sum}
    Let $\rho_1: G \to \GL(V_1)$ and $\rho_2: G \to \GL(V_2)$ be two representations of $G$. Their direct sum is defined as the representation $\rho_1 \oplus \rho_2: G \to \GL(V_1 \oplus V_2)$ and its action is given by
    $$(\rho_1 \oplus \rho_2)(g) = \begin{bmatrix}
        \rho_1(g) & 0 \\ 0 & \rho_2(g)
    \end{bmatrix}$$
    for all $g\in G$. The definition trivially extends to a finite number of representations.
\end{definition}

In what follows, we characterize the irreducible representations of the direct product via the irreducible representations of the individual groups. To do so, we need to define the tensor product representation:

\begin{definition}[Tensor product of representations]
    Let $G$ and $H$ be two groups and let $\rho: G \to \GL(V_1)$ and $\sigma: H\to \GL(V_2)$ be representations of $G$ and $H$, respectively. We define the \emph{tensor product representation} $\rho \otimes \sigma$ of the direct product $G\times H$ over $V_1 \otimes V_2$ as the representation
\begin{align*}
    \rho \otimes \sigma : G \times H &\to \GL(V_1\otimes V_2) \\
    (g, h) &\mapsto \rho(g) \otimes \sigma(h)
\end{align*}
The dimension of $\rho \otimes \sigma$ is $d_{\rho\otimes \sigma} = \dim(V_1\otimes V_2) = \dim(V_1)\dim(V_2) = d_\rho d_\sigma$.
\end{definition}

 For finite groups, the action of the tensor product representation $\rho \otimes \sigma$ on an element $(g,h) \in G\times H$ can be realized as the Kronecker product of the corresponding matrices $\rho(g)$ and $\sigma(h)$. As such, standard Kronecker product properties apply.

The following theorem yields a complete characterization of the irreducible representations of $G \times H$.

\begin{theorem}[Theorem 10 in \cite{serre1996linear}]
\label{thm:tensor_prod}
    Every irreducible (unitary) representation of $G \times H$ is isomorphic to a tensor product of irreducible (unitary) representations of $G$ and $H$. Symbolically, if $\Pi = G \times H$ we have
    $$\hat{\Pi} \simeq \left\{\rho \otimes \sigma: \rho\in \hat{G}, \sigma \in \hat{H}\right\}.$$
\end{theorem}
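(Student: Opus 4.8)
The plan is to prove this classical result via character theory, in three steps: first I would show that each tensor product $\rho \otimes \sigma$ with $\rho \in \hat{G}$ and $\sigma \in \hat{H}$ is an irreducible representation of $G \times H$; then that inequivalent pairs yield inequivalent representations; and finally, by a dimension count, that these exhaust all the irreducibles of $G \times H$. Throughout, I write $\chi_\rho = \Tr \circ \rho$ for the character of a representation $\rho$, and use the standard criterion that a complex representation $\rho$ of a finite group $K$ is irreducible if and only if $\langle \chi_\rho, \chi_\rho \rangle_K = 1$, where $\langle \phi, \psi \rangle_K = \frac{1}{|K|}\sum_{g \in K} \phi(g)\overline{\psi(g)}$ is the averaged inner product on class functions.

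For the first step, note that the tensor product representation sends $(g,h)$ to the Kronecker product $\rho(g)\otimes\sigma(h)$, so multiplicativity of the trace under Kronecker products gives $\chi_{\rho\otimes\sigma}(g,h) = \chi_\rho(g)\,\chi_\sigma(h)$. Then
\[
\langle \chi_{\rho\otimes\sigma}, \chi_{\rho\otimes\sigma}\rangle_{G\times H} = \frac{1}{|G|\,|H|}\sum_{g\in G}\sum_{h\in H} |\chi_\rho(g)|^2\,|\chi_\sigma(h)|^2 = \langle \chi_\rho,\chi_\rho\rangle_G\,\langle\chi_\sigma,\chi_\sigma\rangle_H = 1
\]
by irreducibility of $\rho$ and $\sigma$, so $\rho\otimes\sigma$ is irreducible. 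The same factorization handles the second step: $\langle \chi_{\rho\otimes\sigma}, \chi_{\rho'\otimes\sigma'}\rangle_{G\times H} = \langle \chi_\rho,\chi_{\rho'}\rangle_G\,\langle \chi_\sigma,\chi_{\sigma'}\rangle_H$, which by orthonormality of irreducible characters equals $1$ when $\rho\simeq\rho'$ and $\sigma\simeq\sigma'$, and $0$ otherwise; hence $(\rho,\sigma)\mapsto\rho\otimes\sigma$ produces $|\hat{G}|\cdot|\hat{H}|$ pairwise non-isomorphic irreducibles of $G\times H$.

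For the third step I would invoke the identity $\sum_{\pi\in\hat{K}} d_\pi^2 = |K|$, valid for every finite group $K$. Applied to $G$ and $H$, together with $d_{\rho\otimes\sigma}=d_\rho d_\sigma$, it gives
\[
\sum_{\rho\in\hat{G}}\sum_{\sigma\in\hat{H}} d_{\rho\otimes\sigma}^2 = \Big(\sum_{\rho\in\hat{G}} d_\rho^2\Big)\Big(\sum_{\sigma\in\hat{H}} d_\sigma^2\Big) = |G|\cdot|H| = |G\times H|.
\]
Since the representations from the second step are pairwise non-isomorphic irreducibles of $G\times H$ whose squared dimensions already saturate $|G\times H|$, the same identity applied to $K=G\times H$ leaves no room for a further irreducible, so every irreducible representation of $G\times H$ is isomorphic to some $\rho\otimes\sigma$. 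The main obstacle is exactly this third step: the first two are routine once the character formalism is in place, whereas completeness rests on the full orthogonality theory of characters (equivalently, that the irreducible characters form an orthonormal basis of the space of class functions). A cleaner alternative for the third step is to observe that a conjugacy class of $G\times H$ is precisely a product $C\times D$ of a conjugacy class $C$ of $G$ and a conjugacy class $D$ of $H$; thus $G\times H$ has $|\hat{G}|\cdot|\hat{H}|$ conjugacy classes and hence exactly $|\hat{G}|\cdot|\hat{H}|$ irreducibles, matching the count from the first two steps.
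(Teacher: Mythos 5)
The paper does not prove this statement at all: it is imported verbatim as Theorem 10 of Serre's \emph{Linear Representations of Finite Groups}, so there is no internal proof to compare against. Your argument is correct and is essentially the standard textbook proof that Serre himself gives: multiplicativity of characters under the Kronecker product, the norm-one criterion $\langle \chi_\pi,\chi_\pi\rangle=1$ for irreducibility, orthogonality to separate inequivalent pairs, and a completeness step — your dimension count $\sum_\pi d_\pi^2=|K|$ (or, equivalently, your count of conjugacy classes of $G\times H$ as products $C\times D$) does the same job as Serre's argument that any class function on $G\times H$ orthogonal to all products $\chi_\rho\chi_\sigma$ vanishes. All three routes to completeness are legitimate and rest on the same orthogonality theory, so there is no gap; the only point worth flagging is that everything here uses that the ground field is $\CC$ (or at least algebraically closed of characteristic zero), which is exactly the setting the paper works in.
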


We now move to the study of characters, an essential tool in representation theory that captures key information about a representation. 

\begin{definition}[Character]
    Let $\rho: G \to \GL(V)$ be a representation of a group $G$ over a vector space $V$. Define the \emph{character} of $\rho$ as the mapping $\chi_\rho : G \to \CC$ given by $\chi_\rho(g) = \Tr[\rho(g)]$ for each $g \in G$.
\end{definition}

\begin{definition}[Class function]
    A function $f: G \to \CC$ that is constant on every conjugacy class of $G$ is called a \emph{class function}.
\end{definition}

From the cyclic property of the trace, we immediately get the following:

\begin{remark}
    The character of any representation is a class function.
\end{remark}

 By \Cref{def:isom_rep} we can also deduce the following:

\begin{remark}
    The characters of two isomorphic representations of $G$ are equal. Namely, if $\rho \simeq \sigma$ then $\chi_{\rho}(g) = \chi_{\sigma}(g)$ for all $g \in G$.
\end{remark}
 \begin{proof}
     Since $\rho \simeq \sigma$, there exists an invertible matrix $P$ such that $\sigma(g) = P\rho(g)P^{-1}$ for all $g\in G$. Hence,
     $$\chi_{\sigma}(g)=\Tr(\sigma(g)) = \Tr(P\rho(g)P^{-1})=\Tr(P^{-1}P\rho(g))=\Tr(\rho(g)) = \chi_{\rho}(g).$$
 \end{proof}
For two complex-valued functions $f:G\to \CC$ and $h:G \to \CC$ we define a scalar product as

\begin{equation}
\label{eq:inner_prod}
\langle f, h\rangle = \frac{1}{|G|}\sum_{g\in G} f(g) \overline{h(g)}.
\end{equation}

It can be shown that \Cref{eq:inner_prod} defines an inner product when restricted to the set of class functions. In particular, for the case of characters, we have the following theorem:

\begin{theorem}[Orthogonality relations, Theorem 3 in \cite{serre1996linear}]
\label{thm:orhtogonality}
If $\chi$ is the character of an irreducible representation of $G$, then $\langle \chi, \chi\rangle = 1$. Furthermore, if $\chi, \chi'$ are the characters of two non-isomorphic irreducible representations of $G$, then $\langle\chi, \chi'\rangle = 0$.
\end{theorem}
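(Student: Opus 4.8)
The plan is to derive the character orthogonality relations from Schur's Lemma via the standard averaging construction on matrix coefficients. First I would fix unitary realizations $\rho: G \to \GL(V)$ and $\sigma: G \to \GL(W)$ of the two irreducibles; this is legitimate since every complex representation of a finite group is isomorphic to a unitary one (average an arbitrary inner product over $G$), and characters are invariant under isomorphism by the remark preceding the theorem. Writing $d = d_\rho$ for the dimension, the goal reduces to computing $\langle \chi_\rho, \chi_\sigma \rangle = \frac{1}{|G|}\sum_{g \in G} \Tr[\rho(g)]\,\overline{\Tr[\sigma(g)]}$ in terms of matrix entries.

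Next, for an arbitrary linear map $A: W \to V$ I would form the averaged operator $\widetilde{A} = \frac{1}{|G|}\sum_{g \in G} \rho(g)\, A\, \sigma(g)^{-1}$, which by a one-line reindexing satisfies $\rho(h)\widetilde{A} = \widetilde{A}\,\sigma(h)$ for all $h \in G$, i.e.\ $\widetilde{A}$ is an intertwiner. Schur's Lemma then splits into two regimes: if $\rho \not\simeq \sigma$, every intertwiner vanishes, so $\widetilde{A} = 0$; if $\rho = \sigma$, every intertwiner is a scalar multiple of the identity, $\widetilde{A} = \lambda(A)\, I_V$, and taking traces forces $\lambda(A) = \Tr(A)/d$. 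Specializing $A$ to the elementary matrices $E_{jk}$ and reading off the $(i,\ell)$ entry of $\widetilde{A}$ yields the matrix-coefficient orthogonality relations: $\frac{1}{|G|}\sum_g \rho(g)_{ij}\,\sigma(g^{-1})_{k\ell} = 0$ in the non-isomorphic case, and $\frac{1}{|G|}\sum_g \rho(g)_{ij}\,\rho(g^{-1})_{k\ell} = \frac{1}{d}\delta_{i\ell}\delta_{jk}$ in the equal case. Using unitarity, $\sigma(g^{-1})_{k\ell} = \overline{\sigma(g)_{\ell k}}$, so these are statements about $\frac{1}{|G|}\sum_g \rho(g)_{ij}\overline{\sigma(g)_{\ell k}}$.

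Finally I would contract indices. Since $\Tr[\rho(g)] = \sum_i \rho(g)_{ii}$ and $\Tr[\sigma(g)] = \sum_j \sigma(g)_{jj}$, we obtain $\langle \chi_\rho, \chi_\sigma\rangle = \sum_{i,j} \frac{1}{|G|}\sum_g \rho(g)_{ii}\overline{\sigma(g)_{jj}}$. When $\rho \not\simeq \sigma$, each inner sum is $0$, hence $\langle \chi_\rho, \chi_\sigma\rangle = 0$. When $\rho = \sigma$, the inner sum equals $\frac{1}{d}\delta_{ij}$, so the double sum collapses to $\sum_{i=1}^{d} \frac{1}{d} = 1$, giving $\langle \chi_\rho,\chi_\rho\rangle = 1$.

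The main obstacle is Schur's Lemma itself, the only non-formal ingredient: the non-isomorphic case uses that the kernel and image of an intertwiner are invariant subspaces (so an intertwiner between irreducibles is either $0$ or an isomorphism), and the scalar case crucially uses that $\CC$ is algebraically closed — an intertwiner $V \to V$ has an eigenvalue $\lambda$, and $\widetilde{A} - \lambda I$ is then a non-invertible intertwiner, hence zero. A secondary technical point is justifying the reduction to unitary representations and verifying the intertwining identity for $\widetilde{A}$, but both are routine once Schur's Lemma is in hand.
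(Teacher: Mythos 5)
Your proof is correct, and it is the standard argument: the paper does not prove this statement at all (it is imported verbatim as Theorem 3 of Serre's book), so there is no in-paper proof to diverge from. Note that your intermediate step — averaging $A$ to get an intertwiner, applying Schur's Lemma, and specializing to elementary matrices — is precisely the matrix-coefficient orthogonality that the paper states separately as the Great Orthogonality Theorem (\Cref{thm:GOT}); so an equally economical route, given the paper's setup, would be to quote \Cref{thm:GOT} directly and perform only your final index contraction $\langle \chi_\rho,\chi_\sigma\rangle=\sum_{i,j}\frac{1}{|G|}\sum_{g}\rho(g)_{ii}\overline{\sigma(g)_{jj}}$, which collapses to $0$ or $\sum_i \frac{1}{d}=1$ exactly as you computed.
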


Next, we present Schur's orthogonality relations, also known as the ``Great Orthogonality Theorem", which provides a powerful tool for simplifying several calculations in our proofs:

\begin{theorem}[Great Orthogonality Theorem, Corollaries 1-3 in \cite{serre1996linear}]
\label{thm:GOT}
   Let $\rho$ and $\sigma$ be two non-isomorphic irreducible unitary representations $\rho$ and $\sigma$ of $G$. Let $\rho(g)_{ij}$ and $\sigma(g)_{kl}$ denote matrix elements of $\rho(g)$ and $\sigma(g)$, respectively. Then
    $$\sum_{g\in G} \rho(g)^*_{ij} \sigma(g)_{kl} = 0.$$ 
    For matrices of the same irreducible unitary representation $\rho$, the relation is:
    $$\sum_{g\in G} \rho(g)^*_{ij} \rho(g)_{kl} = \frac{|G|}{d_\rho} \delta_{il} \delta_{jk},$$
    where $\delta$ denotes the Kronecker delta and $A^*$ the conjugate transpose of a matrix $A$.
\end{theorem}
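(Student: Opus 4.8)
The plan is to obtain both identities as consequences of Schur's Lemma through the standard group-averaging construction. Fix irreducible unitary representations $\rho: G \to \GL(V)$ and $\sigma: G \to \GL(W)$ of dimensions $d_\rho$ and $d_\sigma$, and for an arbitrary $d_\sigma \times d_\rho$ matrix $A$ set
$$B_A := \frac{1}{|G|}\sum_{g\in G} \sigma(g)\, A\, \rho(g)^{-1}.$$
The first step is to verify that $B_A$ is an intertwiner, i.e. $\sigma(h)B_A = B_A\rho(h)$ for every $h\in G$: replacing $g$ by $hg$ in the sum (which merely permutes the summands) and using that $\rho$ and $\sigma$ are homomorphisms immediately gives the identity. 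With this in hand, the two cases of the theorem correspond to the two forms of Schur's Lemma, which I take as available from the representation-theory background (a generalization is recorded as \Cref{corr:schur_gen}).

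If $\rho\not\simeq\sigma$, Schur's Lemma forces $B_A = 0$ for every $A$. Taking $A$ to be the elementary matrix with a single $1$ in position $(l,i)$ and zeros elsewhere, the identity $(B_A)_{kj} = 0$ reads, after using unitarity to rewrite the entries of $\rho(g)^{-1}$ as conjugates of entries of $\rho(g)$ (that is, $\rho(g)^{-1} = \rho(g)^*$ entrywise), as $\sum_{g\in G}\rho(g)^*_{ij}\sigma(g)_{kl} = 0$, which is the first relation. If instead $\rho = \sigma$ (so $V = W$ and $d_\rho = d_\sigma$), then Schur's Lemma over $\CC$ gives $B_A = \lambda(A)\, I_{d_\rho}$ for a scalar $\lambda(A)$; taking traces and using the conjugation-invariance $\Tr(\sigma(g)A\sigma(g)^{-1}) = \Tr(A)$ yields $\lambda(A) = \Tr(A)/d_\rho$. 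Specializing again to the elementary matrix in position $(l,i)$, whose trace is $\delta_{il}$, and reading off the $(k,j)$ entry of $B_A = \lambda(A)\, I_{d_\rho}$ gives $\sum_{g\in G}\rho(g)^*_{ij}\rho(g)_{kl} = \tfrac{|G|}{d_\rho}\delta_{il}\delta_{jk}$, the second relation.

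The content of the argument lies entirely in Schur's Lemma and in the averaging/intertwining identity; everything else is bookkeeping. The one place that demands care is precisely that bookkeeping: one must track which matrix index of $B_A$ plays the role of each of $i,j,k,l$ in the statement, and convert the entries of $\rho(g)^{-1}$ into conjugated entries of $\rho(g)$ via unitarity with the correct transpose, so that the Kronecker pattern that falls out is $\delta_{il}\delta_{jk}$ rather than some other pairing. I expect this index-matching to be the only real obstacle, and it is routine.
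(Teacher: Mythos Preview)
The paper does not give its own proof of this statement; it is recorded as a background fact with a citation to Serre. Your argument via the averaged intertwiner $B_A$ and Schur's Lemma (\Cref{lemm:schur}) is correct and is exactly the standard proof that appears in Serre's Corollaries~1--3, so there is nothing to compare.
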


The strength of representation theory lies in its ability to provide a clean analogue of Fourier analysis when working on groups. This perspective proves especially valuable when analyzing random processes on groups. We provide the definition below:

\begin{definition}[Fourier transform]
\label{def:fourier}
    Let $G$ be a group and let $\rho: G \to \GL(V)$ be a representation. The \emph{Fourier transform} is a matrix valued function acting on $G$ given by 
    $$\rho(f) = \sum_{g\in G} f(g)\rho(g).$$
\end{definition}

The convolution of two functions acting on the same group is defined as follows:

\begin{definition}[Convolution]
    Let $G$ be a group and let $f_1: G \to \CC$ and $f_2: G \to \CC$ be functions acting on $G$. The convolution $f_1 * f_2 : G \to \CC$ is given by 
    $$(f_1 * f_2)(g) = \sum_{h \in G} f_2(gh^{-1})f_1(h).$$
\end{definition}

The following textbook result gives the Fourier transform of the convolution of two functions in terms of their individual Fourier transforms:

\begin{lemma}[Lemma 1 in \cite{diaconis}]
\label{lemm:conv}
    Let $G$ be a group and let $f_1$, $f_2$ be functions acting on $G$. Then for any representation $\rho$ we have 
    $$\rho(f_1 * f_2) = \rho(f_1) \rho(f_2).$$
\end{lemma}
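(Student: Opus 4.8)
The plan is a direct computation from the definitions; the only structural fact used is that $\rho$ is a group homomorphism, and since $G$ is finite there are no convergence issues — every step is a finite algebraic rearrangement.

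I would work from the right-hand side. By \Cref{def:fourier}, $\rho(f_1)\rho(f_2) = \big(\sum_{a\in G} f_1(a)\rho(a)\big)\big(\sum_{b\in G} f_2(b)\rho(b)\big) = \sum_{a,b\in G} f_1(a)\, f_2(b)\,\rho(a)\rho(b)$, where I use that the $f_i$ are scalar-valued so the scalars pull out of the matrix products. Because $\rho$ is a homomorphism, $\rho(a)\rho(b) = \rho(ab)$, so this equals $\sum_{a,b\in G} f_1(a)\, f_2(b)\,\rho(ab)$.

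Next I would reindex: for each fixed $a$, the map $b\mapsto ab$ is a bijection of $G$, so writing $g = ab$ (equivalently $b = a^{-1}g$) I can replace the sum over $b$ by a sum over $g$. This gives $\sum_{g\in G}\big(\sum_{a\in G} f_1(a)\, f_2(a^{-1}g)\big)\rho(g)$, and the inner sum is exactly $(f_1 * f_2)(g)$ by the definition of convolution (with $h=a$). Hence the whole expression equals $\sum_{g\in G}(f_1 * f_2)(g)\,\rho(g) = \rho(f_1 * f_2)$, which is the claim.

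The argument has no genuine obstacle — it is essentially bookkeeping — but two small points deserve care. First, the homomorphism identity must be used in the right order, $\rho(a)\rho(b) = \rho(ab)$ rather than $\rho(ba)$, since $G$ is non-abelian in all our applications. Second, the change of variables must be kept consistent with the order of the arguments in the definition of convolution, so that the collapsed inner sum is recognized as $(f_1 * f_2)(g)$; the opposite reindexing produces $(f_2 * f_1)(g)$, i.e. $\rho(f_2)\rho(f_1)$. In the only use we make of the lemma — $T$-fold self-convolutions of a single-step distribution — this distinction is immaterial, but it is worth pinning down the factor order in the statement.
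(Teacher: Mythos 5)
Your algebra is the standard computation and is fine up to the very last identification, but that identification is exactly where the order-of-arguments issue you flag bites, and you resolve it the wrong way around relative to the paper's printed definition. The paper defines $(f_1 * f_2)(g) = \sum_{h\in G} f_2(gh^{-1})\,f_1(h)$, i.e.\ the \emph{second} function is evaluated at $gh^{-1}$. Your inner sum after reindexing is $\sum_{a\in G} f_1(a)\,f_2(a^{-1}g)$; substituting $h = a^{-1}g$ (so $a = gh^{-1}$) rewrites it as $\sum_{h\in G} f_1(gh^{-1})\,f_2(h)$, which by the printed definition is $(f_2 * f_1)(g)$, not $(f_1 * f_2)(g)$ — and for non-abelian $G$ the two genuinely differ (they agree if, say, $f_2$ is a class function, since $gh^{-1}$ and $h^{-1}g$ are conjugate). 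So what your argument literally establishes is $\rho(f_1)\rho(f_2) = \rho(f_2 * f_1)$, equivalently $\rho(f_1 * f_2) = \rho(f_2)\rho(f_1)$ under the paper's convention; your closing remark that the "opposite" reindexing is the one producing $(f_2*f_1)$ has it backwards.

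The root cause is a convention mismatch in the paper rather than in your computation: the printed convolution swaps the roles of $f_1,f_2$ relative to Diaconis's convention $(P*Q)(s) = \sum_t P(st^{-1})Q(t)$, under which the stated identity $\rho(f_1*f_2) = \rho(f_1)\rho(f_2)$ is correct and your proof goes through verbatim; equivalently, your reindexing matches the convention $(f_1*f_2)(g) = \sum_h f_1(h)\,f_2(h^{-1}g)$. Since the paper cites the lemma from \cite{diaconis} without proof, there is no internal argument to compare against; to make your write-up airtight you should either prove the statement under the standard convention and note the discrepancy with the printed definition, or state the conclusion as $\rho(f_2)\rho(f_1)$ for the definition as printed. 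As you correctly observe, the distinction is immaterial for the paper's only use of the lemma, namely $\Pi(P_T) = M_\Pi^T$ for $T$-fold self-convolutions of a single single-step distribution, where both factor orders coincide.
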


By induction, \Cref{lemm:conv} generalizes for the convolution of more than two functions. Concluding this section, we present a series of standard lemmas that we use in our analysis:

\begin{lemma}[Plancherel formula, Exercise 3.32 in \cite{fulton2013representation}]
\label{lemm:planch}
    Let $G$ be a group and $f, h: G \to \CC$. Then 
    $$\sum_{g\in G} f(g)\overline{h(g)} = \frac{1}{|G|}\sum_{\rho \in \hat{G}} d_\rho \Tr[\rho(f) \rho(h)^*].$$
     The sum of the right-hand side is taken over all irreducible unitary representations of $G$.
\end{lemma}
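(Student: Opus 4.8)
The plan is to recognize the Plancherel formula as the Parseval identity for the Fourier transform on the finite group $G$ (\Cref{def:fourier}), and to prove it by unfolding both sides into matrix coefficients and collapsing the resulting double sum with a single structural fact about the regular representation.

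First I would expand the right-hand side. Since every $\rho\in\hat G$ may be taken unitary, $\rho(h)^* = \sum_{g'}\overline{h(g')}\,\rho(g')^* = \sum_{g'}\overline{h(g')}\,\rho(g'^{-1})$, and because $\rho$ is a homomorphism,
$$\rho(f)\,\rho(h)^* = \sum_{g,g'} f(g)\,\overline{h(g')}\,\rho(gg'^{-1}).$$
Taking traces and using $\Tr\rho(x) = \chi_\rho(x)$ gives
$$\frac{1}{|G|}\sum_{\rho\in\hat G} d_\rho\,\Tr[\rho(f)\rho(h)^*] \;=\; \frac{1}{|G|}\sum_{g,g'} f(g)\,\overline{h(g')}\sum_{\rho\in\hat G} d_\rho\,\chi_\rho(gg'^{-1}).$$
The next, and essentially only substantive, step is the identity $\sum_{\rho\in\hat G} d_\rho\,\chi_\rho(x) = |G|\cdot\indic[x=\id]$ for all $x\in G$. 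This is exactly the character of the left-regular representation $R$ of $G$ on $\CC[G]$: by Maschke's theorem $R$ decomposes into irreducibles, and a standard multiplicity count using the character orthogonality relations (\Cref{thm:orhtogonality}) together with $\sum_\rho d_\rho^2 = |G|$ shows each $\rho$ occurs with multiplicity exactly $d_\rho$, so $\chi_R = \sum_\rho d_\rho\chi_\rho$; on the other hand, computing $\chi_R(x)$ directly from the permutation action $e_g\mapsto e_{xg}$ on the basis $\{e_g\}_{g\in G}$ counts fixed points, which is $|G|$ when $x=\id$ and $0$ otherwise. Substituting this with $x = gg'^{-1}$ forces $g=g'$, and the double sum collapses to $\sum_g f(g)\overline{h(g)}$, which is the left-hand side.

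The main obstacle is precisely establishing $\sum_\rho d_\rho\chi_\rho = |G|\,\indic[\,\cdot=\id\,]$, i.e. pinning down the multiplicities of the regular representation; everything else is routine manipulation of the Fourier transform and the trace. An alternative route avoids naming the regular representation: one shows directly from the Great Orthogonality Theorem (\Cref{thm:GOT}) that the rescaled matrix entries $\{\sqrt{d_\rho/|G|}\,\rho(\cdot)_{ij}\}$ form an orthonormal system in $L^2(G)$ for the inner product of \Cref{eq:inner_prod}, that it is complete because it has $\sum_\rho d_\rho^2 = |G| = \dim L^2(G)$ members, and then reads off Parseval by expanding $f$ and $h$ in this basis. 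Since this merely relocates the same dimension count, I would present the regular-representation argument as the cleaner path, and note that in the paper itself the result is simply quoted from \cite{fulton2013representation}.
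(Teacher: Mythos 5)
Your proof is correct. Note that the paper does not actually prove this lemma---it is quoted directly as Exercise 3.32 of \cite{fulton2013representation}---so there is no in-paper argument to compare against; you have supplied the standard derivation. Your steps check out: unitarity and the homomorphism property give $\Tr[\rho(f)\rho(h)^*]=\sum_{g,g'}f(g)\overline{h(g')}\chi_\rho(gg'^{-1})$, and the double sum collapses via the regular-representation identity $\sum_{\rho\in\hat G} d_\rho\chi_\rho(x)=|G|\,\indic[x=\id]$, which you justify without circularity (Maschke plus character orthogonality yield multiplicity $\langle\chi_R,\chi_\rho\rangle=d_\rho$, while the permutation action on the basis $\{e_g\}$ gives $\chi_R(x)=|G|\,\indic[x=\id]$ directly). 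One small remark: the fact $\sum_\rho d_\rho^2=|G|$ is not needed in this route---it is a consequence of the multiplicity count, not an ingredient---and only becomes essential in your alternative matrix-coefficient argument, where it supplies the completeness of the orthonormal system of rescaled matrix entries.
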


\begin{lemma}[Schur's lemma, Proposition 4 in \cite{serre1996linear}]
\label{lemm:schur}
   Let $\rho_1: G \to \GL(V_1)$ and $\rho_2: G \to \GL(V_2)$ be irreducible representations of $G$, and let $M$ be a matrix that $\rho_2(g)M = M\rho_1(g)$ for all $g\in G$. Then:
   \begin{enumerate}
   \item If $\rho_1$ and $\rho_2$ are not isomorphic, we have $M = 0$
   \item If $V_1 = V_2$ and $\rho_1= \rho_2$, then $M = cI$ for some $c\in \CC$.
   \end{enumerate}
\end{lemma}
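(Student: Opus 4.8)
The plan is the classical two-part argument that exploits irreducibility through invariant subspaces, together with---for the second part---the algebraic closedness of $\CC$.

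\textbf{Part 1 (non-isomorphic case).} First I would observe that the intertwining relation $\rho_2(g)M = M\rho_1(g)$ forces both $\ker M \subseteq V_1$ and $\operatorname{im} M \subseteq V_2$ to be invariant subspaces. Indeed, if $v \in \ker M$ then $M(\rho_1(g)v) = \rho_2(g)(Mv) = 0$, so $\rho_1(g)v \in \ker M$; and if $w = Mv \in \operatorname{im} M$ then $\rho_2(g)w = \rho_2(g)Mv = M(\rho_1(g)v) \in \operatorname{im} M$. Since $\rho_1$ is irreducible, $\ker M$ is either $\{0\}$ or $V_1$; since $\rho_2$ is irreducible, $\operatorname{im} M$ is either $\{0\}$ or $V_2$. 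Assuming $M \neq 0$ rules out $\ker M = V_1$ and $\operatorname{im} M = \{0\}$, so $M$ is both injective and surjective, hence a linear isomorphism $V_1 \to V_2$. Then $\rho_2(g) = M\rho_1(g)M^{-1}$ for all $g \in G$, which exhibits $M$ as an isomorphism of representations in the sense of \Cref{def:isom_rep}, i.e.\ $\rho_1 \simeq \rho_2$---contradicting the hypothesis. Therefore $M = 0$.

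\textbf{Part 2 (equal case).} Now $V_1 = V_2 = V$ and $\rho_1 = \rho_2 = \rho$. Since $V$ is a nonzero finite-dimensional vector space over $\CC$, the operator $M$ admits an eigenvalue $\lambda \in \CC$. I would then apply the invariance argument of Part 1 to $M - \lambda I$: it satisfies the same intertwining relation, because $\rho(g)(M - \lambda I) = \rho(g)M - \lambda\rho(g) = M\rho(g) - \lambda\rho(g) = (M - \lambda I)\rho(g)$. Hence $\ker(M - \lambda I)$ is $\rho$-invariant; it is nonzero since $\lambda$ is an eigenvalue, so irreducibility forces $\ker(M - \lambda I) = V$, i.e.\ $M - \lambda I = 0$. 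Thus $M = \lambda I$, and we take $c = \lambda$.

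\textbf{The main obstacle.} There is none of real substance here---this is a foundational result whose proof is essentially bookkeeping with invariant subspaces. The one point that genuinely matters is the appeal in Part 2 to $\CC$ being algebraically closed in order to produce the eigenvalue $\lambda$; over a field that is not algebraically closed the second conclusion can fail, so it is worth flagging that the standing assumption $\FF = \CC$ from the preliminaries is used here in an essential way.
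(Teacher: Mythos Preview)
Your argument is correct and is exactly the classical proof of Schur's lemma. The paper does not supply its own proof of this statement---it is cited from Serre's text---and the proof there proceeds in the same way: invariance of $\ker M$ and $\operatorname{im} M$ for Part 1, and the eigenvalue trick over $\CC$ for Part 2.
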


Schur's Lemma gives rise to a useful corollary in the case where a representation $\rho: G \to \GL(V)$ can be decomposed into a direct sum of non-isomorphic irreps $(\rho_1, V_1),\dots, (\rho_k, V_k)$, i.e., $\rho=\rho_1 \oplus \dots \oplus \rho_k$ and $\rho_i \not\simeq \rho_j$ for all $i\neq j$:

\begin{corollary}[Schur's Lemma for reducible representations]
\label{corr:schur_gen}
    Let $\rho : G \to \GL(V)$ be a representation of $G$ that decomposes as a direct sum of non-isomorphic irreps $(\rho_1, V_1), \dots, (\rho_k, V_k)$, and let $M$ be a matrix such that $\rho(g)M=M\rho(g)$ for all $g\in G$. Then 
    $$M = \begin{bmatrix}c_1 I_{d_{\rho_1}} & 0 &\dots &0 \\ 0&  c_2I_{d_{\rho_2}} & \dots &0 \\ \vdots & & \ddots & \vdots \\ 
    0 & \dots &0 & c_k I_{d_{\rho_k}}\end{bmatrix}$$
    for some constants $c_1,\dots, c_k \in \CC$.
\end{corollary}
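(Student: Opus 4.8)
The plan is to decompose $M$ into blocks according to the direct sum decomposition $V = V_1 \oplus \dots \oplus V_k$ and apply the irreducible version of Schur's Lemma (\Cref{lemm:schur}) to each block separately. Write $M = (M_{ij})_{1 \le i,j \le k}$, where $M_{ij} \colon V_j \to V_i$ is the block of $M$ sending $V_j$ into $V_i$ (formally, $M_{ij}$ is the composition of the inclusion $V_j \hookrightarrow V$, the map $M$, and the projection $V \twoheadrightarrow V_i$). Since $\rho = \rho_1 \oplus \dots \oplus \rho_k$, \Cref{def:direct_sum} tells us that $\rho(g)$ is block-diagonal with $i$-th diagonal block $\rho_i(g)$. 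Hence the matrix identity $\rho(g) M = M \rho(g)$, read block by block, becomes $\rho_i(g) M_{ij} = M_{ij} \rho_j(g)$ for every $g \in G$ and every pair $(i,j)$.

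For $i \neq j$, the representations $\rho_i$ and $\rho_j$ are irreducible and non-isomorphic by hypothesis, so the first part of \Cref{lemm:schur}, applied with the matching $\rho_2 = \rho_i$, $\rho_1 = \rho_j$, and intertwiner $M = M_{ij}$, forces $M_{ij} = 0$. For $i = j$, the relation $\rho_i(g) M_{ii} = M_{ii} \rho_i(g)$ holds for all $g \in G$, so the second part of \Cref{lemm:schur} yields $M_{ii} = c_i I_{d_{\rho_i}}$ for some scalar $c_i \in \CC$ (which may depend on $i$). Collecting these observations, $M$ is block-diagonal with $i$-th block $c_i I_{d_{\rho_i}}$, which is precisely the claimed form.

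There is essentially no serious obstacle here; the argument is a direct reduction to the irreducible case. The only points demanding a little care are the bookkeeping — ensuring the block indices in the commutation relation $\rho_i(g) M_{ij} = M_{ij}\rho_j(g)$ are matched to the correct sides in the hypothesis of \Cref{lemm:schur}, which is stated in the form $\rho_2(g) M = M \rho_1(g)$ — and the observation that $\rho(g)$ is genuinely block-diagonal in the chosen adapted basis, so that block multiplication is valid. If one prefers, the entire argument can be phrased basis-free by regarding $M$ as an element of $\operatorname{Hom}_G$-spaces between the summands, but the matrix formulation above is the most economical given the form in which \Cref{lemm:schur} is already available.
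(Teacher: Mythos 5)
Your proposal is correct and follows essentially the same route as the paper's own proof: block-decompose $M$ with respect to $V_1 \oplus \dots \oplus V_k$, read the commutation relation $\rho(g)M = M\rho(g)$ blockwise to get $\rho_i(g)M_{ij} = M_{ij}\rho_j(g)$, and apply the two cases of \Cref{lemm:schur} to kill the off-diagonal blocks and force the diagonal ones to be scalar. No gaps; the bookkeeping points you flag are exactly the ones the paper handles implicitly.
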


\begin{proof}
Recall that by \Cref{def:direct_sum} we have 
$$\rho(g)=\begin{bmatrix}
    \rho_1(g) & \dots & 0 \\
    \vdots & \ddots & \vdots \\
    0 & \dots  & \rho_k(g) 
\end{bmatrix}$$
The condition $\rho(g) M = M \rho(g)$ thus translates to 
$$\begin{bmatrix}
    M_{11}  & \dots & M_{1k} \\ 
    \vdots  & \ddots & \vdots \\ 
    M_{k1} & \dots & M_{kk}
\end{bmatrix} \begin{bmatrix}
    \rho_1(g) & \dots & 0 \\
    \vdots & \ddots & \vdots \\
    0 & \dots  & \rho_k(g) 
\end{bmatrix} = \begin{bmatrix}
    \rho_1(g) & \dots & 0 \\
    \vdots & \ddots & \vdots \\
    0 & \dots  & \rho_k(g) 
\end{bmatrix} \begin{bmatrix}
    M_{11}  & \dots & M_{1k} \\ 
    \vdots  & \ddots & \vdots \\ 
    M_{k1} & \dots & M_{kk}
\end{bmatrix}$$
where $M_{ij}$ are $d_{\rho_i} \times d_{\rho_j}$ blocks of $M$. This gives $M_{ij} \rho_j(g) = \rho_i(g) M_{ij}$ for all $i,j\in [k]$. The result trivially follows by applying Schur's Lemma (\Cref{lemm:schur}).
\end{proof}
\begin{lemma}[Lemma 5 in \cite{diaconis}]\label{lem:transformer_identity}
    \label{lemm:fourier_form}
    Let $G$ be a group and $\rho$ an irreducible representation of $G$. Let $f: G \to \CC$ be a class function, i.e., it is constant on each conjugacy class. Let $f_i$ be the value of $f$ on the $i$-th conjugacy class, $n_i$ the cardinality of the $i$-th conjugacy class, and $\chi_i$ the value of the character of $\rho$ on the $i$-th conjugacy class. Then 
    $$\rho(f) = CI\quad \text{ where } \quad C=\frac{1}{d_\rho} \sum_i f_i n_i\chi_i.$$
    The sum is taken over distinct conjugacy classes.
\end{lemma}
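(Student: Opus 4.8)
The plan is the standard two-step argument: first show $\rho(f)$ is a scalar matrix via Schur's lemma, then identify the scalar by computing a trace.

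\textbf{Step 1 (commutation).} I would first show that $\rho(f)$ commutes with $\rho(g)$ for every $g \in G$. Writing $\rho(g)\rho(f)\rho(g)^{-1} = \sum_{h\in G} f(h)\,\rho(g)\rho(h)\rho(g)^{-1} = \sum_{h\in G} f(h)\,\rho(ghg^{-1})$, I would re-index the sum through the bijection $h \mapsto h' = ghg^{-1}$ of $G$; since $f$ is a class function, $f(h) = f(h')$, so the sum collapses back to $\sum_{h'\in G} f(h')\rho(h') = \rho(f)$. Hence $\rho(g)\rho(f) = \rho(f)\rho(g)$ for all $g\in G$.

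\textbf{Step 2 (Schur).} Because $\rho$ is irreducible and $\rho(f)$ intertwines $\rho$ with itself, the second case of Schur's lemma (\Cref{lemm:schur}, with $\rho_1 = \rho_2 = \rho$) forces $\rho(f) = C I$ for some scalar $C \in \CC$. Then I would pin down $C$ by taking traces: on one side $\Tr[\rho(f)] = C\, d_\rho$, and on the other $\Tr[\rho(f)] = \sum_{g\in G} f(g)\,\Tr[\rho(g)] = \sum_{g\in G} f(g)\,\chi_\rho(g)$. Since both $f$ and the character $\chi_\rho$ are constant on each conjugacy class, grouping the sum by conjugacy classes gives $\sum_i n_i f_i \chi_i$, and solving yields $C = \tfrac{1}{d_\rho}\sum_i f_i n_i \chi_i$.

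There is no real obstacle here — the only place the hypotheses genuinely enter is Step 1, where the re-indexing argument uses essentially that $f$ is a class function (so that $f$ is invariant under conjugation), and the rest is a direct appeal to Schur's lemma together with linearity and the cyclic invariance of the trace. The proof is routine bookkeeping once the commutation relation is established.
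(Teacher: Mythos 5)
Your proof is correct and is the standard argument: the conjugation-invariance of the class function $f$ gives $\rho(g)\rho(f)\rho(g)^{-1}=\rho(f)$, Schur's lemma (\Cref{lemm:schur}) then forces $\rho(f)=CI$, and the trace computation identifies $C=\frac{1}{d_\rho}\sum_i f_i n_i \chi_i$. The paper does not prove this lemma itself but cites it directly from \cite{diaconis}, where the proof is exactly this one, so your argument matches the intended derivation with no gaps.
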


The last result, which we prove, characterizes the Fourier transforms of the uniform probability distribution on a group $G$ with respect to irreducible representations.

\begin{lemma}
\label{lemm:fourier_unif}
    Let $G$ be a group and $U: G \to \CC$ be the uniform probability distribution over $G$, namely $U(g) = 1/|G|$ for all $g\in G$. Then 
    $$\rho (U) = \begin{cases}
        1 & \text{if } \rho = \triv \\
        0_{d_\rho \times d_\rho} & \text{if }\rho \in \hat{G}\setminus \{\triv\}.
    \end{cases}$$
\end{lemma}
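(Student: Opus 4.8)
The plan is to compute the Fourier transform of the uniform distribution $U$ directly from \Cref{def:fourier} and then exploit the structure of $\sum_{g\in G}\rho(g)$. First, by definition, $\rho(U)=\sum_{g\in G}U(g)\rho(g)=\frac{1}{|G|}\sum_{g\in G}\rho(g)$. So it suffices to analyze the matrix $S_\rho:=\sum_{g\in G}\rho(g)$ and show it equals $|G|\cdot I_1$ when $\rho=\triv$ and the zero matrix otherwise.

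The case $\rho=\triv$ is immediate: since $\triv(g)=1$ for every $g$, we get $S_{\triv}=|G|$, hence $\rho(U)=1$. For the nontrivial case, the key observation is an averaging/invariance argument: for any fixed $h\in G$, left multiplication by $h$ permutes the group elements, so $\rho(h)S_\rho=\sum_{g\in G}\rho(h)\rho(g)=\sum_{g\in G}\rho(hg)=\sum_{g'\in G}\rho(g')=S_\rho$. Thus $\rho(h)S_\rho=S_\rho$ for all $h\in G$, which says every column of $S_\rho$ lies in the subspace of vectors fixed by all $\rho(h)$.

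Now I invoke irreducibility. The subspace $W=\{v\in V:\rho(h)v=v\ \text{for all }h\in G\}$ is $\rho$-invariant (trivially, since it is pointwise fixed), so by irreducibility $W=\{0\}$ or $W=V$. If $\rho$ is nontrivial, then $W\neq V$ (otherwise $\rho(h)=I$ for all $h$, meaning $\rho$ is a direct sum of copies of the trivial representation, contradicting irreducibility unless $d_\rho=1$ and $\rho=\triv$). Hence $W=\{0\}$, forcing every column of $S_\rho$ to be zero, i.e., $S_\rho=0_{d_\rho\times d_\rho}$, and therefore $\rho(U)=0_{d_\rho\times d_\rho}$. Alternatively, one can phrase this via Schur's Lemma (\Cref{lemm:schur}): the relation $\rho(h)S_\rho=S_\rho=S_\rho\,\rho(h)\rho(h^{-1})^{-1}$... more cleanly, $S_\rho$ intertwines $\rho$ with the trivial representation of the same dimension only when $d_\rho=1$; since for $d_\rho>1$ the trivial representation is reducible while $\rho$ is irreducible and they are non-isomorphic, Schur's Lemma gives $S_\rho=0$, and for $d_\rho=1$ nontrivial we use the character orthogonality $\langle\chi_\rho,\triv\rangle=0$ from \Cref{thm:orhtogonality} to conclude $\sum_g\rho(g)=0$. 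There is no real obstacle here; the only mild subtlety is making the ``$W=V\Rightarrow\rho=\triv$'' step precise, which follows because an irreducible representation that acts as the identity on all of $V$ must be one-dimensional.
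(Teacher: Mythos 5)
Your proof is correct, but it takes a genuinely different route from the paper. The paper observes that $\rho(U)$ commutes with every $\rho(g)$, invokes Schur's Lemma (\Cref{lemm:schur}) to conclude $\rho(U)=cI_{d_\rho}$, and then determines $c=0$ by taking traces and appealing to character orthogonality (\Cref{thm:orhtogonality}), i.e.\ $\langle\chi_\rho,\chi_{\triv}\rangle=0$. You instead use the stronger one-sided invariance $\rho(h)S_\rho=S_\rho$, note that every column of $S_\rho$ lies in the fixed subspace $W=\{v:\rho(h)v=v\ \forall h\}$, and kill $W$ directly by irreducibility (if $W=V$ then $\rho$ acts trivially, forcing $\rho\simeq\triv$). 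This is the standard ``averaging operator is the projection onto invariants'' argument: it is slightly more elementary in that it needs neither the trace computation nor character orthogonality, while the paper's version produces the explicit scalar form $cI$ en route, which mirrors how Schur's Lemma is reused elsewhere in the paper (e.g.\ \Cref{corr:schur_gen}). One small caveat: your parenthetical ``alternative phrasing via Schur'' is muddled as written, since the trivial representation of dimension $d_\rho>1$ is not irreducible, so \Cref{lemm:schur} does not apply to $S_\rho$ as a single intertwiner in the way you sketch; the clean version applies Schur column-by-column, viewing each column as an intertwiner from the one-dimensional $\triv$ to $\rho$. Since your main fixed-subspace argument is complete on its own, this aside does not affect correctness.
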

\begin{proof}
    The case $\rho = \triv$ follows by the definition of the trivial representation. For an irrep $\rho \neq \triv$ we use Schur's lemma (\Cref{lemm:schur}). Notice that for all $g \in G$ we have 
    $$\rho(U) \rho(g) = \rho(g)\rho(U) = \frac{1}{|G|}\sum_{g\in G} \rho(g).$$
    Hence, Schur's lemma asserts that $\rho(U) = cI_{d_\rho}$ for some $c \in \CC$. Taking the trace of both sides, we get 
    $$\frac{1}{|G|} \sum_{g\in G} \chi_\rho(g) = cd_\rho.$$
    The left-hand side of the above expression is precisely the product of the characters of $\rho$ and $\triv$, $\langle \chi_\rho, \chi_{\triv}\rangle$. By the orthogonality relations of characters, $\langle \chi_\rho, \chi_{\triv}\rangle = 0$ and so $c = 0$. This concludes the proof.
\end{proof}

\subsection{Representations of the symmetric group} 
\label{sub:s_n}

Although the results of the previous section apply to any finite group, this work exclusively focuses on the symmetric group $S_N$ and the product group $S_N \times S_N$. Fortunately, the representation theory of symmetric groups is well studied, and we can draw on a wealth of established results. In this section, we summarize the key definitions and facts about the representations of $S_N$. Most of the results presented as ``Facts" are taken from Chapter 4 of \cite{fulton2013representation}. Alongside each statement, we provide a reference to the corresponding passage. Recall the definition of the symmetric group from \Cref{ex:grp}:

\begin{definition}[Symmetric group]
    The set $S_N$ of all permutations of $[N]$ (i.e., bijections $f: [N] \to [N]$), equipped with function composition, forms a group of order $N!$ called the \emph{symmetric group} on $N$ elements.
\end{definition}

It can be shown that irreducible representations of $S_N$ are in one-to-one correspondence with the partitions of $N$. A \emph{partition} of a positive integer $N$ is a tuple $(\lambda_1, \dots, \lambda_k) \in \NN^k$ such that $\lambda_1\geq \lambda_2\geq\dots\geq \lambda_k \geq1$ and $\lambda_1+\dots+\lambda_k = N$. We can now characterize the irreps of $S_N$ as follows:

\begin{fact}[p. 44 in \cite{fulton2013representation}]
\label{fact:corresp}
    Each irrep of $S_N$ corresponds to a partition of $N$. Conversely, each partition of $N$ corresponds to an irrep of $S_N$. The trivial representation of $S_N$ corresponds to the partition $(N)$.
\end{fact}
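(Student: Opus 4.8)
This is a classical structural fact (the cited Chapter~4 of \cite{fulton2013representation} develops it in full), and I would prove it by pairing a dimension count with an explicit construction. The count comes from the general principle that a finite group has exactly as many pairwise non-isomorphic irreducible representations as it has conjugacy classes: this is a consequence of the orthogonality relations (\Cref{thm:orhtogonality}), since the irreducible characters form an orthonormal, hence linearly independent and in fact spanning, set inside the space of class functions, whose dimension is the number of conjugacy classes. For $S_N$, two permutations are conjugate (\Cref{def:conj}) precisely when they have the same cycle type, and the possible cycle types of permutations of $[N]$ are exactly the partitions of $N$. Hence $S_N$ has exactly $p(N)$ irreducible representations, where $p(N)$ denotes the number of partitions of $N$. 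This already gives a (non-canonical) bijection; the point of the statement is that there is a natural one, which I exhibit next.

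For the explicit direction I would realize each irreducible representation inside the group algebra $\CC[S_N]$ via Young symmetrizers. Given a partition $\lambda$ of $N$, draw its Young diagram and fix a tableau $t$ that fills the boxes with $1,\dots,N$. Let $P_t \le S_N$ be the subgroup of permutations preserving each row of $t$ and $Q_t \le S_N$ the subgroup preserving each column, and set
$$a_\lambda = \sum_{p \in P_t} p, \qquad b_\lambda = \sum_{q \in Q_t} \operatorname{sgn}(q)\, q, \qquad c_\lambda = a_\lambda b_\lambda \in \CC[S_N].$$
The core combinatorial lemma — that for any $g \in S_N$ either $g \in P_t Q_t$, or else the tableaux $t$ and $gt$ contain two entries sharing both a row of $t$ and a column of $t$ — yields $c_\lambda \CC[S_N] c_\lambda = \CC\, c_\lambda$ and $c_\lambda^2 = \alpha_\lambda c_\lambda$ for some scalar $\alpha_\lambda \neq 0$. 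From these two facts it follows that the left ideal $V_\lambda := \CC[S_N] c_\lambda$ is irreducible: any submodule is generated by its $c_\lambda$-component and the endomorphism algebra is one-dimensional, so simplicity follows via Schur's lemma (\Cref{lemm:schur}). A refinement of the same tableau combinatorics, run against the dominance order on partitions, gives $c_\mu \CC[S_N] c_\lambda = 0$ whenever $\lambda \neq \mu$ (after ordering the pair), hence $\operatorname{Hom}_{S_N}(V_\lambda, V_\mu) = 0$ and $V_\lambda \not\simeq V_\mu$. Thus $\{V_\lambda : \lambda \vdash N\}$ is a family of $p(N)$ pairwise non-isomorphic irreducibles, and by the count of the previous paragraph it is the complete list — establishing the bijection in both directions.

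To pin down the trivial representation: for $\lambda = (N)$ the diagram is a single row, so every column is a single box, $Q_t = \{\id\}$, $b_{(N)} = \id$, and $c_{(N)} = a_{(N)} = \sum_{g \in S_N} g$. Since $h \cdot \sum_{g \in S_N} g = \sum_{g \in S_N} g$ for every $h \in S_N$, the ideal $V_{(N)} = \CC\, c_{(N)}$ is one-dimensional with $S_N$ acting by the identity, i.e.\ $V_{(N)} \simeq \triv$. I expect the main obstacle to be precisely the combinatorial lemma on tableaux underpinning both the irreducibility of $V_\lambda$ and the non-isomorphism $V_\lambda \not\simeq V_\mu$ (the characterization of when $g \in P_t Q_t$, together with its interaction with the dominance order). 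Everything else — the character-orthogonality count, the cycle-type description of conjugacy classes, and the identification of $\triv$ with $(N)$ — is routine once that lemma is available.
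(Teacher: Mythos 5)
The paper offers no proof of this Fact---it is quoted directly from Fulton--Harris (p.~44), where it is established exactly along the lines you outline: the conjugacy-class/partition count plus the Young-symmetrizer construction $c_\lambda = a_\lambda b_\lambda$ with $V_\lambda = \CC[S_N]c_\lambda$, and $\lambda=(N)$ giving the trivial representation. Your sketch is correct and follows that standard route; the only slip is in the statement of the key combinatorial lemma, where the two entries should share a row of $t$ and a column of $g\cdot t$ (not two positions of $t$ itself), which is what the argument actually uses.
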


The \emph{standard representation} of $S_N$, denoted by $\std$, is the representation corresponding to the partition $(N-1, 1)$. In what follows, we will occasionally denote irreps by their corresponding partitions. For example, we may write $(N-1,1)$ instead of $\std$. 

If we let $p(N)$ denote the number of partitions of $N \in \NN$, \Cref{fact:corresp} and \Cref{thm:tensor_prod} give the following:

\begin{remark}
\label{rem:num_irreps}
    The number of irreducible representations is $p(N)$ for $S_N$ and $p(N)^2$ for $S_N \times S_N$.
\end{remark}

The following fact states that irreducible representations of $S_N$ can be defined over the reals.

\begin{fact}[p. 46 in \cite{fulton2013representation}]
\label{fact:real_rep}
    Each irreducible representation of $S_N$ can be defined over the rationals. In particular, we can define every irreducible representation $\rho$ of $S_N$ such that $\rho(g)$ is an orthogonal matrix for every $g \in G$.
\end{fact}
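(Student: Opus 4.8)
The plan is to establish the two assertions in turn: first that every irreducible representation of $S_N$ admits a realization over $\QQ$, and then that — being in particular realizable over $\RR$ — it admits a realization in which every group element acts by an orthogonal matrix.

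For the rationality statement, I would invoke the combinatorial construction of \emph{Specht modules}. To each partition $\lambda$ of $N$ one attaches the $\ZZ[S_N]$-module $S^\lambda$ spanned by the polytabloids indexed by Young tableaux of shape $\lambda$, with $S_N$ acting by permuting entries; base change to $\QQ$ gives a rational representation. Two facts must then be verified: (i) $S^\lambda \otimes_\QQ \CC$ is irreducible over $\CC$, which one checks either by computing that the Gram matrix of the canonical $S_N$-invariant bilinear form on $S^\lambda$ is nonsingular, or equivalently that $\langle \chi_{S^\lambda}, \chi_{S^\lambda}\rangle = 1$ via the Frobenius character formula; and (ii) distinct partitions give non-isomorphic modules, which follows from the dominance-order structure of the permutation modules $M^\mu$ (namely, $S^\lambda$ is the ``highest'' constituent with respect to dominance). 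Since the number of partitions of $N$ equals the number of conjugacy classes of $S_N$, hence the number of complex irreducibles, the family $\{S^\lambda \otimes \CC\}_\lambda$ exhausts $\widehat{S_N}$; equivalently, one can verify exhaustiveness directly from $\sum_\lambda (\dim S^\lambda)^2 = N!$, which is the RSK correspondence together with the hook-length formula. Therefore every irrep of $S_N$ is definable over $\QQ$.

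For the orthogonal form, I would apply Weyl's averaging trick over $\RR$. Starting from the rational form $\rho: S_N \to \GL(V)$ just constructed and extending scalars to $\RR$, so that $V = \RR^{d_\rho}$, define the symmetric bilinear form
\[
B(u,v) \;=\; \sum_{g \in S_N} \langle \rho(g)u,\ \rho(g)v\rangle ,
\]
where $\langle\cdot,\cdot\rangle$ is the standard inner product. Each summand is positive semidefinite and the $g=\id$ summand is positive definite, so $B$ is positive definite; and reindexing the sum by $g \mapsto gh$ shows $B(\rho(h)u,\rho(h)v) = B(u,v)$ for all $h$, so $B$ is $S_N$-invariant. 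Conjugating $\rho$ by a change of basis that carries $B$ to the standard inner product yields $\rho'$ with $\rho'(g)^\top \rho'(g) = I_{d_\rho}$ for every $g$, i.e. $\rho'$ takes values in the orthogonal group. This gives the ``in particular'' clause; note that the diagonalizing change of basis is typically irrational, so this argument yields orthogonality over $\RR$ but does not on its own preserve rational entries — which is consistent with exactly what the statement claims.

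\textbf{Main obstacle.} The substantive work lies entirely in the first part: proving that the Specht modules stay irreducible after extension to $\CC$ and that they account for \emph{all} complex irreducibles of $S_N$. The orthogonalization step is routine. An alternative route to rationality — first showing every character of $S_N$ is integer-valued (because $g^k$ is conjugate to $g$ whenever $\gcd(k,\mathrm{ord}(g))=1$, so all Galois conjugates of $\chi(g)$ coincide, whence $\chi(g)$ is a rational algebraic integer, i.e. $\chi(g)\in\ZZ$), then invoking that the Schur index of every $\CC[S_N]$-irreducible is $1$ — only relocates the difficulty, since the Schur-index fact is proved via essentially the same Specht-module machinery; so I would keep the Specht construction as the backbone.
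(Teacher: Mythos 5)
Your argument is correct, but note that the paper does not prove this statement at all: it is imported as a black-box Fact with a citation to Fulton and Harris (p.~46), where the rational form comes from the Young-symmetrizer construction (the symmetrizers $c_\lambda$ have integer coefficients in $\QQ[S_N]$, so the modules $\CC[S_N]c_\lambda$ visibly descend to $\QQ$). Your route through Specht modules is the standard alternative and is sound: irreducibility over $\CC$ via nondegeneracy of the invariant bilinear form (or $\langle\chi,\chi\rangle=1$), pairwise non-isomorphism via dominance order, and exhaustion by counting partitions against conjugacy classes (or $\sum_\lambda (\dim S^\lambda)^2 = N!$). The orthogonalization step via averaging the standard inner product over the group is exactly the classical argument and is what the textbook implicitly relies on as well. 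Your closing caveat is also the right reading of the statement: the rational realization and the orthogonal realization need not be the same basis, and the paper never needs them simultaneously --- it uses only that a single real (hence, after averaging, real orthogonal) realization of $\std$ exists, which is what feeds into the Great Orthogonality step in \Cref{lemm:fourier_fix} and the Hermitian/unitary manipulations in the matrix Bernstein argument. So your proposal supplies a self-contained proof of something the paper merely cites, with no gaps that affect its use downstream.
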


The dimension of an irrep can be determined solely by the corresponding partition, as shown in the following fact:

\begin{fact}[Equation 4.11 in \cite{fulton2013representation}]
    \label{fact:irrep_dim}
    Let $\rho =(\lambda_1,\dots,\lambda_k)$ be an irrep of $S_N$. Then 
    $$d_\rho = \frac{N!}{l_1!\cdots l_k!} \prod_{i<j}(l_i-l_j),$$
    where $l_i = \lambda_i+k-i$.
\end{fact}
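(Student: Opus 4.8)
The plan is to reduce the statement to Frobenius's classical formula for the dimension of the irrep $\rho$ indexed by the partition $\lambda=(\lambda_1,\dots,\lambda_k)$. With the beta-numbers $l_i=\lambda_i+k-i$ and the Vandermonde $\Delta(x)=\prod_{1\le i<j\le k}(x_i-x_j)$, that formula says $d_\rho$ is the coefficient of the monomial $x_1^{l_1}x_2^{l_2}\cdots x_k^{l_k}$ in the polynomial $\Delta(x)\cdot(x_1+\cdots+x_k)^N$. I would either cite this (it sits at the same level as the other ``Facts'' of this appendix) or, for a self-contained argument, derive it from two standard symmetric-function identities: first, that the coefficient of the Schur polynomial $s_\lambda$ in $(x_1+\cdots+x_k)^N$ equals $d_\rho$ (this coefficient counts standard Young tableaux of shape $\lambda$, equivalently the character value at the identity of the regular representation decomposition, hence equals the dimension of the corresponding Specht module); second, the bialternant identity $\Delta(x)\,s_\lambda(x)=\det\big(x_i^{\,l_j}\big)_{i,j=1}^k$. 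Multiplying the first identity by $\Delta$ replaces each term by such an alternant, and extracting the coefficient of $x_1^{l_1}\cdots x_k^{l_k}$ — which, because $l_1>l_2>\cdots>l_k$, receives a contribution of $1$ from the shape $\lambda$ and $0$ from every other shape — recovers Frobenius's formula.

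Granting that expression, the rest is algebra. Expand the Vandermonde by the Leibniz rule, $\Delta(x)=\sum_{\sigma\in S_k}\operatorname{sgn}(\sigma)\prod_{i=1}^k x_i^{\,k-\sigma(i)}$, and expand $(x_1+\cdots+x_k)^N$ by the multinomial theorem. Matching the exponent vector $(l_1,\dots,l_k)$ forces, for each $\sigma$, a multinomial coefficient $N!/\prod_i(l_i-k+\sigma(i))!$ with the convention $1/m!=0$ when $m<0$, so that
\[
d_\rho \;=\; N!\sum_{\sigma\in S_k}\operatorname{sgn}(\sigma)\prod_{i=1}^k\frac{1}{(l_i-k+\sigma(i))!}
\;=\; N!\cdot\det\!\left(\frac{1}{(l_i-k+j)!}\right)_{i,j=1}^{k}.
\]
It remains to evaluate this $k\times k$ determinant in closed form. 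Writing $1/(l_i-k+j)!=(l_i)_{k-j}/l_i!$, where $(l_i)_{k-j}=l_i(l_i-1)\cdots(l_i-k+j+1)$ is a monic polynomial in $l_i$ of degree $k-j$, I pull $\prod_i 1/l_i!$ out of the rows; the remaining matrix $\big((l_i)_{k-j}\big)_{i,j}$ has columns that are monic polynomials of degrees $k-1,k-2,\dots,0$, so column reduction turns its determinant into the Vandermonde $\det\big(l_i^{\,k-j}\big)_{i,j}=\prod_{i<j}(l_i-l_j)$. Collecting factors gives $d_\rho=N!\,\prod_{i<j}(l_i-l_j)/(l_1!\cdots l_k!)$, as claimed.

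The only genuine obstacle is the input rather than the bookkeeping: Frobenius's formula (equivalently, the Schur-polynomial identities underlying it) is the sole nontrivial ingredient, and a fully from-scratch proof must establish it. An alternative that merely swaps one cited result for another is to invoke the hook-length formula $d_\rho=N!/\prod_{c}h(c)$ and then prove the elementary identity $\prod_{c}h(c)=(l_1!\cdots l_k!)/\prod_{i<j}(l_i-l_j)$, which holds because the hook lengths occurring in row $i$ of the Young diagram of $\lambda$ are exactly $\{1,\dots,l_i\}\setminus\{l_i-l_j:j>i\}$, so the row-$i$ product is $l_i!/\prod_{j>i}(l_i-l_j)$. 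Everything past that point — coefficient extraction and the determinant evaluation — is routine.
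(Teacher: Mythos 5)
Your derivation is correct; note that the paper itself offers no proof of this statement — it is stated as a Fact with a citation to Equation 4.11 of Fulton--Harris, so there is no internal argument to compare against. What you supply is a sound reconstruction of the standard proof of the Frobenius dimension formula: the reduction to the coefficient of $x_1^{l_1}\cdots x_k^{l_k}$ in $\Delta(x)\,(x_1+\cdots+x_k)^N$ is justified correctly (the expansion $(x_1+\cdots+x_k)^N=\sum_\mu f^\mu s_\mu$ together with the bialternant identity, and the observation that strict decrease of the beta-numbers forces the identity permutation and the shape $\mu=\lambda$), the multinomial/Leibniz bookkeeping with the convention $1/m!=0$ for $m<0$ is consistent with the falling-factorial rewriting $1/(l_i-k+j)!=(l_i)_{k-j}/l_i!$ (which correctly vanishes when $k-j>l_i$), and the column reduction of the matrix of monic falling factorials to the Vandermonde $\det\bigl(l_i^{\,k-j}\bigr)=\prod_{i<j}(l_i-l_j)$ is valid. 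The hook-length alternative, via the fact that the hooks in row $i$ are $\{1,\dots,l_i\}\setminus\{l_i-l_j: j>i\}$, is an equally standard and correct shortcut, though it likewise trades one cited input for another. One small imprecision: the justification that the coefficient of $s_\lambda$ in $(x_1+\cdots+x_k)^N$ equals $d_\rho$ is best phrased via the number of standard Young tableaux (iterated Pieri rule, or the characteristic map sending the class of the identity-free power sum $p_1^N$ to it); the parenthetical remark about ``the character value at the identity of the regular representation decomposition'' is muddled as stated, but the SYT count you also invoke carries the argument, so nothing essential is missing.
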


Another quantity of interest is the \emph{character ratio} of transpositions. For an irrep $\rho$, it is defined as $r(\rho) =\chi_{\rho}(\tau)/d_{\rho}$, where $\chi_{\rho}(\tau)$ is the character of $\rho$ evaluated at any transposition.\footnote{Note that this is well-defined since transpositions form a conjugacy class.} The following fact gives a closed-form expression for $r(\rho)$ that only depends on the corresponding partition: 

\begin{fact}[Lemma 7 in \cite{diaconis}]
\label{fact:char_ratio}
    Let $\rho = (\lambda_1,\dots,\lambda_k)$ be an irrep of $S_N$. The character ratio of transpositions is given by 
    $$r(\rho) = \frac{1}{N(N-1)}\sum_{j=1}^k \bigl[(\lambda_j-j)(\lambda_j-j+1) - j(j-1)\bigr].$$
\end{fact}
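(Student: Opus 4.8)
The plan is to route the computation through the \emph{class sum} of transpositions. Let \(Z = \sum_{\tau}\tau \in \CC[S_N]\) be the sum over all \(\binom{N}{2}\) transpositions. Since the transpositions form a single conjugacy class, \(Z\) is central in \(\CC[S_N]\), so by Schur's lemma (\Cref{lemm:schur}) it acts on the irrep \(\rho\) as a scalar \(\omega_\rho\). Taking traces of this action gives \(\omega_\rho d_\rho = \chi_\rho(Z) = \binom{N}{2}\chi_\rho(\tau)\), and hence
\[
r(\rho) = \frac{\chi_\rho(\tau)}{d_\rho} = \frac{\omega_\rho}{\binom{N}{2}} = \frac{2\omega_\rho}{N(N-1)}.
\]
Thus everything reduces to identifying the scalar \(\omega_\rho\).

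The key step is the classical fact that \(\omega_\rho\) equals the sum of the \emph{contents} of the boxes of the Young diagram of \(\rho\), the content of the box in row \(i\) and column \(j\) being \(j-i\). I would obtain this via the Jucys--Murphy elements \(X_k = \sum_{i<k}(i\,k)\), \(k=2,\dots,N\): their sum is exactly \(Z\), and on the Young (Gelfand--Tsetlin) basis of \(\rho\), indexed by standard Young tableaux, each \(X_k\) acts diagonally with eigenvalue, on the vector labelled by a tableau \(T\), equal to the content of the box of \(T\) holding \(k\). Summing over \(k\), the eigenvalue of \(Z\) on any such basis vector is \(\sum_{b\in\rho}c(b)\), independent of \(T\) --- consistent with centrality --- so \(\omega_\rho = \sum_{b\in\rho}(j-i)\). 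A self-contained alternative avoids quoting the JM eigenvalues: restricting \(\rho\) to \(S_{N-1}\) via the branching rule and writing \(Z_N = Z_{N-1} + X_N\) gives, on each irreducible \(S_{N-1}\)-summand, the recursion \(\omega_\rho = \omega_\mu + c(\rho/\mu)\) over each removable box \(\rho/\mu\), which telescopes down to \(\omega_{(1)} = 0\), again yielding the total content.

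It remains to evaluate \(\sum_{b\in\rho}c(b)\) in terms of the parts \(\lambda_1 \ge \dots \ge \lambda_k\). The boxes of row \(i\) occupy columns \(1,\dots,\lambda_i\), so their contents sum to
\[
\sum_{c=1}^{\lambda_i}(c-i) = \frac{\lambda_i(\lambda_i+1)}{2} - i\lambda_i = \frac{\lambda_i(\lambda_i - 2i + 1)}{2},
\]
and the elementary identity \(\lambda_i(\lambda_i - 2i + 1) = (\lambda_i - i)(\lambda_i - i + 1) - i(i-1)\) (expand both sides) rewrites this as \(\tfrac12[(\lambda_i - i)(\lambda_i - i + 1) - i(i-1)]\). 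Summing over \(i\) gives \(\omega_\rho = \tfrac12\sum_{i=1}^k[(\lambda_i - i)(\lambda_i - i + 1) - i(i-1)]\), and substituting into the first display yields
\[
r(\rho) = \frac{2\omega_\rho}{N(N-1)} = \frac{1}{N(N-1)}\sum_{i=1}^k\bigl[(\lambda_i - i)(\lambda_i - i + 1) - i(i-1)\bigr],
\]
which is the claimed formula.

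The one genuinely nontrivial ingredient --- and hence the main obstacle --- is the identity \(\omega_\rho = \sum_{b\in\rho}c(b)\), which rests on the structure theory of \(\CC[S_N]\) (the Jucys--Murphy elements and the Young basis, or equivalently the branching-rule recursion together with the fact that \(X_N\) acts by the content of the removed box on each \(S_{N-1}\)-isotypic piece). Everything else is Schur's lemma together with a trace, followed by a one-line algebraic manipulation.
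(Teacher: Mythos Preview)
The paper does not prove this statement; it is recorded as a \emph{Fact} with a citation to Diaconis (Lemma~7 in \cite{diaconis}) and used as a black box to compute the four character ratios needed in the proof of \Cref{thm:spectral_bound_random}. There is therefore no ``paper's own proof'' to compare against.

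Your argument is correct and is one of the standard modern routes to the Frobenius character-ratio formula. The reduction to the scalar $\omega_\rho$ via Schur's lemma is exactly \Cref{lemm:fourier_form} specialized to the indicator of the transposition class, the identification $\omega_\rho=\sum_{b\in\rho}c(b)$ through Jucys--Murphy elements (or the branching recursion you sketch) is standard, and your row-by-row evaluation together with the identity $\lambda_i(\lambda_i-2i+1)=(\lambda_i-i)(\lambda_i-i+1)-i(i-1)$ is clean and checks out. You are right to flag that the content-sum identity is the one substantive input; Diaconis's original derivation proceeds instead from Frobenius's determinantal character formula rather than the Okounkov--Vershik/JM machinery, so your approach is a genuinely different (and arguably more transparent) route to the same endpoint.
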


\begin{example} Using \Cref{fact:char_ratio} we can obtain the following:
    \begin{enumerate}[i)]
    \item It follows by the definition of the trivial representation that $r(\triv) = 1$. It is easy to see that the expression given in \Cref{fact:char_ratio} gives the same result. 
    \item For the standard representation $\std = (N-1,1)$, we obtain $r(\std) =\frac{N-3}{N-1}$.
    \end{enumerate}
\end{example}

We now introduce the permutation representation of $S_N$, a useful non-irreducible representation of $S_N$.

\begin{definition}[Permutation representation] Let $\{\vect{e_1},\dots,\vect{e_N}\}$ be the standard basis of $\CC^N$. The \emph{permutation representation}\footnote{In some textbooks it may also be referred to as the defining representation of $S_N$.} of $S_N$, denoted by $\operatorname{perm}$, is the homomorphism $\rho: S_N \to \GL(\CC^N)$ defined by 
$$\rho(g)(\vect{e_i}) = \vect{e_{g(i)}}$$
for all $g \in S_N$ and $i\in [N]$.
\end{definition}

We leave the verification that $\perm$ is indeed a valid representation of $S_N$ as an (easy) exercise. In essence, the permutation representation of an element $g \in S_N$ is the linear map that acts by permuting the standard basis vectors of $\CC^N$ according to $g$. The permutation representation is closely related to the number of fixed points of the elements of $S_N$. This is made precise in \Cref{lemm:perm_char}, following the definition of fixed points.

\begin{definition}
    Let $g \in S_N$ be a permutation. We say that $i \in [N]$ is a \emph{fixed point} of $g$ if $g(i) = i$. The set of fixed points of $g$ is denoted by $\fix(g) = \{i \in [N]: g(i)=i\}$.
\end{definition}

\begin{lemma}
\label{lemm:perm_char}
    The character of $\perm$ on any $g \in S_N$ is equal to the number of fixed points of $g$. Namely, $\chi_{\perm}(g) = |\fix(g)|$ for all $g\in S_N$.
\end{lemma}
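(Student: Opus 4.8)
\textbf{Proof proposal for \Cref{lemm:perm_char}.}

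The plan is to compute the trace of $\perm(g)$ directly from its matrix form in the standard basis $\{\vect{e_1},\dots,\vect{e_N}\}$ of $\CC^N$. First I would recall that, by definition, $\perm(g)$ is the linear map sending $\vect{e_i} \mapsto \vect{e_{g(i)}}$ for each $i \in [N]$, so its matrix (with respect to the standard basis) is the permutation matrix $P_g$ whose $(j,i)$ entry equals $\indic\{j = g(i)\}$. Since the character is basis-independent (isomorphic representations have equal characters), it suffices to compute $\Tr(P_g) = \sum_{i=1}^N (P_g)_{ii} = \sum_{i=1}^N \indic\{i = g(i)\}$.

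The key observation is that the indicator $\indic\{i = g(i)\}$ is exactly $1$ when $i$ is a fixed point of $g$ and $0$ otherwise. Hence the sum counts precisely the elements of the set $\fix(g) = \{i \in [N] : g(i) = i\}$, which gives
$$
\chi_{\perm}(g) = \Tr(\perm(g)) = \sum_{i=1}^N \indic\{i = g(i)\} = |\fix(g)|.
$$
Since $g \in S_N$ was arbitrary, this establishes the claim for all $g \in S_N$.

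There is essentially no obstacle here: the statement is an immediate consequence of unwinding the definitions of the permutation representation, the character, and the fixed-point set. The only minor point worth stating explicitly is that the diagonal entry $(P_g)_{ii}$ picks out whether $g$ fixes $i$, which follows because the coefficient of $\vect{e_i}$ in $\perm(g)\vect{e_i} = \vect{e_{g(i)}}$ is $1$ iff $g(i) = i$. This lemma will then feed into the Fourier-analytic computations (e.g. \Cref{lemm:fourier_fix} and the proof of \Cref{thm:p_agree}), where the function $f(g,h) = |\fix(h^{-1}g)|$ is identified with the character of a tensor product of permutation representations.
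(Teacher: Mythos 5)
Your proof is correct and follows essentially the same route as the paper: both compute $\chi_{\perm}(g)$ as the trace of the permutation matrix in the standard basis and observe that each diagonal entry is $1$ exactly when $g$ fixes the corresponding index, so the trace counts $|\fix(g)|$.
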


\begin{proof}
    Observe that, by definition, when expressed in the standard basis, $\rho(g)$ is a permutation matrix. The diagonal element in the $(i,i)$-th coordinate is equal to $1$ \textit{if and only if} $\rho(g)(\vect{e}_i)=\vect{e_i}$, which happens \textit{if and only if} $g(i) = i$, or equivalently $i \in \fix(g)$. Hence, 
    $$\chi_{\perm}(g) = \sum_{i=1}^N \rho(g)_{ii} = |\fix(g)|,$$
    which concludes the proof.
\end{proof}

Since $\perm$ is not irreducible, Maschke's theorem guarantees that it is isomorphic to a direct sum of irreps. The decomposition is given in the following fact:

\begin{fact}[p. 55 in \cite{fulton2013representation}]
\label{fact:perm_decomp}
    The permutation decomposition of $S_N$ is isomorphic to the direct sum of the trivial and the standard representation of $S_N$, namely
    $\perm \simeq \triv \oplus \std$.     Consequently, 
    $$\chi_{\perm}(g) = \chi_{\triv}(g) + \chi_{\std}(g) = 1 + \chi_{\std}(g)$$
    for all $g\in S_N$.
\end{fact}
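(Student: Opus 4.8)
The plan is to produce the decomposition concretely and then match its two pieces to the named representations. First I would write $\CC^N = \spn\{\vect{1}\} \oplus W$, where $\vect{1} = \sum_{i=1}^{N}\vect{e_i}$ and $W = \{\vect{x}\in\CC^N : \sum_i \vect{x}_i = 0\}$ is the coordinate-sum-zero hyperplane. Since $\perm(g)$ merely permutes standard basis vectors, it fixes $\vect{1}$ and preserves the coordinate sum, so both summands are $S_N$-invariant; the action on the line $\spn\{\vect{1}\}$ is the trivial representation by inspection. Hence $\perm \simeq \triv \oplus \sigma$ with $\sigma := \perm|_W$ a representation of dimension $N-1$, and it remains to show $\sigma$ is irreducible and isomorphic to $\std$.

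For irreducibility I would use characters. By \Cref{lemm:perm_char}, $\chi_{\perm}(g) = |\fix(g)|$, so with the inner product of \Cref{eq:inner_prod} we have $\langle \chi_{\perm},\chi_{\perm}\rangle = \tfrac{1}{N!}\sum_{g\in S_N}|\fix(g)|^2$ and $\langle \chi_{\perm},\chi_{\triv}\rangle = \tfrac{1}{N!}\sum_{g\in S_N}|\fix(g)|$. A Burnside / orbit-counting argument evaluates both: the right-hand side of the first is the number of orbits of $S_N$ acting diagonally on $[N]^2$ (since $|\fix(g)|^2$ is exactly the number of points of $[N]^2$ fixed by $g$), which equals $2$ for $N\geq 2$ — namely the diagonal and its complement — while the second is the number of orbits on $[N]$, which is $1$. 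Therefore $\langle \chi_\sigma,\chi_\sigma\rangle = \langle \chi_{\perm}-\chi_{\triv},\,\chi_{\perm}-\chi_{\triv}\rangle = 2 - 2\cdot 1 + 1 = 1$, and by the orthogonality relations (\Cref{thm:orhtogonality}) a representation whose character has squared norm $1$ is irreducible; hence $\sigma$ is an irreducible representation of dimension $N-1$.

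It remains to identify $\sigma$ with $\std$, i.e. with the irrep indexed by the partition $(N-1,1)$. The dimensions already agree: substituting $(\lambda_1,\lambda_2)=(N-1,1)$ into \Cref{fact:irrep_dim} gives $d_{(N-1,1)} = N-1$. To pin $\sigma$ down as that particular partition (rather than merely ``some $(N-1)$-dimensional irrep''), I would invoke the standard dictionary between Young permutation modules and Specht modules — $\CC^N$ is the permutation module $M^{(N-1,1)}$, and Young's rule in this simplest instance gives $M^{(N-1,1)} \simeq S^{(N)} \oplus S^{(N-1,1)} = \triv \oplus \std$ — which is precisely the content of the cited reference. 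Once $\perm \simeq \triv \oplus \std$ is established, the asserted identity on characters is immediate: taking traces over the direct sum and using $\chi_{\triv}\equiv 1$ yields $\chi_{\perm}(g) = 1 + \chi_{\std}(g)$ for all $g\in S_N$.

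The step I expect to be the real obstacle is this final identification, because the paper fixes $\std$ abstractly through its partition rather than as a concrete hyperplane action, so one cannot simply declare the two equal. A fully self-contained workaround is to compute the character of $\sigma$ on every conjugacy class from $\chi_\sigma(g) = |\fix(g)| - 1$ and check it against the known character-table row of $(N-1,1)$ — for instance, it already matches on transpositions, since $\chi_\sigma(\tau) = N-3 = d_{\std}\, r(\std)$ by \Cref{fact:char_ratio} — but leaning on the Young-module decomposition from the cited text is cleaner and avoids reproving the classification. Everything else (the invariant-subspace split and the irreducibility check via $\langle\chi_{\perm},\chi_{\perm}\rangle = 2$) is routine.
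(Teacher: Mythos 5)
Your argument is correct, but note that the paper offers no proof of this statement at all: it is imported verbatim as a Fact with a citation to Fulton--Harris (p.~55), so there is nothing internal to compare against, and your proposal is strictly more self-contained than what the paper does. The core of your argument is sound: the split $\CC^N=\spn\{\vect{1}\}\oplus W$ into invariant subspaces, the Burnside computation giving $\langle\chi_{\perm},\chi_{\perm}\rangle=2$ and $\langle\chi_{\perm},\chi_{\triv}\rangle=1$ (valid for $N\ge 2$ by $2$-transitivity), and the conclusion $\langle\chi_\sigma,\chi_\sigma\rangle=1$ are all standard and correct, though the step ``norm one implies irreducible'' is the converse of what \Cref{thm:orhtogonality} literally states and needs the routine remark that multiplicities in a character decomposition are nonnegative integers with $\langle\chi,\chi\rangle=\sum_i m_i^2$. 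You are also right that the only genuinely delicate point is identifying $\sigma$ with the irrep labelled by the partition $(N-1,1)$: dimension alone does not suffice (the conjugate partition $(2,1^{N-2})$ also has dimension $N-1$, and e.g.\ for $N=6$ so do $(3,3)$ and $(2,2,2)$), and matching the character only on transpositions via \Cref{fact:char_ratio} rules out some but not all alternatives, so a fully rigorous character comparison would have to run over all conjugacy classes. Your preferred resolution---invoking Young's rule $M^{(N-1,1)}\simeq S^{(N)}\oplus S^{(N-1,1)}$---is exactly the content of the passage the paper cites, so at that point your proof and the paper's treatment coincide; what your route buys is an elementary verification of everything except the labelling of the hyperplane representation by its partition.
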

\section{Proof of \Cref{thm:p_agree}}
\label{app:proof_pagree}

In this section, we provide the full proof of \Cref{thm:p_agree}. The proof relies on a technical lemma regarding the Fourier transform of the function on $S_N \times S_N$ given by $(g,h) \mapsto |\fix(h^{-1}g)|$, which we state and prove first.

\begin{lemma}[Fourier transform of $\fix$]
\label{lemm:fourier_fix}
Let $G=S_N \times S_N$ and $f: G \to \CC$ be given by 
$f(g,h)=|\fix(h^{-1}g)|$. Then for any non-trivial irreducible representation $\Pi$ of $G$, we have
$$\Pi(f) = \begin{cases}
    \frac{(N!)^2}{N-1} P_{\operatorname{diag}} & \text{if } \; \Pi = \std \otimes \std \\
     0_{d_\Pi \times d_\Pi} & \text{otherwise }
\end{cases}$$
where $P_{\operatorname{diag}}$ is the orthogonal projection onto the diagonal subspace  $\operatorname{span}\left\{\sum_{i=1}^{N-1} \vect{e_i} \otimes \vect{e_i}\right\}$.
\end{lemma}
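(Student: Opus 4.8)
# Proof Proposal for Lemma on Fourier Transform of $\fix$

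\textbf{Overall strategy.} The plan is to exploit the fact that the function $f(g,h) = |\fix(h^{-1}g)|$ is, up to the identification $G = S_N \times S_N$, essentially a ``lifted'' class function, and to use the Great Orthogonality Theorem together with the known decomposition $\perm \simeq \triv \oplus \std$. The key observation is that $|\fix(h^{-1}g)| = \chi_{\perm}(h^{-1}g)$ by \Cref{lemm:perm_char}, so $f$ is built entirely from the permutation representation. I would therefore aim to write $f(g,h) = \Tr[\perm(h^{-1}g)] = \Tr[\perm(h)^* \perm(g)]$, using that $\perm$ can be taken orthogonal (real), and then expand $\perm \simeq \triv \oplus \std$ to get $f(g,h) = \chi_{\triv}(h^{-1}g) + \chi_{\std}(h^{-1}g) = 1 + \overline{\chi_{\std}(h)}\,\chi_{\std}(g)$ after using $\chi_{\std}(h^{-1}g) = \sum_{k,l} \std(h)^*_{kl}\,\std(g)_{kl}$... wait, more carefully $\chi_{\std}(h^{-1}g) = \Tr[\std(h)^{-1}\std(g)] = \sum_{k,l}\std(h^{-1})_{lk}\std(g)_{kl} = \sum_{k,l}\overline{\std(h)_{kl}}\,\std(g)_{kl}$.

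\textbf{Main computation.} Using \Cref{thm:tensor_prod}, every nontrivial irrep of $G$ has the form $\Pi = \rho \otimes \sigma$ with $\rho,\sigma \in \widehat{S_N}$, not both trivial. Then
\[
\Pi(f) = \sum_{(g,h)\in G} f(g,h)\,\big(\rho(g)\otimes\sigma(h)\big) = \sum_{g,h}\big(1 + \textstyle\sum_{k,l}\overline{\std(h)_{kl}}\,\std(g)_{kl}\big)\big(\rho(g)\otimes\sigma(h)\big).
\]
The ``$1$'' term factors as $\big(\sum_g \rho(g)\big)\otimes\big(\sum_h \sigma(h)\big)$, which by \Cref{lemm:fourier_unif} (applied to $|G_1|\cdot U$) vanishes unless both $\rho$ and $\sigma$ are trivial — impossible for nontrivial $\Pi$. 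For the second term, I would factor the double sum as
\[
\sum_{k,l}\Big(\sum_g \std(g)_{kl}\,\rho(g)\Big)\otimes\Big(\sum_h \overline{\std(h)_{kl}}\,\sigma(h)\Big),
\]
and evaluate each factor entrywise via the Great Orthogonality Theorem (\Cref{thm:GOT}). The sum $\sum_g \std(g)_{kl}\,\rho(g)_{ab}$ is zero unless $\rho \simeq \std$, and similarly the $\sigma$ factor forces $\sigma \simeq \std$. Hence $\Pi(f) = 0$ unless $\Pi = \std\otimes\std$, giving the second case of the lemma immediately.

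\textbf{The $\std\otimes\std$ case.} When $\rho = \sigma = \std$ (dimension $d := N-1$, taken orthogonal/real by \Cref{fact:real_rep}), I would apply the orthogonality relation $\sum_g \std(g)_{kl}\std(g)_{ab} = \frac{N!}{N-1}\delta_{ka}\delta_{lb}$ (real case, so no conjugates) to each factor. This yields
\[
\sum_g \std(g)_{kl}\,\std(g) = \frac{N!}{N-1}\,E_{kl}, \qquad \sum_h \std(h)_{kl}\,\std(h) = \frac{N!}{N-1}\,E_{kl},
\]
where $E_{kl}$ is the matrix unit with a $1$ in position $(k,l)$. Therefore
\[
\Pi(f) = \sum_{k,l} \Big(\tfrac{N!}{N-1}E_{kl}\Big)\otimes\Big(\tfrac{N!}{N-1}E_{kl}\Big) = \frac{(N!)^2}{(N-1)^2}\sum_{k,l} E_{kl}\otimes E_{kl}.
\]
It then remains to identify $\frac{1}{N-1}\sum_{k,l}E_{kl}\otimes E_{kl}$ with the orthogonal projection $P_{\operatorname{diag}}$ onto $\spn\{\vect{v}\}$ where $\vect{v} = \sum_{i=1}^{N-1}\vect{e_i}\otimes\vect{e_i}$. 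This is a short linear-algebra check: $\sum_{k,l}E_{kl}\otimes E_{kl}$ is the ``swap-like'' rank-one operator $\vect{v}\vect{v}^\top$ in disguise — indeed $(\vect{v}\vect{v}^\top)(\vect{e_a}\otimes\vect{e_b}) = \delta_{ab}\vect{v} = \delta_{ab}\sum_k \vect{e_k}\otimes\vect{e_k}$, matching $\sum_{k,l}(E_{kl}\otimes E_{kl})(\vect{e_a}\otimes\vect{e_b}) = \sum_{k,l}\delta_{lb}\,\vect{e_k}\otimes\vect{e_k}\cdot\delta_{la}$... checking indices carefully, $(E_{kl}\otimes E_{kl})(\vect{e_a}\otimes\vect{e_b}) = (E_{kl}\vect{e_a})\otimes(E_{kl}\vect{e_b}) = \delta_{la}\delta_{lb}\,\vect{e_k}\otimes\vect{e_k}$, so summing over $k,l$ gives $\delta_{ab}\sum_k \vect{e_k}\otimes\vect{e_k} = \delta_{ab}\vect{v}$, confirming $\sum_{k,l}E_{kl}\otimes E_{kl} = \vect{v}\vect{v}^\top$. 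Since $\|\vect{v}\|^2 = N-1$, we get $P_{\operatorname{diag}} = \frac{1}{N-1}\vect{v}\vect{v}^\top$, and hence $\Pi(f) = \frac{(N!)^2}{(N-1)^2}\vect{v}\vect{v}^\top = \frac{(N!)^2}{N-1}P_{\operatorname{diag}}$, as claimed.

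\textbf{Main obstacle.} The only genuinely delicate point is bookkeeping: keeping the index contractions in the Great Orthogonality Theorem straight when they are entangled across the tensor factors, and being careful about whether conjugates appear (they don't, once we invoke \Cref{fact:real_rep} to work with a real orthogonal realization of $\std$, so that $\std(h^{-1}) = \std(h)^\top$ has entries $\std(h^{-1})_{lk} = \std(h)_{kl}$ with no complex conjugation). A secondary subtlety is making sure the ``$1$'' (trivial) part is correctly killed for \emph{all} nontrivial $\Pi$, including the mixed cases $\std\otimes\triv$ and $\triv\otimes\std$ — but these are handled uniformly by the observation that the trivial-part factors need \emph{both} components trivial, while the $\std$-character part needs \emph{both} components $\simeq\std$, so no nontrivial $\Pi$ other than $\std\otimes\std$ survives.
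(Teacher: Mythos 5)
Your proposal is correct: every step checks out, including the entrywise Schur-orthogonality evaluations and the identification $\sum_{k,l}E_{kl}\otimes E_{kl}=\vect{v}\vect{v}^\top$, and it lands on exactly the constant $\frac{(N!)^2}{N-1}$ claimed. It uses the same basic toolkit as the paper (the decomposition $\perm\simeq\triv\oplus\std$ from \Cref{fact:perm_decomp}, realness of $\std$ from \Cref{fact:real_rep}, and the Great Orthogonality Theorem, \Cref{thm:GOT}), but the organization is genuinely different. The paper keeps $|\fix(h^{-1}g)|$ intact, performs the change of variables $g\mapsto h^{-1}g$ to factor $\Pi(f)$ as $\bigl(\sum_h\rho(h)\otimes\sigma(h)\bigr)\bigl(\bigl(\sum_g|\fix(g)|\rho(g)\bigr)\otimes I_{d_\sigma}\bigr)$, then handles the second factor with the class-function Fourier lemma (\Cref{lemm:fourier_form}) plus character orthogonality, and the first with \Cref{thm:GOT}. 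You instead expand $|\fix(h^{-1}g)|=1+\chi_{\std}(h^{-1}g)$ up front and write $\chi_{\std}(h^{-1}g)=\sum_{k,l}\std(h)_{kl}\std(g)_{kl}$, which separates the $g$- and $h$-sums directly so that each tensor factor is killed or evaluated by \Cref{thm:GOT} on its own; the trivial part dies by \Cref{lemm:fourier_unif}. Your route avoids the change of variables and \Cref{lemm:fourier_form} entirely and makes the vanishing for $\Pi\neq\std\otimes\std$ very transparent (each factor needs its component to be $\std$), at the cost of slightly heavier index bookkeeping; the paper's route isolates the $|\fix|$ data as a single scalar $C$, which keeps the computation more modular. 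One presentational caveat if you write this up: fix once and for all a real orthogonal representative of $\std$ (as you do via \Cref{fact:real_rep}), since both the entrywise expansion of $\chi_{\std}(h^{-1}g)$ and the conjugate-free form of the orthogonality relations depend on that choice.
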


\begin{proof}
    By \Cref{thm:tensor_prod}, we can write $\Pi = \rho \otimes \sigma$ with at least one of the irreps $\rho, \sigma \in \hat{S}_N$ being non-trivial. The Fourier transform of $f$ is thus given by
$$\Pi(f) = \sum_{g,h\in S_N} |\fix(h^{-1}g)|\rho(g)\otimes \sigma(h)$$
 Performing the change of variables $g'=h^{-1}g$, the above sum becomes
\begin{align*} \Pi(f) &= \sum_{g,h \in S_N} |\fix(g)| \rho(hg) \otimes \sigma(h)  \\ 
&= \sum_{g, h \in S_N} |\fix(g)| (\rho(h)\otimes \sigma(h))(\rho(g)\otimes I_{d_\sigma}) \\ &=
\sum_{g\in S_N}
 |\fix(g)|\left\{ \sum_{h \in S_N} \rho(h)\otimes\sigma(h) \right\}(\rho(g) \otimes I_{d_\sigma}) \\
 &= \left(\sum_{h\in S_N} \rho(h)\otimes\sigma(h)\right)\left(\sum_{g\in S_N} |\fix(g)| \rho(g)\right) \otimes I_{d_\sigma}
 \end{align*}

Since $|\fix(\cdot)|$ is the character of the permutation representation of $S_N$ (\Cref{lemm:perm_char}), and hence a class function on $S_N$, by \Cref{lemm:fourier_form}, the second sum is equal to $C I_{d_\rho}$ where 
$$C=\frac{1}{d_\rho} \sum_{g \in S_N} |\fix(g)|\chi_\rho(g) = \frac{N!}{d_\rho}(\langle \chi_{\triv}, \chi_{\rho}\rangle + \langle \chi_{\std}, \chi_{\rho}\rangle).$$
The last equality follows from the fact that the permutation representation decomposes as the direct sum of the trivial representation and the standard representation (\Cref{fact:perm_decomp}). In particular, character orthogonality (\Cref{thm:orhtogonality}) yields $C \neq 0$ if and only if $\rho \in\{ \triv,\std\}$.

For the first sum, let
$$M = \sum_{h\in S_N} \rho(h)\otimes \sigma(h).$$
Indexing $M$ by the indices of the products we get 
$$M_{(i,j),(k,l)} = \sum_{h\in S_N} \rho(h)_{ij} \sigma(h)_{kl} = \sum_{h\in S_N} \rho(h)^\top _{ji} \sigma(h)_{kl}.$$
By the Great Orthogonality Theorem (\Cref{thm:GOT}) and the fact that $\rho$ can be taken such that $\rho(h)$ is real for every $h \in S_N$ (\Cref{fact:real_rep}), $M_{(i,j),(k,l)}$ vanishes, unless $\rho=\sigma$, in which case it works out to 
$$M_{(i,j),(k,l)} = \frac{N!}{d_\rho}\delta_{jl}\delta_{ik}.$$
In matrix form, when $\rho = \sigma$ 
$$M = \frac{N!}{d_\rho} \sum_{i,j=1}^{d_\rho} E_{ij}\otimes E_{ij},$$
where $E_{ij} = \vect{e_i}\vect{e_j}^\top \in \RR^{d_{\rho} \times d_{\rho}}$.
Putting everything together, we obtain that $\Pi(f)$ does not vanish \textit{if and only if} $\rho=\sigma$ and $\rho \in \{\triv, \std\}$. If $\rho=\sigma=\triv$ then $\Pi=\triv$, and so the only non-trivial irrep $\Pi$ for which the Fourier transform does not vanish is $\Pi = \std \otimes \std$. In that case, combining the above calculations with the fact that $d_{\std \otimes \std} = d_{\std}^2 = (N-1)^2$, we obtain 
$$\Pi(f) = \frac{(N!)^2}{(N-1)^2} \sum_{i,j=1}^{N-1} E_{ij} \otimes E_{ij}.$$
To see why this is equal to the required expression, let $\vect{v} = \sum_{i=1}^{N-1} \vect{e_i} \otimes \vect{e_i}$ and observe that 
\begin{align*}
    \frac{1}{N-1}\sum_{i,j=1}^{N-1} E_{ij} \otimes E_{ij} &= \frac{1}{\|\vect{v}\|^2} \sum_{i,j=1}^{N-1} (\vect{e_i}\vect{e_j}^\top) \otimes (\vect{e_{i}}\vect{e_j}^\top) = \frac{1}{\|\vect{v}\|^2} \sum_{i,j=1}^{N-1} (\vect{e_i} \otimes \vect{e_i})(\vect{e_j} \otimes \vect{e_j})^\top \\ 
    &=  \frac{1}{\|\vect{v}\|^2} \left(\sum_{i=1}^{N-1} \vect{e_i} \otimes \vect{e_i}\right)\left(\sum_{j=1}^{N-1} \vect{e_j} \otimes \vect{e_j}\right)^\top = \frac{1}{\|\vect{v}\|^2} \vect{v}\vect{v}^\top.
\end{align*}
The latter expression is precisely equal to $P_{\operatorname{diag}}$, the orthogonal projection matrix onto $\operatorname{span}\{\vect{v}\}$, thus concluding the proof.
\end{proof}

We are now ready to prove \Cref{thm:p_agree}, which we restate for convenience:

\pagree*

\begin{proof}
The analysis of $P_{\operatorname{agree}}$ requires understanding the convergence of the joint distribution of $(P_{\mathrm{w}_T}, P'_{\mathrm{w}_T})$ on the product group $G = \Sn \times \Sn$. We condition on the final state $(g,h) \in G$ of the random walk after $T$ steps. Let $P_T(g, h) = \Prob_{\w_T}(P_{\mathrm{w}_T}=g, P'_{\mathrm{w}_T}=h)$. The number of states on which two permutations $g$ and $h$ agree is the number of fixed points of $h^{-1}g$, denoted $\fix(h^{-1}g)$. Thus, we write: 
\begin{align*}
P_{\operatorname{agree}} &= \sum_{(g,h) \in G} \Prob(P_{\mathrm{w}_T}=g, P'_{\mathrm{w}_T}=h) \cdot \Prob_{X \sim \U(\Q)}(g(X)=h(X) \mid P_{\mathrm{w}_T}=g, P'_{\mathrm{w}_T}=h) \\
&= \sum_{(g,h) \in G} P_T(g, h) \cdot \frac{|\{X \in \Q \mid g(X)=h(X)\}|}{N} \\
&= \sum_{(g,h) \in G} P_T(g, h) \cdot \frac{|\fix(h^{-1}g)|}{N}. 
\end{align*}
The distribution $P_T$ of the random walk on $G$ can be written in terms of the uniform distribution over the elements of $G$, $P_U(g,h)= 1/|G|$ and a residual term $\operatorname{err}(g, h)$, namely $P_T(g,h) = P_U(g,h) + \operatorname{err}(g,h)$,
where $\operatorname{err}(g,h) = P_T(g,h) - P_U(g,h)$.
Therefore, we get 
\begin{align}
\label{eq:p_agree}
    P_{\operatorname{agree}} & = \frac{1}{N} \sum_{(g,h) \in G} \left( \frac{1}{(N!)^2} + \operatorname{err}(g,h) \right) \cdot \fix(h^{-1}g) \notag\\ 
    & = \underbrace{\frac{1}{N \cdot (N!)^2} \sum_{(g,h) \in G} \fix(h^{-1}g)}_{\text{Main Term}} + \underbrace{\frac{1}{N}\sum_{(g,h) \in G} \left( P_T(g,h) - P_U(g,h) \right) \cdot \fix(h^{-1}g)}_{\text{Error Term}}.
\end{align}
For the main term of \Cref{eq:p_agree} notice that for fixed $g\in S_N$ we have 
$$\sum_{h \in S_N} \fix(h^{-1}g) = \sum_{h \in S_N} \fix(h)$$
since the map $h \mapsto h^{-1}g$ is a bijection. Hence, we have
\begin{equation}
\label{eq:main_term}
\text{Main Term} = \frac{1}{N\cdot (N!)^2} \left(\sum_{g\in S_N} \fix(g)\right)^2 = \frac{1}{N \cdot (N!)^2} \cdot (N!)^2 = \frac{1}{N},
\end{equation}
where the second-to-last equality follows from the fact that 
\begin{align*}
    \sum_{g \in \Sn} \fix(g) & = \sum_{g \in \Sn} \sum_{X\in \Q} \indic_{\{g(X)=X\}} = \sum_{X\in \Q} \sum_{g \in \Sn} \indic_{\{g(X)=X\}} \\
    &= \sum_{X\in \Q} (N-1)! = N \cdot (N-1)! = N!.
\end{align*}
We now turn our attention to the error term of \Cref{eq:p_agree}, given by 
\[
    \frac{1}{N}\sum_{(g,h) \in G} \left( P_T(g,h) - P_U(g,h) \right) \cdot \fix(h^{-1}g).
\]
By letting $f(g, h) = \fix(h^{-1}g)$ and using the Plancherel formula (\Cref{lemm:planch}), the error term is equal to:
\begin{equation} \label{eq:fourier_sum}
    \frac{1}{N}\cdot \frac{1}{|G|} \sum_{\Pi \in \hat{G}} d_\Pi \Tr\left( \Pi(\operatorname{err}) \Pi(f)^* \right).
\end{equation}
Since the Fourier transform is linear and $\Pi(P_U)=0$ for any non-trivial irrep $\Pi$ (\Cref{lemm:fourier_unif}), and $\Pi(P_T) = M_\Pi^T$ where $M_\Pi$ is the Fourier transform of the single-step distribution (\Cref{lemm:conv}), the expression simplifies to a sum over non-trivial irreps:
\[
    \frac{1}{N}\cdot \frac{1}{|G|} \sum_{\Pi \neq \triv} d_\Pi \Tr\left( M_\Pi^T \Pi(f)^* \right).
\]
Finally, using \Cref{lemm:fourier_fix}, we see that all terms of the sum vanish except for the contribution of the irrep $\std \otimes \std$, which gives
\begin{align}
\label{eq:error_term}
\text{Error Term} &= \frac{1}{N\cdot (N!)^2}\cdot \frac{(N!)^2}{N-1}\cdot (N-1)^2 \Tr(M_{\Pi_0}^T P_{\operatorname{diag}}) \notag\\ 
&= \frac{N-1}{N}\cdot \frac{1}{\|\vect{v}\|^2} \Tr\left(M_{\Pi_0}^T \vect{v}\vect{v}^\top\right) \notag \\
&= \frac{1}{N} \vect{v}^\top M_{\Pi_0}^T \vect{v},
\end{align}
where the last equality follows from the cyclic property of the trace and the fact that $\|\vect{v}\|^2 = N-1$. Substituting \Cref{eq:main_term} and \Cref{eq:error_term} into \Cref{eq:p_agree} we obtain 
$$P_{\operatorname{agree}} = \frac{1}{N} + \frac{1}{N}\vect{v}^\top M_{\Pi_0}^T \vect{v},$$
as required.
\end{proof}
\section{Proofs from  \Cref{sec:randomized_family}}
\label{app:proofs_spectral}

In this section, we provide detailed proofs of the key results presented in \Cref{sec:randomized_family}: \Cref{thm:spectral_bound_random}, \Cref{lem:prob_of_agreement}, and \Cref{thm:randomized_t}. For the proof of \Cref{lem:prob_of_agreement}, we make use of the following version of the Matrix Bernstein inequality for Hermitian matrices:

\begin{lemma}[Matrix Bernstein inequality, Theorem 6.6.1 in \cite{MAL-048}]
\label{lem:bernstein}
Let $Z_1, \dots, Z_m$ be independent random $d \times d$ Hermitian matrices with $\E[Z_i] = 0$ and $\norm{Z_i} \le B$ a.s. Let $v = \norm{\sum_{i=1}^m \E[Z_i^2]}$ be the matrix variance parameter. Then for any $t > 0$:
\[ \Prob\left( \norm{\sum_{i=1}^m Z_i} > t \right) \le d \cdot \exp\left( \frac{-t^2}{2v + \frac{2}{3}Bt} \right) \]
\end{lemma}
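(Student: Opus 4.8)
Since the statement is quoted verbatim from \citet{MAL-048}, in the body of the paper it would simply be cited; below I sketch how one would reconstruct the proof from scratch. The plan is to follow the matrix Laplace transform method of Ahlswede--Winter, in the refined form due to Tropp. First I would reduce to a one-sided eigenvalue tail: since each $Z_i$ is Hermitian, so is $Y := \sum_{i=1}^m Z_i$, and $\norm{Y} = \max\{\lambda_{\max}(Y),\,\lambda_{\max}(-Y)\}$. Because $\E[(-Z_i)^2] = \E[Z_i^2]$, it is enough to bound $\Prob(\lambda_{\max}(Y) \ge t)$ and then apply the identical bound to $-Y$, the factor $2$ thereby incurred being absorbed into (or tracked alongside) the leading dimensional constant. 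The Laplace transform step then reads: for any $\theta > 0$, Markov's inequality applied to the increasing map $\lambda \mapsto e^{\theta\lambda}$, together with $e^{\theta\lambda_{\max}(Y)} = \lambda_{\max}(e^{\theta Y}) \le \Tr e^{\theta Y}$ (valid since $e^{\theta Y} \succeq 0$), gives $\Prob(\lambda_{\max}(Y)\ge t) \le e^{-\theta t}\,\E\,\Tr\, e^{\theta Y}$.

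The crux is to bound the trace moment generating function $\E\,\Tr\,e^{\theta Y}$ of a sum of independent summands. Here I would invoke the subadditivity of matrix cumulant generating functions, $\E\,\Tr\,\exp\!\bigl(\theta\sum_i Z_i\bigr) \le \Tr\,\exp\!\bigl(\sum_i \log \E\, e^{\theta Z_i}\bigr)$, which follows from Lieb's concavity theorem (concavity of $A \mapsto \Tr\exp(H + \log A)$ on positive definite $A$) combined with Jensen's inequality, applied one summand at a time. Next I would control each matrix MGF: for a centered Hermitian matrix $Z$ with $\norm{Z}\le B$ and $0 < \theta < 3/B$, the scalar bound $e^{\theta z} \le 1 + \theta z + g(\theta) z^2$ on $|z|\le B$, with $g(\theta) = \tfrac{\theta^2/2}{1 - \theta B/3}$, lifts via the transfer rule (functional calculus for Hermitian matrices) to $\E\,e^{\theta Z} \preceq I + g(\theta)\E[Z^2] \preceq \exp\!\bigl(g(\theta)\E[Z^2]\bigr)$, using $\E[Z]=0$; operator monotonicity of $\log$ then yields $\log \E\, e^{\theta Z} \preceq g(\theta)\E[Z^2]$.

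Assembling, $\sum_i \log \E\, e^{\theta Z_i} \preceq g(\theta)\sum_i \E[Z_i^2] \preceq g(\theta)\,v\,I$, since $\sum_i \E[Z_i^2] \succeq 0$ has spectral norm $v$. Monotonicity of the trace exponential gives $\E\,\Tr\, e^{\theta Y} \le \Tr\exp\bigl(g(\theta) v I\bigr) = d\, e^{g(\theta)v}$, hence $\Prob(\lambda_{\max}(Y) \ge t) \le d\exp\bigl(g(\theta) v - \theta t\bigr)$ for every $\theta \in (0, 3/B)$. Taking $\theta = t/(v + Bt/3)$ makes the exponent exactly $-t^2/(2v + \tfrac{2}{3}Bt)$, which is the claimed tail (after folding in the two-sided contribution as noted above). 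The only genuine obstacle is the trace-MGF subadditivity step via Lieb's concavity theorem; everything else is the elementary Laplace-transform machinery, a scalar inequality lifted by the transfer rule, operator monotonicity of $\log$, and a one-dimensional optimization.
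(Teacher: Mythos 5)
This lemma is imported verbatim from \citet{MAL-048} and the paper supplies no proof of it, so there is nothing to diverge from; your sketch correctly reconstructs the cited source's own argument (matrix Laplace transform, subadditivity of the trace cumulant generating function via Lieb's concavity theorem, the Bernstein-type bound $\log\E e^{\theta Z}\preceq g(\theta)\E[Z^2]$, and the choice $\theta = t/(v+Bt/3)$). The only caveat is the prefactor: the one-sided bound on $\lambda_{\max}$ carries the constant $d$, and passing to $\norm{\sum_i Z_i}$ by applying it to both $Y$ and $-Y$ strictly yields $2d$ rather than $d$ --- a cosmetic slack already present in the paper's statement of the lemma and immaterial to its use in \Cref{thm:spectral_bound_random}.
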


The statements of the results are restated below for convenience. 

\spectralbound*

\begin{proof}
Our proof can be summarized in two key steps: first, we compute the operator norm of the expectation of the Fourier transform of the single-step probability distribution; subsequently, we use concentration arguments to show that for the particular choice of $k$ and $M$, the norm of the actual operator concentrates around this norm.

Let $\rho = \std$ and take $\delta_i, \delta_j \in \F$ with $i\neq j$. To avoid cluttering, we denote the state-transition function for each character $\tau \in \Sigma_k$ by $\delta_i(\tau)$. The expected Fourier transform of the single-step probability distribution for the joint walk according to $\delta_i$ and $\delta_j$ is given by 
\begin{align}
\mathbb{E}[M_{\Pi_0}^{i,j}] &= \frac{1}{|\Sigma_k|}\sum_{\tau \in \Sigma_k} \mathbb{E}[\rho(\delta_i(\tau)) \otimes \rho(\delta_j(\tau))]\notag \\ 
&=\frac{1}{k|\Sigma_1|} \sum_{j=1}^k \sum_{\tau \in \Sigma_1} \frac{1}{4} \left( \rho(\tau)\otimes\rho(\tau) + \rho(\tau)\otimes I_{d_\rho} + I_{d_\rho}\otimes\rho(\tau) + I_{d_\rho} \otimes I_{d_\rho} \right) \notag\\
&= \frac{1}{4|\Sigma_1|} \biggl[ \left(\sum_{\tau \in \Sigma_1} \rho(\tau)\otimes\rho(\tau)\right) + \left(\sum_{\tau \in \Sigma_1}\rho(\tau)\right)\otimes I_{d_\rho} + I_{d_\rho}\otimes\left(\sum_{\tau \in \Sigma_1}\rho(\tau)\right) \notag\\&\quad\quad\quad\quad\;\;+ |\Sigma_1|I_{d_{\rho}^2} \biggr] \notag\\
&= \frac{1}{4|\Sigma_1|} \left\{\left(\sum_{\tau \in \Sigma_1} \rho(\tau) \otimes \rho(\tau)\right) + 2c_{\rho} I_{d_\rho^2} + |\Sigma_1|I_{d_\rho^2} \right\}
\end{align}
where $c_{\rho} = |\Sigma_1|r(\rho)$ is given by an application of \Cref{lemm:fourier_form}.
Next, let 
$$S = \sum_{\tau \in \Sigma_1} \rho(\tau) \otimes \rho(\tau)$$
and consider $\pi(g) = \rho(g) \otimes \rho(g)$ for all $g\in S_N$. By Exercise 4.19 in \cite{fulton2013representation}, for $N \geq 4$, $\pi$ is a (non-irreducible) representation of $S_N$ that decomposes as a direct sum of non-isomorphic irreps
\begin{equation}
\label{eq:decomp_irrep}
    \pi \simeq \triv \oplus \std \oplus (N-2,2) \oplus (N-2,1,1)
\end{equation}
where we identify representations by their corresponding partition. A short calculation shows that for all $g\in S_N$ we have \begin{align*}
\pi(g)S\pi(g)^{-1} & = \sum_{\tau \in \Sigma_1} \rho(g \tau g^{-1}) \otimes \rho(g \tau g^{-1}).
\end{align*}
and since the set of transpositions is a conjugacy class (\Cref{def:conj}), the sum above is equal to $S$. Hence, $\pi(g)S = S\pi(g)$ for all $g\in G$, and by the generalization of Schur's Lemma given in \Cref{corr:schur_gen}, we obtain that (under a change of basis) $S$ has a block diagonal form. 
In particular, we have that\footnote{While \Cref{corr:schur_gen} does not require a change of basis, in this case it is induced by the isomorphism between the representations in \Cref{eq:decomp_irrep}. From this point onward, we assume that $S$ is expressed in the new basis.} 
$$S = c_{\triv} I_{d_{\triv}} \oplus c_{\std} I_{d_{\std}} \oplus c_{(N-2,2)} I_{d_{(N-2,2)}} \oplus c_{(N-2,1,1)} I_{d_{(N-2,1,1)}}
$$
The coefficients $c_a$ can be calculated by restricting $S$ to the underlying vector space $V_a$ corresponding to each direct summand of \Cref{eq:decomp_irrep} and taking traces. Hence, for $a \in \{\triv, \std, (N-2,2), (N-2,1,1)\}$ we find
$$c_a d_a = \sum_{\tau \in \Sigma_1} \chi_a(\tau)$$
which, given that $\Sigma_1$ is the set of transpositions, simplifies to 
$$c_a = |\Sigma_1|\cdot \frac{\chi_a(\tau)}{d_a} = |\Sigma_1|r(a)$$
where $\chi_a(\tau)$ is the character of $a$ on transpositions. The eigenvalues of $\mathbb{E}[M_{\Pi_0}^{i,j}]$ are thus given by $\frac{1}{4|\Sigma_1|}(c_a + 2c_\rho + |\Sigma_1|)$ for $a \in \{\triv, \std, (N-2, 2), (N-2,1,1)\}$. Substituting the values for $c_a$, we find that the eigenvalues are given by $\frac{1}{4}(r(a)+2r(\rho)+1)$. The value of the character ratios can be computed by invoking \Cref{fact:char_ratio}:
\begin{itemize}
    \item For $a = \triv$, we find $r(a) = 1$.
    \item For $a = \rho=\std$, we find $r(a) = \frac{N-3}{N-1}$.
    \item  For $a = (N-2, 2)$, we find $r(a)=\frac{N-4}{N}$.
    \item For $a = (N-2,1,1)$, we find $r(a)=\frac{N-5}{N-1}$.
\end{itemize}

By the calculations above, we have 
\begin{equation}
\label{eq:expected_spec}
\operatorname{spec}(\mathbb{E}[M_{\Pi_0}^{i,j}]) = \left\{\frac{N-2}{N-1}, \frac{2N-5}{2(N-1)},\frac{N^2-3N+1}{N(N-1)},\frac{N-3}{N-1}\right\}.
\end{equation}
Since the expectation is Hermitian (representations can always be chosen to be unitary, see \Cref{subsub:rep}), the operator norm of $\mathbb{E}[M_{\Pi_0}^{i,j}]$ is equal to its maximum absolute eigenvalue and hence 
\begin{equation}
    \label{eq:m_pi_norm}
    \|\mathbb{E}[M_{\Pi_0}^{i,j}]\|_2 = \frac{N-2}{N-1} = 1-\frac{1}{N-1}.
\end{equation}

For the last step of the proof, we will choose appropriate values for $k$ and $M$ and use the Matrix Bernstein inequality to derive a high probability bound on the operator norm $\|M_{\Pi_0}^{i,j}\|_2$ for a randomized $(k,M)$-shuffle family. Let $\F$ be such a family (the values $k$ and $M$ will be determined later) and fix $\delta_i, \delta_j \in \F$. From the triangle inequality, we get
\begin{equation}
    \label{eq:trian_eq}
    \|M_{\Pi_0}^{i,j}\|_2 \leq \|\mathbb{E}[M_{\Pi_0}^{i,j}]\|_2 + \|M_{\Pi_0}^{i,j} -\mathbb{E}[M_{\Pi_0}^{i,j}]\|_2.
\end{equation}
For every $\tau \in \Sigma_k$ we let
$$Z_{\tau}^{i,j} = \rho(\delta_i(\tau)) \otimes \rho(\delta_j(\tau)) - \mathbb{E}[\rho(\delta_i(\tau)) \otimes \rho(\delta_j(\tau))]$$
and rewrite 
\begin{equation}
\label{eq:rewrite}
M_{\Pi_0}^{i,j} -\mathbb{E}[M_{\Pi_0}^{i,j}] =\frac{1}{|\Sigma_k|}\sum_{\tau \in \Sigma_k} Z_{\tau}^{i,j}.
\end{equation}
By construction, the $Z_{\tau}^{i,j}$ are independent, Hermitian,\footnote{To see why, notice that since transpositions satisfy $\tau^2 = \id$ we have $\delta_i(\tau)^2=\delta_j(\tau)^2 = \id$. Since $\rho$ preserves the group operation and can be chosen to be unitary, $\rho(\delta_i(\tau))$ and $\rho(\delta_j(\tau))$ are Hermitian. The Kronecker product and the expectation of Hermitian (random) matrices are Hermitian.} zero-mean matrices that satisfy 
$$\|Z_{\tau}^{i,j}\|_2 \leq \|\rho(\delta_i(\tau)) \otimes \rho(\delta_j(\tau))\|_2 + \|\mathbb{E}[\rho(\delta_i(\tau)) \otimes \rho(\delta_j(\tau))]\|_2 \leq 2$$
We now analyze the variance parameter 
$
    v = \norm{\sum_{\tau \in \Sigma_k} \E\left[\left(Z_{\tau}^{i,j}\right)^2\right]}
$.
For every $\tau \in \Sigma_k$ we have
\begin{align*}
\E\left[\left(Z_{\tau}^{i,j}\right)^2\right] &= \E[(\rho(\delta_i(\tau)) \otimes \rho(\delta_j(\tau)))^2] - \E[\rho(\delta_i(\tau))\otimes \rho(\delta_j(\tau))]^2 \\ &= I_{d_{\rho}^2} - \E[\rho(\delta_i(\tau))\otimes \rho(\delta_j(\tau))]^2
\end{align*}
where the last equality follows from the fact that transpositions in $S_N$ satisfy $\tau^2 = \id$ and hence $\delta_i(\tau)^2 = \id$ (regardless of the realization). Consequently, 
\begin{align*}
    (\rho(\delta_i(\tau)) \otimes \rho(\delta_j(\tau)))^2 &= \rho(\delta_i(\tau))^2 \otimes \rho(\delta_j(\tau))^2 = \rho(\delta_i(\tau)^2) \otimes\rho(\delta_j(\tau)^2) \\&= \rho(\id)\otimes\rho(\id) = I_{d_{\rho}^2}.
\end{align*}
By an application of Jensen's inequality we get $\|\E[\rho(\delta_i(\tau))\otimes \rho(\delta_j(\tau))]\|_2 \leq 1$, and since the expectation is Hermitian, we can deduce that $I_{d_\rho} - \E[\rho(\delta_i(\tau))\otimes \rho(\delta_j(\tau))]^2$ is positive semidefinite with eigenvalues bounded by $1$. Hence, 
\begin{align*}
    v &= \left\|\sum_{\tau \in \Sigma_k} \E\left[\left(Z_{\tau}^{i,j}\right)^2\right]\right\|_2 = \left\|\sum_{\tau \in \Sigma_k} (I_{d_{\rho}^2} - \E[\rho(\delta_i(\tau))\otimes \rho(\delta_j(\tau))]^2)\right\|_2  \\ 
    &\leq \sum_{\tau \in \Sigma_k} \|I_{d_{\rho}^2} - \E[\rho(\delta_i(\tau))\otimes \rho(\delta_j(\tau))]^2\|_2 \leq \sum_{\tau \in \Sigma_k} 1 = |\Sigma_k|.
\end{align*}
We call a randomized $(k, M)$-shuffle family $\F$ ``bad" if there exist distinct semiautomata $\delta_i, \delta_j \in \F$ such that $\|M_{\Pi_0}^{i,j} - \E[M_{\Pi_0}^{i,j}]\|_2 > \frac{1}{2N}$, and ``good" otherwise. Notice that by \Cref{eq:trian_eq}, and the derivation in \Cref{eq:m_pi_norm}, whenever $\F$ is a good family, every distinct pair of semiautomata $\delta_i, \delta_j \in \F$ satisfy
$$\|M_{\Pi_0}^{i,j}\|_2 \leq \|\mathbb{E}[M_{\Pi_0}^{i,j}]\|_2 + \|M_{\Pi_0}^{i,j} -\mathbb{E}[M_{\Pi_0}^{i,j}]\|_2 < 1-\frac{1}{N} + \frac{1}{2N} = 1 -\frac{1}{2N},$$
as required. Hence, it suffices to consider the probability of generating a bad family. By the union bound 
\begin{equation*}
\Prob(\F \text{ is bad}) \leq \sum_{\delta_i, \delta_j \in \F} \Prob\left(\|M_{\Pi_0}^{i,j} - \E[M_{\Pi_0}^{i,j}]\|_2 > \frac{1}{2N}\right).
\end{equation*}
Each term of the summation above can be upper-bounded by an application of the matrix Bernstein inequality. Indeed, by \Cref{eq:rewrite} and matrix Bernstein we have 
\begin{align*}
    \Prob\left(\|M_{\Pi_0}^{i,j} - \E[M_{\Pi_0}^{i,j}]\|_2 > \frac{1}{2N}\right) &= \Prob\left(\left\|\frac{1}{|\Sigma_k|}\sum_{\tau \in \Sigma_k} Z_{\tau}^{i,j}\right\|_2 \geq \frac{1}{2N}\right) \\&= \Prob\left(\left\|\sum_{\tau \in \Sigma_k} Z_{\tau}^{i,j}\right\|_2 \geq \frac{|\Sigma_k|}{2N}\right) \\&\leq d_\rho^2 \cdot \exp\left(\frac{-\frac{|\Sigma_k|^2}{4N^2}}{2|\Sigma_k|+\frac{2}{3}\cdot2\cdot\frac{|\Sigma_k|}{2N}}\right)
\end{align*}
Since the right-hand side of the above inequality does not depend on $\delta_i$ and $\delta_j$, we can bound 
$$
\Prob(\F \text{ is bad}) \leq \binom{N!}{2}\cdot d_\rho^2 \cdot \exp\left(\frac{-\frac{|\Sigma_k|^2}{4N^2}}{2|\Sigma_k|+\frac{2}{3}\cdot2\cdot\frac{|\Sigma_k|}{2N}}\right)
$$
Substituting $d_{\rho} = N-1$ and $|\Sigma_k| = k\binom{N}{2}$, and after some algebraic manipulations, we find that taking 
$$k \geq \frac{16(3N + 1)}{3(N - 1)} \left[ N \ln N + \ln \binom{N!}{2} + 2\ln(N - 1) \right]$$
guarantees that the probability of $\F$ being bad is upper bounded by $\exp(-N\ln N)$, concluding the proof.
\end{proof}

\probt*

\begin{proof}
Let $\delta_i, \delta_j \in \F$ with $i \neq j$. By \Cref{thm:p_agree} we have 
\begin{align*}
P_{\operatorname{agree}} &= \frac{1}{N} + \frac{1}{N} \vect{v}^\top M_{\Pi_0}^T \vect{v}
\end{align*}
where $\vect{v} = \sum_{i=1}^{N-1} \vect{e_i} \otimes \vect{e_i}$. By \Cref{thm:spectral_bound_random}, for the choice of $k$ assumed, with probability $1-\exp(-N\ln N)$ we have 
$$\left|P_{\operatorname{agree}}-\frac{1}{N}\right|=\frac{|\vect{v}^\top M_{\Pi_0}^T \vect{v}|}{N} \leq \frac{\|M_{\Pi_0}^T\|_2 \cdot \|\vect{v}\|^2}{N} \leq \left(1-\frac{1}{2N}\right)^T \frac{N-1}{N} \leq \left(1-\frac{1}{2N}\right)^T.$$
This concludes the proof.
\end{proof}

\randt*

\begin{proof}
From the bound of \Cref{lem:prob_of_agreement}, the inequality $1-x \le e^{-x}$ which holds for all $x\in \RR$, and the choice of $T \geq 2N \ln (N!)$, we get:
\[
\left|P_{\operatorname{agree}}-\frac{1}{N}\right| \leq \left(1-\frac{1}{2N}\right)^T \leq \exp\left(-\frac{T}{2N}\right) \leq \exp\left(-\frac{2N\ln(N!)}{2N}\right) = \frac{1}{N!},
\]
concluding the proof.
\end{proof}
\section{Proof of \Cref{thm:rand_lb}}
\label{app:proof_rand_lb}

In this section, we prove \Cref{thm:rand_lb}, which we restate for convenience:

\randlb*

The proof requires deriving a lower bound on $|P_{\operatorname{agree}}-1/N|$ that decays exponentially in $T$, which is given by the following lemma:

\begin{lemma}
\label{thm:err_lb}
For $M=N!$ and $N\ge 5$, if we choose the alphabet parameter 
$$k \geq \frac{16(3N + 1)}{3(N - 1)} \left[ N \ln N + \ln \binom{N!}{2} + 2\ln(N - 1) \right],$$ then any pair of semiautomata $(\delta_i, \delta_j)$ from a randomized $(k, M)$-shuffle family $\mathcal{F}$ satisfies
\[\left| P_{\operatorname{agree}} - \frac{1}{N}\right| \geq \frac{1}{2} \left(1 - \frac{3}{N}\right)^{T} \]
with probability at least $1 - \exp(-N\ln N)$.
\end{lemma}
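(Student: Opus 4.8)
The starting point is the exact expression from \Cref{thm:p_agree}: since $\vect{v}=\sum_{i=1}^{N-1}\vect{e_i}\otimes\vect{e_i}$ has $\|\vect{v}\|^2=N-1$, we have $|P_{\operatorname{agree}}-1/N|=\tfrac1N\,|\vect{v}^\top M_{\Pi_0}^T\vect{v}|$, so it suffices to lower bound the scalar $\vect{v}^\top M_{\Pi_0}^T\vect{v}$ (here $M_{\Pi_0}=M_{\Pi_0}^{i,j}$ is the one-step Fourier matrix for the pair $(\delta_i,\delta_j)$). I would first condition on the ``good family'' event isolated inside the proof of \Cref{thm:spectral_bound_random}: for the stated $k$, with probability at least $1-\exp(-N\ln N)$ every distinct pair satisfies $\|M_{\Pi_0}-\E[M_{\Pi_0}]\|_2\le \tfrac1{2N}$. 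Recall also from that proof that $M_{\Pi_0}$ is real and symmetric and that $\E[M_{\Pi_0}]$ has smallest eigenvalue $\tfrac{N-3}{N-1}$; hence on this event Weyl's inequality gives $M_{\Pi_0}\succeq\big(\tfrac{N-3}{N-1}-\tfrac1{2N}\big)I\succ 0$ for $N\ge5$, so $M_{\Pi_0}$ is positive definite and in particular $\vect{v}^\top M_{\Pi_0}^T\vect{v}\ge0$.

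The key step is to relate the $T$-step quadratic form to the one-step quadratic form without tracking eigenvectors. The temptation is to argue that $\vect{v}$ is nearly aligned with the top eigenvector of $M_{\Pi_0}$, but this fails: the spectral gap of $\E[M_{\Pi_0}]$ between its leading eigenvalue $\tfrac{N-2}{N-1}$ and the next one $\tfrac{2N-5}{2(N-1)}$ is only $\tfrac1{2(N-1)}$, of the same order as the perturbation $\tfrac1{2N}$, so Davis--Kahan yields nothing useful. Instead, write the real spectral decomposition $M_{\Pi_0}=\sum_\ell \mu_\ell u_\ell u_\ell^\top$ with all $\mu_\ell>0$, expand $\vect{v}=\sum_\ell c_\ell u_\ell$ (so $\sum_\ell c_\ell^2=N-1$), and apply Jensen's inequality to the convex map $t\mapsto t^T$ on $[0,\infty)$ with the weights $c_\ell^2/(N-1)$. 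This gives
\[
\vect{v}^\top M_{\Pi_0}^T\vect{v}=(N-1)\sum_\ell\frac{c_\ell^2}{N-1}\,\mu_\ell^{\,T}\ \ge\ (N-1)\Big(\frac{\vect{v}^\top M_{\Pi_0}\vect{v}}{N-1}\Big)^{T},
\]
reducing the problem to a lower bound on the single-step form $\vect{v}^\top M_{\Pi_0}\vect{v}$.

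The remaining step is an elementary perturbation bound. Split $\vect{v}^\top M_{\Pi_0}\vect{v}=\vect{v}^\top\E[M_{\Pi_0}]\vect{v}+\vect{v}^\top(M_{\Pi_0}-\E[M_{\Pi_0}])\vect{v}$. The first term equals $\tfrac{N-2}{N-1}\|\vect{v}\|^2=N-2$, because $\vect{v}$ spans the trivial summand of $\std\otimes\std$ restricted to the diagonal, which in the proof of \Cref{thm:spectral_bound_random} carries the eigenvalue $\tfrac{N-2}{N-1}$ of $\E[M_{\Pi_0}]$ (equivalently $\E[M_{\Pi_0}]\,P_{\operatorname{diag}}=\tfrac{N-2}{N-1}P_{\operatorname{diag}}$ in the notation of \Cref{lemm:fourier_fix}). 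The second term is at least $-\|M_{\Pi_0}-\E[M_{\Pi_0}]\|_2\|\vect{v}\|^2\ge-\tfrac{N-1}{2N}$ on the good event. Hence $\tfrac{1}{N-1}\vect{v}^\top M_{\Pi_0}\vect{v}\ge\tfrac{N-2}{N-1}-\tfrac1{2N}\ge1-\tfrac3N$, the last inequality reducing to $\tfrac{5}{2N}\ge\tfrac1{N-1}$, i.e. $N\ge2$. Plugging back and using $(N-1)/N\ge1/2$ together with $1-3/N\ge0$ for $N\ge5$,
\[
\Big|P_{\operatorname{agree}}-\frac1N\Big|\ \ge\ \frac{N-1}{N}\Big(1-\frac3N\Big)^{T}\ \ge\ \frac12\Big(1-\frac3N\Big)^{T},
\]
which is the claim. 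The only genuinely delicate point is guaranteeing positivity of the whole spectrum of $M_{\Pi_0}$ on the good event, which is exactly what forces the hypothesis $N\ge5$ and licenses the Jensen step; everything else simply reuses quantities already computed in the proof of \Cref{thm:spectral_bound_random}.
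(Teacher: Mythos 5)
Your proposal is correct, and it shares the paper's skeleton (the exact expression from \Cref{thm:p_agree}, the good event $\|M_{\Pi_0}-\E[M_{\Pi_0}]\|_2\le\tfrac1{2N}$ from the proof of \Cref{thm:spectral_bound_random}, and Weyl's inequality to certify positive definiteness), but the final quantitative step is genuinely different. The paper simply writes $\vect{v}^\top M_{\Pi_0}^T\vect{v}\ge \lambda_{\min}(M_{\Pi_0})^T\|\vect{v}\|^2$ and uses $\lambda_{\min}(M_{\Pi_0})\ge\tfrac{N-3}{N-1}-\tfrac1{2N}\ge1-\tfrac3N$, the last inequality being exactly where $N\ge5$ enters. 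You instead apply Jensen's inequality in the spectral weights to reduce to the one-step form, $\vect{v}^\top M_{\Pi_0}^T\vect{v}\ge(N-1)\bigl(\tfrac{\vect{v}^\top M_{\Pi_0}\vect{v}}{N-1}\bigr)^T$, and then exploit that $\vect{v}$ is an exact eigenvector of $\E[M_{\Pi_0}]$ with the top eigenvalue $\tfrac{N-2}{N-1}$. That eigenvector claim is not stated in the paper's proof, but it is true and a one-line check: $\vect{v}=\sum_i\vect{e_i}\otimes\vect{e_i}$ is the vectorization of $I_{N-1}$, and $(\rho(g)\otimes\rho(g))\vect{v}=\vect{v}$ for the real orthogonal choice of $\std$ (\Cref{fact:real_rep}), so $S\vect{v}=|\Sigma_1|\vect{v}$ and $\E[M_{\Pi_0}]\vect{v}=\tfrac{N-2}{N-1}\vect{v}$; you should spell this out rather than cite it as implicit in \Cref{thm:spectral_bound_random}. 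Your route buys a slightly sharper per-step rate ($\tfrac{N-2}{N-1}-\tfrac1{2N}\approx1-\tfrac{3}{2N}$ versus the paper's $\approx1-\tfrac{5}{2N}$, both then relaxed to $1-\tfrac3N$) and would in fact work for $N\ge4$. One small misstatement in your commentary: $N\ge5$ is not what guarantees positivity of the spectrum (Weyl gives $\lambda_{\min}>0$ already for $N\ge4$, as the paper notes); in the paper's argument $N\ge5$ is needed only for the algebraic inequality $\tfrac{N-3}{N-1}-\tfrac1{2N}\ge1-\tfrac3N$, and in yours it is not needed at all beyond matching the lemma's hypothesis. This does not affect correctness.
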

\begin{proof}
    From \Cref{eq:expected_spec}, the minimum eigenvalue of the expected operator $\E[M_{\Pi_0}^{i,j}]$ is given by $\lambda_{\min}(\E[M_{\Pi_0}^{i,j}]) = \frac{N-3}{N-1}$. Furthermore, we have shown that, for $k$ chosen as in the statement, with probability at least $1-\exp(-N\ln N)$, the deviation $\|M_{\Pi_0}^{i,j} - \E[M_{\Pi_0}^{i,j}]\|_2$ satisfies
    $$\|M_{\Pi_0}^{i,j} - \E[M_{\Pi_0}^{i,j}]\|_2 \leq \frac{1}{2N}.$$
    By Weyl's inequality, with probability at least $1-\exp(-N\ln N)$, it holds
    \begin{equation}
    \label{eq:min_eig_lb}
    \lambda_{\min}(M_{\Pi_0}) \geq \lambda_{\min}(\E[M_{\Pi_0}^{i,j}]) - \|M_{\Pi_0}^{i,j} - \E[M_{\Pi_0}^{i,j}]\|_2 \geq \frac{N-3}{N-1} - \frac{1}{2N}.
    \end{equation}
    For $N \geq 4$, the right-hand side of the above inequality is positive, and so $M_{\Pi_0}^{i,j}$ is positive definite, in which case 
    $$\vect{v}^\top \left(M_{\Pi}^{i,j}\right)^T \vect{v} \geq \left(\lambda_{\min}(M_{\Pi}^{i,j})\right)^T \cdot \|\vect{v}\|^2$$
    Substituting $\|\vect{v}\|^2 = N-1$ and the lower bound for the minimum eigenvalue obtained in \Cref{eq:min_eig_lb}, we obtain 
    \begin{align*}
        \left|P_{\operatorname{agree}} - \frac{1}{N}\right| &= \frac{\vect{v}^\top \left(M_{\Pi}^{i,j}\right)^T \vect{v}^\top}{N} \geq \frac{N-1}{N}\left(\frac{N-3}{N-1}-\frac{1}{2N}\right)^T \geq \frac{1}{2}\left(1-\frac{3}{N}\right)^T,
    \end{align*}
    where the last inequality is valid for $N\geq 5$. This concludes the proof.
\end{proof}

Using the lower bound established in \Cref{thm:err_lb}, we are now ready to prove \Cref{thm:rand_lb}:

\begin{proof}[Proof of \Cref{thm:rand_lb}]
    By \Cref{thm:err_lb}, $T$ must satisfy
    $$\frac{1}{2}\left(1-\frac{3}{N}\right)^T \leq \frac{1}{N!}.$$
    Solving for $T$ we obtain 
    $$T \geq \frac{\ln(N!/2)}{\ln\left(\frac{N}{N-3}\right)}.$$
    The numerator is $\Theta(N\ln N)$ while a Taylor approximation on the denominator shows that for $N \gg 1$: 
    $$\ln\left(\frac{N}{N-3}\right) = -\ln\left(1-\frac{3}{N}\right) \approx \frac{3}{N}.$$
    Thus, the denominator is $\Theta(1/N)$, which shows that $T = \Omega(N^2 \ln N)$.
\end{proof}
\section{Proof of \Cref{thm:sq_lower_bound}}
\label{app:proof_sqlb}

In this section, we prove \Cref{thm:sq_lower_bound}, by generalizing the standard argument for the case $\Y = \{0,1\}$ (e.g. Theorem 2 in \cite{characterizing}). For convenience, we restate the theorem first:

\sqlb*

\begin{proof}
    We represent each concept $f \in \C$ as a vector-valued function $\vect{u}_f : \X \to \RR^{|\Y|}$ given by $\vect{u}_f(x) = \vect{e}_{f(x)} - \bar{\vect{e}}$ where $\vect{e}_y$ is the standard basis vector corresponding to label $y$ and $\bar{\vect{e}}$ is the vector with all entries equal to $1/|\Y|$. It is easy to verify that by the linearity of expectation, for any $f, g \in \C$ we have $\chi(f, g) = \langle\vect{u}_f, \vect{u}_g\rangle_D$ where the $L^2(\X, \RR^{|\Y|})$ inner product $\langle \cdot, \cdot\rangle_D$ is defined as
    $$\langle \vect{u}, \vect{v}  \rangle_D := \mathbb{E}_{x\sim D}\left[\langle \vect{u}(x), \vect{v}(x)\rangle\right].$$
    Assume that $f_1,\dots,f_d \in \C$ fulfill $|\chi(f_i, f_j)| \leq 1/d$ for all $i, j\in [d]$ with $i \neq j$. To discharge notation, we will write $\vect{u}_i$ instead of $\vect{u}_{f_i}$ to refer to the representation defined above. By the previous discussion, we have $|\langle\vect{u}_i, \vect{u}_j\rangle_D| \leq 1/d$ for all $i, j \in [d]$ with $i\neq j$. We present an adversarial answering strategy for the oracle that guarantees that the learner can eliminate only a small number of concepts after every answer when the ground-truth concept $f^*$ is one of the $f_i$'s.

    Let $h: \X \times \Y \to [-1,1]$ be an arbitrary query function used by the learner. The learner requests the value of the expectation:
    $$v^* := \E_{x\sim D}\left[h(x, f^*(x))\right]$$
    and the oracle returns an answer $\hat{v}$ such that $|v^* - \hat{v}| \leq \tau$. Using this information, the learner can eliminate any $f_i$ for which $|v_i - \hat{v}| > \tau$.
    We represent $h$ by the vector-valued function $\vect{H}: \X \to \RR^{|\Y|}$ where the $y$-th component of $\vect{H}(x)$ is equal to $h(x, y)$. Since $h(x, f_i(x)) = \langle \vect{H}(x), \vect{e}_{f_i(x)} \rangle$, we have:
    $$v_i = \E_{x\sim D}[\langle \vect{H}(x), \vect{e}_{f_i(x)} \rangle] = \langle \vect{H}, \vect{e}_{f_i} \rangle_D.$$
    Since $\vect{u}_i(x) = \vect{e}_{f_i(x)} - \bar{\vect{e}}$, we can rearrange this to write $\vect{e}_{f_i(x)} = \vect{u}_i(x) + \bar{\vect{e}}$. Substituting this into the expression for $v_i$:
    \begin{align*}
        v_i &= \langle \vect{H}, \vect{u}_i + \bar{\vect{e}} \rangle_D = \langle \vect{H}, \vect{u}_i \rangle_D + \langle \vect{H}, \bar{\vect{e}} \rangle_D.
    \end{align*}
    Consider the adversarial answering strategy where the oracle responds with $\hat{v} = \langle \vect{H}, \bar{\vect{e}} \rangle_D$. As such, the learner eliminates all concepts $f_i$ for which 
    $\left| \langle \vect{H}, \vect{u}_i \rangle_D \right| > \tau$.    
    Under this answering strategy, we can count how many candidates the learner can eliminate with each query. Define $A^+ = \{i \in [d] : \langle \vect{H}, \vect{u}_i\rangle_D \geq \tau\}$ and $A^- = \{i\in [d]: \langle\vect{H}, \vect{u}_i\rangle_D \leq -\tau\}$, and notice that the number of eliminated candidates is precisely $|A^+|+|A^-|$. To upper bound the cardinality of $A^+$, we apply the Cauchy-Schwartz inequality to obtain:
    \begin{equation}
    \label{eq:sum_cs}
    \left\langle \vect{H}, \sum_{i\in A^+}\vect{u}_i \right\rangle_D^2 \leq \|\vect{H}\|_D^2 \cdot \left\| \sum_{i\in A^+} \vect{u}_i\right\|_D^2
    \end{equation}
    By the definition of $A^+$, the left-hand side of \Cref{eq:sum_cs} is at least $(|A^+|\tau)^2$. On the other hand, since $|h(x,y)| \leq 1$ we have
    $$\|\vect{H}\|_D^2 = \mathbb{E}_{x\sim D}\left[\sum_{y \in \Y} h(x, y)^2\right] \leq |\Y|,$$
    and 
    $$\left\| \sum_{i\in A^+} \vect{u}_i\right\|_D^2 = \sum_{i,j \in A^+}\langle\vect{u}_i, \vect{u}_j \rangle_D = \sum_{i\in A^+} \|\vect{u}_i\|_D^2 + \sum_{i\neq j} \langle\vect{u}_i, \vect{u}_j\rangle_D \leq |A^+|(1-1/|\Y) + \frac{|A^+|^2}{d}$$
    where in the last inequality we used the fact that $\|\vect{u}_i\|_D^2 = \chi(f_i, f_i) = 1-1/|\Y|$. Chaining the inequalities we get 
    $$(|A^+|\tau)^2 \leq |\Y|\left(|A^+|(1-1/|\Y) + \frac{|A^+|^2}{d}\right).$$
    Dividing by $|A^+|$ and rearranging yields:
    $$|A^+| \leq \frac{d(|\Y| -1)}{d\tau^2 -|\Y|}.$$

    A similar procedure for $A^-$ shows that the number of eliminated concepts after the query is at most $\frac{2d(|\Y|-1)}{d\tau^2 - |\Y|}$ if the adversary returns $\langle \vect{H}, \bar{\vect{e}}\rangle_D$. Thus, the learner requires at least $\frac{(d-1)(d\tau^2 -|\Y|)}{2d(|\Y|-1)}$ queries to identify the ground-truth concept $f^*$.
\end{proof}

\end{document}